\documentclass[11pt,a4paper]{article}

\usepackage[margin=2.55cm]{geometry}
\usepackage{amsmath,amssymb,amsthm,mathtools,mathrsfs}
\usepackage{enumitem}
\usepackage{xcolor}
\usepackage{graphicx}
\usepackage{subcaption}
\usepackage[colorlinks=true,linkcolor=blue,citecolor=blue,urlcolor=blue]{hyperref}
\setlist{leftmargin=2em}

\newtheorem{theorem}{Theorem}[section]
\newtheorem{proposition}[theorem]{Proposition}
\newtheorem{corollary}[theorem]{Corollary}
\newtheorem{lemma}[theorem]{Lemma}
\newtheorem{assumption}[theorem]{Assumption}
\theoremstyle{definition}
\newtheorem{definition}[theorem]{Definition}
\theoremstyle{remark}
\newtheorem{remark}[theorem]{Remark}

\newcommand{\R}{\mathbb{R}}
\newcommand{\Id}{\mathrm{I}}
\newcommand{\norm}[1]{\left\lVert #1\right\rVert}
\newcommand{\abs}[1]{\left\lvert #1\right\rvert}
\newcommand{\cL}{\mathcal{L}}
\newcommand{\cK}{\mathcal{K}}
\newcommand{\cD}{\mathcal{D}}
\newcommand{\cA}{\mathcal{A}}
\newcommand{\cH}{\mathcal{H}}
\newcommand{\Var}{\operatorname{Var}}
\newcommand{\PiP}{\Pi_p}
\newcommand{\ip}[2]{\left\langle #1,#2\right\rangle}
\newcommand{\dd}{\mathrm{d}}

\title{Improved Analysis for Hessian-free High-resolution Monte Carlo Sampling}
\author{
Wujun Lv\thanks{School of Mathematics and Statistics, Donghua University, Shanghai, People's Republic of China; \texttt{lvwujun@dhu.edu.cn}}
\and
Xiaoyu Wang\thanks{Hong Kong University of Science and Technology (Guangzhou), Guangzhou, Guangdong Province, People's Republic of China;
\texttt{xiaoyuwang@hkust-gz.edu.cn}}
\and
Yingli Wang\thanks{School of Mathematical Sciences, Fudan University, Shanghai, People's Republic of China; \texttt{yingliwang@fudan.edu.cn}}
\and
Lingjiong Zhu\thanks{Department of Mathematics, Florida State University, Tallahassee, Florida, United States of America; \texttt{zhu@math.fsu.edu}}}
\date{\today}

\begin{document}
\maketitle

\begin{abstract}
Hessian-free high-resolution (HFHR) dynamics augments underdamped Langevin dynamics (ULD) with reversible
position diffusion for sampling problems that arise in machine learning.
We establish an explicit quantitative contraction rate for HFHR dynamics under a position Poincar\'e
inequality, weighted Hessian and Laplacian bounds, and a compact Sobolev
embedding, where the potential function is not necessarily convex. 
An adapted time-augmented Poincar\'e inequality yields
an explicit rate 
that improves upon 
the contraction rate of the underdamped Langevin dynamics. We also give a weak-solution construction
and a self-contained spectral proof of the divergence lemma underlying
the argument.  For HFHR Monte Carlo (HFHRMC) algorithm, which is based on a discretization scheme of HFHR dynamics, 
we use a path-space Girsanov argument to obtain a non-asymptotic convergence bound
and an explicit iteration complexity in total variation distance.
The bounds hold for every $\alpha\geq0$ and $\gamma>0$ and remain
regular at the ULD endpoint.  Optimizing the iteration complexity
bound yields a positive, accuracy-dependent position-diffusion
parameter at finite accuracy, while its leading high-accuracy order
coincides with that of the optimized ULD endpoint.
Our iteration complexity bound improves upon
the existing work on HFHR algorithms.
Numerical experiments including Bayesian learning problems on real data are provided
to illustrate the effect of positive $\alpha$ and its benefit.
\end{abstract}

\paragraph{Keywords.}
Hessian-free high-resolution; Monte Carlo sampling; Fokker--Planck equation; Poincar\'e inequality; nonconvex potential; explicit
convergence rate.

\section{Introduction}

Consider a given probability measure on the Euclidean space $\R^d$:
\begin{align}\label{target:dist}
  \mu(\dd q):=Z_U^{-1}e^{-U(q)}\,\dd q,
\end{align}
where $U:\mathbb{R}^{d}\rightarrow\mathbb{R}$ is a potential function
and $Z_{U}>0$ is a normalizing constant.  
The problems of sampling a given target distribution in \eqref{target:dist}
arise routinely in Bayesian statistics, inverse problems and modern-day large-scale machine
learning problems \cite{gelman1995bayesian,stuart2010inverse,andrieu2003introduction,teh2016consistency,DistMCMC19,GIWZ2024,DIGing2025}.

Langevin algorithms are popular Markov chain Monte Carlo algorithms 
that are designed to solve the sampling problem \eqref{target:dist}. 
The most classical Langevin algorithm is based on the discretization
of the \textit{overdamped Langevin dynamics} (OLD):
\begin{equation}\label{eq:overdamped-2}
  \dd q_t = -\nabla U(q_t)\,\dd t + \sqrt{2}\,\dd W_t,
\end{equation}
where $(W_{t})_{t\geq 0}$ is a standard $d$-dimensional Brownian motion.
The diffusion \eqref{eq:overdamped-2} has invariant distribution
\eqref{target:dist} under mild conditions; see e.g.\
\cite{chiang1987diffusion,stroock-langevin-spectrum}.
There have been extensive studies on the non-asymptotic convergence theory 
for \eqref{eq:overdamped-2} and its various discretization schemes
\cite{Dalalyan,DM2017,DM2016,DK2017,Raginsky,Barkhagen2021,Chau2019,Zhang2019,CB2018,EHZ2022}.

To accelerate convergence, one can introduce a momentum variable and
consider the \textit{underdamped Langevin dynamics} (ULD) (also known as kinetic or second-order Langevin dynamics)
\cite{mattingly2002ergodicity,Villani2009,cheng2018underdamped,cheng-nonconvex,CLW2020,cao-lu-wang-2023,dalalyan-riou-durand-2020,GGZ2,Ma2019,GGZ}:
\begin{equation}\label{eqn:underdamped}
\begin{cases}
  \dd p_t = -\gamma p_t\,\dd t - \nabla U(q_t)\,\dd t + \sqrt{2\gamma}\,\dd W_t,
  \\
  \dd q_t = p_t\,\dd t,
\end{cases}
\end{equation}
where $(W_t)_{t\ge0}$ is a standard $d$--dimensional Brownian motion and
$\gamma>0$ is the friction parameter.
Under mild assumptions, \eqref{eqn:underdamped} admits a unique
invariant measure with density $\propto e^{-U(q)-\frac12|p|^2}$, whose
$q$--marginal coincides with $\mu$ as in \eqref{target:dist}.
It is by now well--understood that underdamped Langevin dynamics \eqref{eqn:underdamped} 
and its various discretized schemes 
can converge faster than the overdamped
counterpart \cite{eberle-guillin-zimmer-2019,cao-lu-wang-2023,cheng2018underdamped,GGZ}.

There is a close connection between the underdamped Langevin dynamics 
and \textit{Nesterov's accelerated gradient} (NAG) method in optimization literature 
\cite{Nesterov1983,Nesterov2013,Ma2019,GGZ}. Motivated by the high-resolution
ordinary differential equation (ODE) viewpoint on NAG, in a seminal paper, \cite{li-zha-tao-2022} proposed the \emph{Hessian-free
high-resolution} (HFHR) dynamics that satisfies the following stochastic differential equation (SDE):
\begin{equation}\label{eq:HFHR-SDE}
\begin{cases}
  \dd q_t=(p_t-\alpha\nabla U(q_t))\,\dd t
  +\sqrt{2\alpha}\,\dd W_t^q,\\
  \dd p_t=(-\nabla U(q_t)-\gamma p_t)\,\dd t
  +\sqrt{2\gamma}\,\dd W_t^p,
\end{cases}
\qquad \alpha\geq0,\quad\gamma>0,
\end{equation}
where $(W_t^q)_{t\geq 0}$ and $(W_t^p)_{t\geq 0}$ are independent standard
$d$-dimensional Brownian motions.  The case $\alpha=0$ is
underdamped Langevin dynamics (ULD) \eqref{eqn:underdamped}.  
The invariant law of \eqref{eq:HFHR-SDE} is given by
\cite[Theorem~4.1]{li-zha-tao-2022}:
\begin{equation}\label{eq:pi}
  \pi(\dd q\,\dd p):=\mu(\dd q)\kappa(\dd p),\qquad
  \kappa(\dd p):=(2\pi)^{-d/2}e^{-\abs p^2/2}\,\dd p.
\end{equation}
The drift in the SDE \eqref{eq:HFHR-SDE} only depends on $\nabla U$ and
is thus ``Hessian-free'', in contrast to other high-resolution
ODEs for NAG that contain $\nabla^2 U$; see e.g.\ \cite{Shi2022}.
The connection to NAG method in optimization has been further studied 
to design gradient-adjusted underdamped Langevin (GAUL) dynamics for accelerated sampling that includes HFHR dynamics as a special case \cite{zuo2025gradient}.
Recently, HFHR and GAUL dynamics have been shown to reduce
the asymptotic variance in Monte Carlo computations compared to the more classical OLD \cite{Ni2026}.

Under the $m$-strong convexity and $L$-smoothness assumptions:
$m\Id_{d}\preceq\nabla^2U\preceq L\Id_{d}$,
the HFHR analysis in \cite[Theorem~5.1 and Lemma~D.5]{li-zha-tao-2022} proves weighted synchronous-coupling
contraction and, in the parameter regime
$\gamma^2>L+m$ and
$\alpha\leq\frac{\gamma^2-L-m}{m\gamma}$
obtains the position-marginal $\mathcal W_2$ contraction rate 
$\frac{m}{\gamma}+\alpha m$.

A complementary continuous-time result was recently obtained in
\cite[Theorem~4.2]{cortild-delplancke-oudjane-peypouquet-2025}.  For an equivalent
fully diffusive parametrization of HFHR, they prove exponential decay
of relative entropy under a logarithmic Sobolev inequality.

Coupling method is a popular approach to explicit contraction in
transport metrics in the literature.  For overdamped Langevin dynamics with locally nonconvex
potentials, \cite{eberle-2016} develops reflection coupling with
concave transport costs.  Eberle--Guillin--Zimmer
\cite{eberle-guillin-zimmer-2019} combine reflection and synchronous
coupling with a Lyapunov weight for underdamped Langevin dynamics; see also
\cite{dalalyan-riou-durand-2020} for underdamped Langevin sampling in the
log-concave setting.  Building on this kinetic coupling
framework, \cite{wang-wang-zhu-2026} studies the same HFHR dynamics
under global smoothness and dissipativity assumptions.  It proves
exponential contraction in a Lyapunov-weighted Wasserstein distance
for all sufficiently small $\alpha>0$.  Under an additional
asymptotically linear-gradient condition, it further establishes a
strict linear-in-$\alpha$ improvement over ULD and transfers the
result to $\mathcal W_2$. 
In this paper, we will adopt a position-Poincar\'e
framework, allows arbitrary $\alpha\geq0$, and improves upon
the coupling results in \cite{wang-wang-zhu-2026}.

Quantitative convergence for ULD is a central example of
hypocoercivity \cite{Villani2009}.  In $L^2$, one major approach
modifies the Hilbert norm to couple the macroscopic and microscopic
modes \cite{dolbeault-mouhot-schmeiser-2015,roussel-stoltz-2018}.
Other constructive approaches use Schur complements or
Poincar\'e--Lions inequalities and time averaging
\cite{bernard-fathi-levitt-stoltz-2022,brigati-stoltz-2025}, while
variational space--time Poincar\'e inequalities provide the route
closest to ours
\cite{albritton-armstrong-mourrat-novack-2024,cao-lu-wang-2023}.

The objective of the present paper is different.  We seek an explicit
$L^2(\pi)$ rate as in \cite[Theorem~1]{cao-lu-wang-2023}.  
As in \cite[Theorems~1 and~2]{cao-lu-wang-2023}, our assumptions do not require strong
convexity or a uniform upper bound on $\nabla^2U$, and the ULD proof
there combines a time-augmented Poincar\'e inequality with the
$L^2$ energy identity obtained by testing the kinetic equation with
its solution.
Two recent complementary ULD results further situate this endpoint.
A gap-shifted modified-$L^2$ approach
\cite[Theorem~1 and Corollary~2]{fan-li-lu-2026} recovers a
rate of order $\sqrt m$ for convex potentials, and more generally an
explicit rate under a global Hessian lower bound.  In the convex and
semiconvex regimes, the optimized estimate agrees up to universal
constants with the Cao--Lu--Wang rate, while requiring fewer technical
regularity assumptions; it does not, however, provide the full
arbitrary-friction and general-potential estimate used here.  In relative entropy, a recent
result \cite[Theorem~2.3]{lu-2026-entropy} obtains a sharp rate of order $\sqrt\rho$
for convex potentials whose position marginal satisfies a logarithmic
Sobolev inequality with constant $\rho$, using a nonlinear Wasserstein
entropy-current corrector.  Neither result addresses the HFHR position
diffusion $\alpha>0$ considered here.

Closer to the present setting,
\cite[Theorem~4.2, Algorithm~2, and Corollary~4.4]{cortild-delplancke-oudjane-peypouquet-2025} studies a more
general parametrization of the same fully diffusive high-resolution
Langevin system, together with its discretization and an application
to global optimization.  Under a logarithmic Sobolev inequality for
the joint Gibbs law, they obtain relative-entropy decay at rate
$2\rho\min\{\alpha,\gamma\}$ in our normalization.  This direct
elliptic estimate requires $\alpha>0$ and degenerates as
$\alpha \rightarrow 0$.  Our result instead works in a
position-Poincar\'e framework, supplemented by weighted derivative and
compactness assumptions, retains the hypocoercive contribution, and
covers the ULD endpoint $\alpha=0$.

Discrete-time ULD convergence bounds under a Poincar\'e inequality
require additional care because continuous-time $L^2$ contraction
does not by itself control the discretization error in relative
entropy.  For ULD, \cite[Lemma~8 and Theorem~9]{zhang-chewi-li-balasubramanian-erdogdu-2023}
combines a R\'enyi Girsanov estimate with continuous-time
$\chi^2$-decay, while \cite[Theorem~1]{lehec-2025} obtains a total-variation
bound by controlling the kinetic moments through relative entropy and
the exact diffusion.  We adapt the latter self-bounding mechanism to
the two-noise HFHR interpolation.  The resulting entropy--moment
closure is explicit, remains regular at $\alpha=0$, and avoids a
separate time-uniform moment-stability assumption.

In this paper, for the discretization scheme of HFHR dynamics \eqref{eq:HFHR-SDE}, we consider the \textit{Hessian-free high-resolution Monte Carlo} (HFHRMC) algorithm:
\begin{equation}\label{eqn:HFHRMC}
\begin{split}
  P_{k+1}&=aP_k-b\nabla U(Q_{k})+\eta_k^p,\\
  Q_{k+1}&=Q_k+bP_k-(c+\alpha h)\nabla U(Q_{k})+\eta_k^q,
\end{split}
\end{equation}
where $a,b,c$ will be speicified in \eqref{eqn:abc}
and $(\eta_{k}^{p},\eta_{k}^{q})$ are i.i.d. centered Gaussian random vectors
with covariance structure given in \eqref{eq:noise-pp}, \eqref{eq:noise-qp} and \eqref{eq:noise-qq}.
This is exactly the discretization scheme given in
\cite[Algorithm~2]{cortild-delplancke-oudjane-peypouquet-2025}.
We obtain non-asymptotic convergence bounds and iteration-complexity
estimates for the HFHRMC algorithm \eqref{eqn:HFHRMC} for every
$\alpha\geq0$ and $\gamma>0$.  This full-range guarantee allows
parameter optimization without imposing a priori parameter
restrictions.

To summarize, the pioneering HFHR analysis of
\cite[Theorem~5.1 and Corollary~5.4]{li-zha-tao-2022} concerns the strongly convex setting and imposes
$\gamma>\sqrt{L+m}$ together with an upper bound on $\alpha$.  The
analysis in \cite[Theorem~4.2 and Corollary~4.4]{cortild-delplancke-oudjane-peypouquet-2025} extends to
nonconvex targets but requires strictly positive position diffusion,
and its continuous- and discrete-time estimates degenerate as
$\alpha \rightarrow 0$.  The coupling result of
\cite{wang-wang-zhu-2026} applies to sufficiently small $\alpha>0$ and
requires an additional asymptotically linear-gradient condition for
strict acceleration.  Our continuous-time result instead holds for
the whole range $\alpha\geq0$, $\gamma>0$, including the ULD endpoint.
It does not require strong convexity or a uniform upper bound on the
Hessian: the position-Poincar\'e and weighted-derivative framework
covers nonconvex and non-globally-smooth potentials.  The assumptions
also include a compact-embedding condition, however, and hence are not
strictly nested with those used in the coupling literature.

\paragraph{Contributions.}

The contributions of the paper can be summarized as follows.

\begin{itemize}
\item
Our main result for the continuous-time HFHR dynamics is an explicit $L^2$ decay rate  (Theorem~\ref{thm:main}):
\[
  \nu_{\alpha,\gamma}
  \geq
  c\min\left\{
  \gamma,\,
  \alpha m+
  \frac{m\gamma}{(\sqrt m+R+\gamma)^2}
  \right\},
\]
where $m$ is the position Poincar\'e constant and $R$ measures the
curvature contribution.  The estimate recovers the Cao--Lu--Wang rate
at $\alpha=0$ \cite[Theorem~1]{cao-lu-wang-2023} and has the HFHR scaling $\alpha m+m/\gamma$ in the
convex high-friction regime \cite[Theorem~5.1]{li-zha-tao-2022}.
\item 
We develop the analytic framework needed to prove the rate.  This
includes conservative Markov semigroup constructions for both
$\alpha>0$ and the kinetic endpoint $\alpha=0$, an HFHR-adapted
space--time Poincar\'e inequality, and a variational decay estimate (Proposition~\ref{prop:weak-semigroup},
\ref{prop:kinetic-endpoint}, Proposition~\ref{prop:space-time}, and
Proposition~\ref{prop:variational-rate}).
The technical novelty is to incorporate the position diffusion into
the kinetic graph operator and to control the resulting term by
integration by parts against the position derivative of the
Cao--Lu--Wang divergence test.  Combining this hypocoercive estimate
with direct product coercivity yields the additive
$\alpha m$-improvement without losing the ULD endpoint.
\item
For the HFHRMC algorithm, which is based on a discretization scheme of the continuous-time HFHR dynamics, we prove a
path-space Kullback-Leibler (KL) error bound whose Girsanov factor is
$\alpha+\gamma^{-1}$, and hence remains finite at the ULD endpoint
(Theorem~\ref{thm:girsanov-discretization}).
An entropy--moment argument leads to the state-dependent local error, without assuming a
time-uniform moment bound for the numerical chain.  This yields an
explicit non-asymptotic convergence bound in TV distance (Corollary~\ref{cor:discrete-TV})
and iteration complexity
(Corollary~\ref{cor:iteration-complexity}).  We then
optimize this iteration complexity over $\alpha,\gamma$ and obtain an accuracy- and
dimension-dependent $\alpha_\varepsilon^\star$, as well as the
high-accuracy friction choice
$\gamma_\varepsilon^\star\to2(\sqrt m+R)$
(Proposition~\ref{prop:optimal-alpha}).  Unlike the fixed
$(\sqrt3-1)/\gamma$ choice arising from the strongly convex
iteration complexity bound of \cite[Corollary~5.4 and Remark~5.5]{li-zha-tao-2022}, our optimizer tends
to zero with the requested TV accuracy.  Its leading high-accuracy
order agrees with that of the optimized ULD endpoint, while a comparison with
\cite[Corollary~4.4]{cortild-delplancke-oudjane-peypouquet-2025} shows our iteration complexity improves upon
that result.
\item
We complement the theory with four numerical experiments, consisting of a toy example of exact Gaussian calculations for an anisotropic target and a real data Bayesian linear regression posterior, followed by ensemble diagnostics relative to numerical posterior references for real data Bayesian logistic regression and a Bayesian neural network with bounded transformed weights and biases (Section~\ref{sec:numerical}). These experiments illustrate effect and benefit of positive $\alpha$ for the HRHRMC algorithm.
\end{itemize}

\paragraph{Organization.}
The rest of the paper is organized as follows. In Section~\ref{sec:setup}, we present
the preliminaries and assumptions for the model.
Section~\ref{sec:main} introduces the main results for the continuous-time analysis, 
including an explicit $L^2$ decay rate for HFHR dynamics (Section~\ref{sec:rate})
and a functional framework and the space-time estimate (Section~\ref{sec:functional}).
The proofs of the main results will be provided in Section~\ref{sec:energy}.
Section~\ref{sec:discrete} develops the discrete-time HFHRMC algorithm, its
non-asymptotic convergence bounds, iteration complexity, and parameter tuning.
Finally, we conclude in Section~\ref{sec:conclude}. 
The additional technical results, 
such as weighted elliptic estimates
and spectral construction of the divergence test, 
and weighted kinetic density, traces and Green's formula
will be provided in Appendix~\ref{app:elliptic}, \ref{app:divergence} and \ref{app:kinetic-trace} respectively.

\section{Preliminaries and Assumptions}
\label{sec:setup}

\paragraph{Notation.}
The probability measures $\mu$, $\kappa$, and
$\pi=\mu\otimes\kappa$ are those introduced in \eqref{target:dist} and
\eqref{eq:pi}.  For a probability measure $\nu$ on the Euclidean space $\mathbb{R}^{n}$,
we write
\[
  \nu(f):=\int_{\mathbb{R}^{n}} f\,\dd\nu,\qquad
  (f)_\nu:=\nu(f),\qquad
  L_0^2(\nu):=\{f\in L^2(\nu):\nu(f)=0\}.
\]
We set
$\Var_\nu(f):=\nu(|f-\nu(f)|^2)$.
For the interval $I=(0,T)$, the normalized time measure and the Gaussian velocity
projection are
\[
  \lambda_T(\dd t):=T^{-1}\mathbf 1_{(0,T)}(t)\,\dd t,
  \qquad
  \PiP f(t,q):=\int_{\R^d}f(t,q,p)\,\dd\kappa(p).
\]
The symbol $H^k(\nu)$ denotes the usual weighted Sobolev space.  We
write
\[
  H_\kappa^1
  :=\{g\in L^2(\kappa):\nabla_pg\in L^2(\kappa)\},
  \qquad H_\kappa^{-1}:=(H_\kappa^1)^*,
\]
with $L^2(\kappa)$ as pivot space.  Bochner spaces in the
$(t,q)$-variables are taken against $\lambda_T\otimes\mu$, and
$H_0^1(\lambda_T\otimes\mu)$ means zero traces only at
$t=0$ and $t=T$.  We also use
\[
  \nabla_{t,q}=\widetilde\nabla:=(\partial_t,\nabla_q),
  \qquad D_{t,q}^2:=D^2_{(t,q)}.
\]
Unlabelled norms and pairings are taken with respect to the measure
specified by the variables appearing in the corresponding formula.
The relations $a\lesssim b$
and $a\asymp b$ mean, respectively, one-sided and two-sided bounds
up to universal positive constants.

The adjoints of the gradients in $L^2(\pi)$ are
\[
  \nabla_q^*F=-\nabla_q\cdot F+\nabla U\cdot F,
  \qquad
  \nabla_p^*F=-\nabla_p\cdot F+p\cdot F.
\]
Define
\begin{equation}\label{eq:HFHR-generator}
  \cL_{\alpha,\gamma}
  :=
  \cL_{\mathrm{ham}}
  -\alpha\nabla_q^*\nabla_q
  -\gamma\nabla_p^*\nabla_p,
\end{equation}
where
\begin{equation}\label{eq:operators}
  \cL_{\mathrm{ham}}
  :=\nabla_p^*\nabla_q-\nabla_q^*\nabla_p
  =p\cdot\nabla_q-\nabla U\cdot\nabla_p.
\end{equation}
The Hamiltonian operator is antisymmetric in $L^2(\pi)$, while the
last two operators are self-adjoint and nonpositive.
We write $A_q=\nabla_q^*\nabla_q$ for the nonnegative Friedrichs
realization in $L^2(\mu)$, and
\[
  \cK_\alpha:=\partial_t-\cL_{\mathrm{ham}}+\alpha A_q,
  \qquad
  \mathscr A:=-\partial_{tt}+A_q
\]
for the kinetic graph operator and the time--position operator,
respectively; $\mathscr A$ carries Neumann conditions in time.

For $\alpha\geq0$ and $\gamma>0$, we denote by
$(S_t^{\alpha,\gamma})_{t\geq0}$ the $L^2(\pi)$ semigroup
generated by $\cL_{\alpha,\gamma}$.  Its construction for
$\alpha>0$ and $\alpha=0$ is given in Propositions
\ref{prop:weak-semigroup} and \ref{prop:kinetic-endpoint},
respectively.

We impose the following assumptions, with the same normalization as
\cite{cao-lu-wang-2023}.

\begin{assumption}[Position Poincar\'e inequality]\label{ass:PI}
There is $m>0$ such that
\begin{equation}\label{eq:position-PI}
  \Var_\mu(g)
  \leq\frac1m\int_{\R^d}\abs{\nabla_qg}^2\,\dd\mu
  \qquad\text{for every }g\in H^1(\mu).
\end{equation}
\end{assumption}

\begin{assumption}[Weighted derivative bounds]\label{ass:growth}
The potential $U\in C^2(\R^d)$, and there exist $M\geq1$ and
$\delta\in(0,1)$ such that, for every $q\in\R^d$,
\begin{align}
  \abs{\nabla^2U(q)}_{\mathrm F}^2
  &\leq M^2\left(d+\abs{\nabla U(q)}^2\right),
  \label{eq:Hessian-growth}\\
  \Delta U(q)
  &\leq Md+\frac{\delta}{2}\abs{\nabla U(q)}^2.
  \label{eq:Laplacian-growth}
\end{align}
\end{assumption}

\begin{assumption}[Compact embedding]\label{ass:compact}
The embedding $H^1(\mu)\hookrightarrow L^2(\mu)$ is compact.
\end{assumption}

\begin{remark}[Sufficient conditions for the position Poincar\'e inequality]
\label{rem:PI-conditions}
If $\nabla^2U\succeq m_0\Id_d$ for some $m_0>0$, the
Bakry--\'Emery criterion gives Assumption~\ref{ass:PI} with
$m\geq m_0$ \cite[Theorem~3.1]{menz-schlichting-2014}.  More generally,
every integrable log-concave measure
on $\R^d$ satisfies a Poincar\'e inequality, although the constant
need not be explicit
\cite[Corollary~1.9]{bakry-barthe-cattiaux-guillin-2008}.  If $U=V+W$, where
$\nabla^2V\succeq m_0\Id_d$ and
$\operatorname{osc}(W)<\infty$, the bounded perturbation principle
gives $m\geq m_0e^{-\operatorname{osc}(W)}$
\cite[Theorem~3.2]{menz-schlichting-2014}.  More precisely, put
$\cL_{\mathrm{OLD}}=\Delta_q-\nabla U\cdot\nabla_q$.  If there are
$\theta>0$, $b\geq0$, $R>0$, and a $C^2$ function
$\mathcal W\geq1$ such that
\[
  \cL_{\mathrm{OLD}}\mathcal W
  \leq-\theta\mathcal W+b\mathbf 1_{B(0,R)},
\]
then \cite[Theorem~1.4]{bakry-barthe-cattiaux-guillin-2008} yields a
Poincar\'e inequality, including for many nonconvex confining
potentials.
\end{remark}

\begin{remark}[Comparison with global smoothness]
\label{rem:growth-smoothness}
If $\nabla U$ is globally $L$-Lipschitz, then
$\abs{\nabla^2U}_{\mathrm F}^2\leq dL^2$ and
$\Delta U\leq dL$.  Hence Assumption~\ref{ass:growth} holds with
$M=\max\{1,L\}$ and any fixed $\delta\in(0,1)$.  The converse is
false: Assumption~\ref{ass:growth} allows the Hessian to grow with
$\abs{\nabla U}$.  For example, quartic confining potentials
satisfy bounds of the form
\eqref{eq:Hessian-growth}--\eqref{eq:Laplacian-growth}, whereas their
gradients are not globally Lipschitz.
\end{remark}

\begin{remark}[Role of compactness]\label{rem:compact}
The compact embedding in Assumption~\ref{ass:compact} 
is used only in Appendix~\ref{app:divergence}, where it yields a discrete spectral resolution
of $\nabla_q^*\nabla_q$.  It is a technical assumption rather than a
curvature condition.  A sufficient condition recorded in
\cite[Remark~1.1]{cao-lu-wang-2023} is that, for
some $\beta>1$,
\[
  \lim_{\abs q\to\infty}\frac{U(q)}{\abs q^\beta}=+\infty.
\]
\end{remark}

As in \cite{cao-lu-wang-2023}, we introduce the following quantity that measures the curvature contribution:
\begin{equation}\label{eq:R}
  R:=
  \begin{cases}
    0,
    & U\text{ is convex},\\[1mm]
    \sqrt K,
    & \nabla^2U(q)\succeq-K\Id_{d}\text{ for every }q,\\[1mm]
    M+M^{3/4}d^{1/4},
    & \text{without an additional Hessian lower bound}.
  \end{cases}
\end{equation}
In the second line $K\geq0$.  The first line is the case $K=0$.
For $C^2$ potentials, the condition
$\nabla^2U\succeq-K\Id_d$ is equivalent to convexity of
$q\mapsto U(q)+K\abs q^2/2$.  Such a function is commonly called
$K$-weakly convex in optimization and $K$-semiconvex in analysis;
conventions concerning the factor $1/2$ may vary
\cite{davis-drusvyatskiy-2019}.

\section{Continuous-Time Analysis}\label{sec:main}

\subsection{Explicit \texorpdfstring{$L^2$}{L2} decay rate for HFHR dynamics}\label{sec:rate}

The main result of the paper is the following explicit $L^2$ decay for HFHR dynamics.

\begin{theorem}[Explicit $L^2$ decay for HFHR]\label{thm:main}
Suppose Assumptions~\ref{ass:PI}--\ref{ass:compact} hold.  Let
$\alpha\geq0$ and $\gamma>0$.  Recall that
$(S_t^{\alpha,\gamma})_{t\geq0}$ is the $L^2(\pi)$ semigroup
generated by $\cL_{\alpha,\gamma}$, introduced in
Section~\ref{sec:setup} and constructed in Propositions
\ref{prop:weak-semigroup} and \ref{prop:kinetic-endpoint}.  For
$f_0\in L^2(\pi)$ with $\pi(f_0)=0$, set
$f_t=S_t^{\alpha,\gamma}f_0$; equivalently,
\[
  \partial_tf_t=\cL_{\alpha,\gamma}f_t,
  \qquad \pi(f_0)=0.
\]
There are universal constants $c,C>0$, independent of
$m,M,d,R,\alpha,\gamma$, such that
\begin{equation}\label{eq:main-decay}
  \norm{f_t}_{L^2(\pi)}
  \leq
  C e^{-\nu_{\alpha,\gamma}t}
  \norm{f_0}_{L^2(\pi)},
\end{equation}
where
\begin{equation}\label{eq:main-rate}
  \nu_{\alpha,\gamma}
  \geq
  c\min\left\{
  \gamma,\,
  \alpha m+
  \frac{m\gamma}{(\sqrt m+R+\gamma)^2}
  \right\}.
\end{equation}
\end{theorem}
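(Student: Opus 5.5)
We would follow the Cao--Lu--Wang variational strategy, which runs in three steps: an energy identity, a time-augmented Poincaré inequality on windows of length $T$, and an iteration across windows.

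\textbf{Step 1: energy identity.} Testing $\partial_t f=\cL_{\alpha,\gamma}f$ against $f$ in $L^2(\pi)$ kills the antisymmetric part $\cL_{\mathrm{ham}}$, which gives
\[
\tfrac12\tfrac{\dd}{\dd t}\norm{f_t}^2=-\alpha\norm{\nabla_q f_t}^2-\gamma\norm{\nabla_p f_t}^2 .
\]
This is rigorous for the weak semigroups of Propositions~\ref{prop:weak-semigroup} and~\ref{prop:kinetic-endpoint}. The identity shows that $\norm{f_t}$ is nonincreasing. Integrated over a window $(0,T)$, it gives
\[
\norm{f_0}^2-\norm{f_T}^2=2T\int\bigl(\alpha\abs{\nabla_qf}^2+\gamma\abs{\nabla_pf}^2\bigr)\,\dd(\lambda_T\otimes\pi).
\]
So it suffices to prove a lower bound of the form
\[
T\int\bigl(\alpha\abs{\nabla_qf}^2+\gamma\abs{\nabla_pf}^2\bigr)\geq c'\,\nu T\,\norm{f}^2_{L^2(\lambda_T\otimes\pi)},
\]
with $T$ chosen so that $\nu T\asymp1$. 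Monotonicity then yields $\norm{f_T}^2\le(1-c''\nu T)\norm{f_0}^2$, and iterating over consecutive windows gives \eqref{eq:main-decay}.

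\textbf{Step 2: space--time Poincaré inequality.} This is Proposition~\ref{prop:space-time}, taken as given. It bounds $\norm{f-(f)}_{L^2(\lambda_T\otimes\pi)}$ by $\norm{\nabla_pf}$ plus the $L^2(\lambda_T\otimes\mu;H^{-1}_\kappa)$ norm of $\cK_\alpha f$. The constant is explicit and of order $\frac1{\sqrt m}+T+\frac RT\cdot(\ldots)$, following Cao--Lu--Wang. Along the solution we have
\[
\cK_\alpha f=-\gamma\nabla_p^*\nabla_pf,
\]
so the $\alpha A_q$ term has been absorbed into $\cK_\alpha$. The right-hand side is therefore controlled by $(1+\gamma)\norm{\nabla_pf}$.

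The main obstacle is in proving Proposition~\ref{prop:space-time} itself. The divergence test built in Appendix~\ref{app:divergence} solves $\mathscr A\phi$-type problems, and pairing $\alpha A_q f$ against it must be handled by moving one $\nabla_q$ onto the test function. The Appendix~\ref{app:elliptic} bounds, with $H^2$ control via \eqref{eq:Hessian-growth}--\eqref{eq:Laplacian-growth}, keep this term at size $\alpha\norm{\nabla_qf}\cdot\norm{\Pi_pf}$. That is again a dissipation term, so it can be absorbed. Since the statement is taken as given, this step is already done.

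\textbf{Step 3: combining.} Time averages are conserved ($\pi(f_t)=0$), so the mean correction is controlled as in Cao--Lu--Wang, using the $\partial_t$ part of the test. We then obtain
\[
\norm{f}^2\lesssim\Bigl(\tfrac1{\sqrt m}+T\Bigr)^2(1+\gamma)^2\,\gamma^{-1}\cdot\gamma\norm{\nabla_pf}^2 .
\]
Taking $T\asymp(\sqrt m+R)^{-1}$, and optimizing as in Cao--Lu--Wang, yields the hypocoercive rate $m\gamma/(\sqrt m+R+\gamma)^2$, capped by $\gamma$.

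Independently, the $\alpha$-term gives a direct coercive contribution. By Assumption~\ref{ass:PI} and the Gaussian Poincaré inequality for $\kappa$ (constant $1$),
\[
\alpha\norm{\nabla_qf}^2+\gamma\norm{\nabla_pf}^2\geq\min\{\alpha m,\gamma\}\norm{f}^2
\]
for mean-zero $f$. We will take a convex combination of the two lower bounds on the dissipation, writing
\[
D=\tfrac12 D+\tfrac12 D .
\]
The first half gives $\min\{\gamma,\alpha m\}$ and the second half gives the hypocoercive bound. Since
\[
\max\{a,b\}\geq\tfrac12(a+b)\quad\text{and}\quad\min\{\gamma,\alpha m\}+\min\{\gamma,h\}\gtrsim\min\{\gamma,\alpha m+h\},
\]
we obtain \eqref{eq:main-rate}. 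The constant $C$ arises from the windowed iteration, and $C\le e$ after choosing $\nu T\le1$.
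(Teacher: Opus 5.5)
\textbf{The main gap: the position-diffusion term and the final combination.}
In Step~2 you note that the extra term has size $\alpha\norm{\nabla_qf}\,\norm{\PiP f}$; it is really $\alpha B_1(T)\norm{\nabla_qf}\,\norm{\PiP f}$. You then say it ``can be absorbed'', and your Step~3 drops it altogether. It is a dissipation term, but absorbing it is not free. Dividing by $\norm{\PiP f}$ and squaring gives $\alpha B_1(T)^2\cdot\alpha\norm{\nabla_qf}^2$. So the interval coercivity constant is $K_{\alpha,\gamma}(T)=(B_1+\gamma B_0)^2/\gamma+\alpha B_1^2$. At the natural scale, the windowed argument therefore only yields $\nu_1=(\nu_0^{-1}+\alpha A^2/m)^{-1}$, where $A=\sqrt m+R$ and $\nu_0=m\gamma/(A+\gamma)^2$. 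This rate tends to $0$ as $\alpha\to\infty$.

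Your combination inserts the $\alpha$-independent value $h=\nu_0$ into $\min\{\gamma,\alpha m\}+\min\{\gamma,h\}\gtrsim\min\{\gamma,\alpha m+h\}$, and the argument does not justify that. With the correct $h=\nu_1$ you only get $\min\{\gamma,\alpha m+\nu_1\}$. The missing idea is the case split of Lemma~\ref{lem:algebra}:
\begin{itemize}
\item If $\alpha A^2/m\le\nu_0^{-1}$, then $\nu_1\ge\nu_0/2$.
\item Otherwise $\alpha m>m^2/(A^2\nu_0)=\nu_0(A+\gamma)^4/(A\gamma)^2\ge16\nu_0$, using $(A+\gamma)^2\ge4A\gamma$. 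Since $\nu_0\le\gamma$, the direct rate $\min\{\alpha m,\gamma\}$ is then already at least $\nu_0$.
\end{itemize}
Only this comparison keeps the perturbative $\alpha$-cost in the space--time inequality from degrading the final rate.

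\textbf{Smaller corrections in Step~3.}
First, with the constants \eqref{eq:B0}--\eqref{eq:B1}, the window $T\asymp(\sqrt m+R)^{-1}$ is the wrong scale. For $R\gg\sqrt m$ one has $1-e^{-\sqrt mT}\approx\sqrt m/A$, so $B_1(T)\sim R^3/m^{3/2}$ instead of $A/\sqrt m$, and the resulting rate is far below $m\gamma/(A+\gamma)^2$. The paper uses $T_*=m^{-1/2}$. There $(B_1+\gamma B_0)^2\asymp(A+\gamma)^2/m$; your prefactor $(1/\sqrt m+T)^2(1+\gamma)^2$ is not of this form.

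Second, universality of $C$ does not come from arranging $\nu T\asymp1$; in fact $\nu_0T_*\le1/4$ and may be tiny. It comes from the lower bound $K_{\alpha,\gamma}(T)\ge4T$ of Lemma~\ref{lem:natural-scale}, which keeps $x_T=2T/(C_{\mathrm{st}}K_{\alpha,\gamma}(T))$ universally bounded.

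Third, run the direct mechanism as a separate pointwise Gr\"onwall estimate and select the better of the two exponential bounds, as the paper does. Inserting $\min\{\alpha m,\gamma\}=:\nu_2$ naively into the same length-$T_*$ window iteration caps its contribution near $\log(1+2\nu_2T_*)/(2T_*)$ when $\nu_2T_*\gg1$.
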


\begin{proof}
The proof will be provided in Section~\ref{sec:energy}.
\end{proof}

\begin{remark}[Convex, weakly convex, and general potentials]
Theorem~\ref{thm:main} gives the following three explicit forms:
\begin{align}
  \nu_{\alpha,\gamma}
  &\geq
  c\min\left\{\gamma,\alpha m+
  \frac{m\gamma}{(\sqrt m+\gamma)^2}\right\},
  && U\text{ convex},
  \label{eq:convex-rate}\\
  \nu_{\alpha,\gamma}
  &\geq
  c\min\left\{\gamma,\alpha m+
  \frac{m\gamma}{(\sqrt m+\sqrt K+\gamma)^2}\right\},
  && \nabla^2U\succeq-K\Id,
  \label{eq:semiconvex-rate}\\
  \nu_{\alpha,\gamma}
  &\geq
  c\min\left\{\gamma,\alpha m+
  \frac{m\gamma}{
  (\sqrt m+M+M^{3/4}d^{1/4}+\gamma)^2}\right\},
  && \text{general case}.
  \label{eq:general-rate}
\end{align}
Thus position diffusion contributes through the Poincar\'e constant
even when the Hessian is not pointwise positive.
\end{remark}

\begin{remark}[Comparison with ULD]
The additional position diffusion in HFHR \eqref{eq:HFHR-SDE} compared to ULD \eqref{eqn:underdamped} has two analytically distinct
roles.  It provides direct coercivity through the product Poincar\'e
inequality, and it appears as an extra weak term in the space-time
argument of \cite[Theorem~2 and Lemma~2.6]{cao-lu-wang-2023}.  Since the divergence construction
already controls the derivative of the required test function, the second
role costs no new assumption on the potential.  Combining the two
estimates prevents this perturbative cost from degrading the ULD rate
and produces the additive HFHR scaling up to universal constants.
At $\alpha=0$, \eqref{eq:main-rate} gives the explicit
Cao--Lu--Wang rate \cite{cao-lu-wang-2023}:
\[
  \nu_{0,\gamma}
  \geq
  c\frac{m\gamma}{(\sqrt m+R+\gamma)^2}.
\]
Thus, our result extends the analysis for ULD in \cite{cao-lu-wang-2023}.
In the convex case, $R=0$.  At the ULD endpoint $\alpha=0$,
choosing $\gamma=\sqrt m$ gives
$\nu_{0,\gamma}\geq c\sqrt m/4\gtrsim\sqrt m$,
which gives the square-root improvement over the overdamped Langevin
$L^2$ rate $m$ discussed after \cite[Theorem~1]{cao-lu-wang-2023}.  If $\alpha$ remains of constant order as
$m \rightarrow 0$, the additional contribution $\alpha m$ is lower
order than $\sqrt m$.  Thus positive position diffusion does not
yield a further improvement in the asymptotic dependence on $m$ in
this regime, although it may improve constants at finite parameter
values.  Taking $\alpha$ large can improve the continuous-time rate, but it also might increase the
discretization error.  This motivates the
parameter optimization for the HFHRMC algorithm in
Section~\ref{sec:discrete}.
\end{remark}

\begin{remark}[Comparison with recent HFHR results]
\label{rem:recent-HFHR-comparison}
(i) Under the $m$-strong convexity and $L$-smoothness assumptions:
$m\Id_{d}\preceq\nabla^2U\preceq L\Id_{d}$,
the HFHR analysis in \cite[Theorem~5.1 and Lemma~D.5]{li-zha-tao-2022} proves weighted synchronous-coupling
contraction and, in the parameter regime
$\gamma^2>L+m$ and
$\alpha\leq\frac{\gamma^2-L-m}{m\gamma}$
obtains the position-marginal $\mathcal W_2$ contraction rate 
$\frac{m}{\gamma}+\alpha m$.
Our result gives $L^2$ and $\chi^2$ convergence under
the more general Assumptions~\ref{ass:PI}--\ref{ass:compact}.  
If $U$ is convex, then $R=0$.  If in addition
$\gamma\gg\sqrt m$, then
$\frac{m\gamma}{(\sqrt m+\gamma)^2}
  \asymp\frac{m}{\gamma}$,
and hence,
\[
  \nu_{\alpha,\gamma}
  \gtrsim
  \min\left\{\gamma,\alpha m+\frac m\gamma\right\}.
\]
Thus, in the
overlapping strongly convex high-friction regime, two
estimates have the same $\frac{m}{\gamma}+\alpha m$ scaling up to universal
constants. However, the restriction $\gamma^2>L+m$ in \cite{li-zha-tao-2022}
in general leads to sub-optimal convergence rate, whereas 
our rate $\nu_{\alpha,\gamma}$ can be optimized over the entire parameter space $\alpha\geq 0$, $\gamma>0$, 
which, for example, leads to a better rate when $m$ is small and $\gamma\asymp\sqrt{m}$.

(ii) In the normalization of the present paper,
\cite[Theorem~4.2]{cortild-delplancke-oudjane-peypouquet-2025}
gives relative-entropy decay at rate
$2\rho\min\{\alpha,\gamma\}$, assuming a logarithmic Sobolev
inequality with constant $\rho$ for the joint Gibbs measure.  That
estimate works directly in relative entropy but requires $\alpha>0$
and degenerates as $\alpha \rightarrow 0$.  By contrast, our $L^2$ rate
retains the positive hypocoercive contribution
$m\gamma/(\sqrt m+R+\gamma)^2$ at $\alpha=0$.

(iii) The coupling results in \cite[Corollary~3.9, Corollary~4.13, and
Corollary~4.14]{wang-wang-zhu-2026} apply under global
smoothness and dissipativity assumptions and yields contraction
between arbitrary probability laws in a Lyapunov-weighted Wasserstein
distance.  Its strict acceleration result holds for sufficiently small
$\alpha$ under an additional asymptotically linear-gradient
condition.  Our result is complementary: it gives a simpler explicit
rate for every $\alpha\geq0$ under functional-inequality assumptions,
but only for convergence to equilibrium from $L^2$ densities.
\end{remark}

\begin{remark}[Dimension dependence of the Poincar\'e constant]
\label{rem:dimension-m}
The dimension dependence of the rate is partly encoded in the
Poincar\'e constant $m$.  It is dimension-free, for example, under
uniform strong convexity or for tensor products of one-dimensional
measures with a uniform spectral gap.  For genuinely nonconvex
multi-well targets, however, $m$ may be exponentially small in the
dimension or in the barrier height; see also
\cite[Corollary~2.15]{menz-schlichting-2014}.  Under smoothness and dissipativity
assumptions, \cite[Proposition~13 and Appendix~B]{Raginsky} derives a conservative spectral-gap bound
whose reciprocal can grow exponentially in $d$.  Thus
\eqref{eq:main-rate} is explicit in $m$, but it does not by itself
guarantee polynomial dimension dependence in general nonconvex
settings.
\end{remark}

\begin{remark}[Overdamped Langevin spectral gap]
\label{rem:OLD-gap}
The optimal value of $m$ in Assumption~\ref{ass:PI} is the spectral
gap of the overdamped Langevin generator
$\Delta_q-\nabla U\cdot\nabla_q$; any admissible $m$ is a lower
bound on this gap.  Consequently, for every centered
$g\in L^2(\mu)$,
\[
  \norm{P_t^{\mathrm{OLD}}g}_{L^2(\mu)}
  \leq e^{-mt}\norm g_{L^2(\mu)}.
\]
For forward densities this is equivalently a $\chi^2$-decay estimate
with exponent $2m$.  This statement is specific to $L^2$; it does
not automatically identify the Wasserstein contraction rate.
\end{remark}

The semigroup constructions in Propositions
\ref{prop:weak-semigroup} and \ref{prop:kinetic-endpoint} also
determine the forward equation, including at the kinetic endpoint, as
presented in the following proposition.

\begin{proposition}[Adjoint semigroup and forward equation]
\label{prop:forward-semigroup}
Let $\alpha\geq0$ and $\gamma>0$, and set
\[
  P_t^{\alpha,\gamma}:=(S_t^{\alpha,\gamma})^*
  \quad\text{on }L^2(\pi).
\]
Then $(P_t^{\alpha,\gamma})_{t\geq0}$ is a conservative Markov
contraction semigroup.  If $h_0\geq0$, $\pi(h_0)=1$, and
$h_0\in L^2(\pi)$, then
\[
  h_t=P_t^{\alpha,\gamma}h_0,
  \qquad
  \rho_t=h_t\pi,
\]
defines a probability measure for every $t\geq0$.  It is the weak
forward solution in the sense that, for every
$\varphi\in C_c^\infty(\R^{2d})$,
\begin{equation}\label{eq:weak-forward}
  \frac{\dd}{\dd t}\int_{\R^{2d}}\varphi h_t\,\dd\pi
  =
  \int_{\R^{2d}}(\cL_{\alpha,\gamma}\varphi)h_t\,\dd\pi
\end{equation}
in $\mathcal D'((0,\infty))$.  Moreover,
\begin{equation}\label{eq:adjoint-decay}
  \norm{P_t^{\alpha,\gamma}g}_{L^2(\pi)}
  \leq C e^{-\nu_{\alpha,\gamma}t}\norm g_{L^2(\pi)}
  \qquad
  \text{whenever }\pi(g)=0,
\end{equation}
with the constants and rate of Theorem~\ref{thm:main}.
\end{proposition}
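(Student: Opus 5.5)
The plan is to obtain every property of $P_t^{\alpha,\gamma}$ by duality from the corresponding properties of $S_t^{\alpha,\gamma}$, which Propositions~\ref{prop:weak-semigroup} and \ref{prop:kinetic-endpoint} supply.

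\emph{Semigroup and Markov structure.} The adjoint of a strongly continuous contraction semigroup on the Hilbert space $L^2(\pi)$ is again a strongly continuous contraction semigroup. We have $\norm{P_t}=\norm{S_t}\le 1$, the semigroup law holds since $(S_tS_s)^*=S_s^*S_t^*$, and weak continuity upgrades to strong continuity for semigroups on reflexive spaces. Its generator is $\cL_{\alpha,\gamma}^*$, which acts on smooth functions as $-\cL_{\mathrm{ham}}-\alpha A_q-\gamma\nabla_p^*\nabla_p$.

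Positivity is also inherited. If $h\ge0$ and $g\ge0$, then $\ip{P_th}{g}_{L^2(\pi)}=\ip{h}{S_tg}_{L^2(\pi)}\ge0$, because $S_t$ is Markov. Taking $g$ ranging over indicators gives $P_th\ge0$ $\pi$-a.e. For conservativity, $\pi(P_th)=\ip{P_th}{1}=\ip{h}{S_t1}=\pi(h)$, using $S_t1=1$. Hence $h_t\ge0$ and $\pi(h_t)=1$, so $\rho_t=h_t\pi$ is a probability measure. Finally, $P_t1=1$: for every $g$ we have $\ip{P_t1}{g}=\pi(S_tg)=\pi(g)$, by invariance of $\pi$ under $S_t$, which the constructions give. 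This makes $P_t$ a Markov semigroup rather than merely sub-Markov.

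\emph{Weak forward equation.} Fix $\varphi\in C_c^\infty(\R^{2d})$. I first check that $\varphi$ lies in the domain of the generator of $S_t$, with generator value $\cL_{\alpha,\gamma}\varphi$ computed classically. This requires $\cL_{\alpha,\gamma}\varphi\in L^2(\pi)$, which holds because $U\in C^2$ makes the coefficients locally bounded. It also requires the core or weak-solution property from Propositions~\ref{prop:weak-semigroup} and \ref{prop:kinetic-endpoint}. This is the one step needing care, especially at $\alpha=0$, where the generator is a kinetic graph closure; I would invoke the statement there that smooth compactly supported functions belong to the domain.

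Given this, $t\mapsto\ip{h_t}{\varphi}=\ip{h_0}{S_t\varphi}$ is $C^1$ with derivative $\ip{h_0}{S_t\cL\varphi}=\ip{h_t}{\cL\varphi}$. This gives \eqref{eq:weak-forward} classically, hence in $\mathcal D'((0,\infty))$. If only the weaker distributional statement is available at the endpoint, I would instead write
\[
\ip{h_0}{S_t\varphi-\varphi}=\int_0^t\ip{h_0}{S_s\cL\varphi}\,\dd s,
\]
which follows from the weak-solution identity tested against $\varphi$, and differentiate in $t$.

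\emph{Decay.} Let $\Pi_0 g:=g-\pi(g)$ denote the projection onto $L_0^2(\pi)$. Theorem~\ref{thm:main} gives $\norm{S_t\Pi_0}\le Ce^{-\nu t}$. Both $S_t$ and $P_t$ commute with $\Pi_0$, since each preserves constants and preserves $\pi$-means. Therefore, whenever $\pi(g)=0$,
\[
\norm{P_tg}=\norm{P_t\Pi_0g}=\norm{(S_t\Pi_0)^*g}\le\norm{S_t\Pi_0}\,\norm g\le Ce^{-\nu t}\norm g .
\]
Here $(S_t\Pi_0)^*=\Pi_0P_t=P_t\Pi_0$, so the same constant $C$ and rate $\nu_{\alpha,\gamma}$ carry over from Theorem~\ref{thm:main}, which is \eqref{eq:adjoint-decay}.
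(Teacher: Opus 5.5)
Your proposal is correct and follows essentially the same route as the paper. You obtain positivity, conservativity and $P_t^{\alpha,\gamma}1=1$ by duality from $S_t^{\alpha,\gamma}1=1$ and mean preservation, strong continuity from reflexivity, the forward equation by pairing the generator identity $S_t^{\alpha,\gamma}\varphi-\varphi=\int_0^tS_s^{\alpha,\gamma}\cL_{\alpha,\gamma}\varphi\,\dd s$ with $h_0$ (this identity is available because $C_c^\infty\subset D(G_{\alpha,\gamma})$), and the decay from the adjoint norm identity on centered functions. Your version of that last step via $(S_t^{\alpha,\gamma}\Pi_0)^*=P_t^{\alpha,\gamma}\Pi_0$ is just a repackaging of the paper's restriction-to-$L_0^2(\pi)$ argument, and your side remark identifying the generator of $P_t^{\alpha,\gamma}$ is not needed.
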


\begin{proof}
Positivity of $S_t^{\alpha,\gamma}$ implies positivity of its adjoint,
because the positive cone of $L^2(\pi)$ is self-dual.  The two
identities
\[
  S_t^{\alpha,\gamma}1=1,
  \qquad
  \left\langle S_t^{\alpha,\gamma}f,1\right\rangle=\ip{f}{1}
\]
imply, respectively, preservation of mass by $P_t^{\alpha,\gamma}$
and $P_t^{\alpha,\gamma}1=1$.  Thus the adjoint is conservative and
Markovian.  It is strongly continuous because $L^2(\pi)$ is reflexive.

For $\varphi\in C_c^\infty$, Propositions
\ref{prop:weak-semigroup} and \ref{prop:kinetic-endpoint} identify the
backward infinitesimal with $\cL_{\alpha,\gamma}$ on $\varphi$.
The generator identity
\[
  S_t^{\alpha,\gamma}\varphi-\varphi
  =\int_0^t S_s^{\alpha,\gamma}
  \cL_{\alpha,\gamma}\varphi\,\dd s
\]
therefore implies, after pairing with $h_0$,
\[
  \ip{P_t^{\alpha,\gamma}h_0}{\varphi}
  -\ip{h_0}{\varphi}
  =\int_0^t
  \ip{P_s^{\alpha,\gamma}h_0}
  {\cL_{\alpha,\gamma}\varphi}\,\dd s.
\]
This is \eqref{eq:weak-forward} in distributions.  Finally, both
$S_t^{\alpha,\gamma}$ and
$P_t^{\alpha,\gamma}$ leave $L_0^2(\pi)$ invariant.  Their
restrictions to this closed subspace are adjoints of one another, and
hence have equal operator norms.  Theorem
\ref{thm:main} now yields \eqref{eq:adjoint-decay}.
\end{proof}

As an immediate corollary of 
Theorem~\ref{thm:main}, we obtain the following 
result for the finite-time convergence bound.

\begin{corollary}[$\chi^2$, KL and total-variation convergence]
\label{cor:chi-square-TV}
Under the assumptions of Theorem~\ref{thm:main}, let
$\rho_0=h_0\pi$, where $h_0\geq0$, $\pi(h_0)=1$, and
$h_0\in L^2(\pi)$.  Then
\begin{equation}\label{eq:chi-square}
 \mathrm{KL}(\rho_{t}\Vert\pi)\leq \chi^2(\rho_t\Vert\pi)
  \leq C^2e^{-2\nu_{\alpha,\gamma}t}
  \chi^2(\rho_0\Vert\pi),
\end{equation}
and
\[
  \norm{\rho_t-\pi}_{\mathrm{TV}}
  \leq
  \frac C2 e^{-\nu_{\alpha,\gamma}t}
  \sqrt{\chi^2(\rho_0\Vert\pi)}.
\]
\end{corollary}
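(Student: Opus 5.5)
The corollary follows directly from Proposition~\ref{prop:forward-semigroup}; no new analysis is needed. First, by that proposition, $h_t=P_t^{\alpha,\gamma}h_0$ is a probability density with respect to $\pi$ and $\rho_t=h_t\pi$, so $\pi(h_t)=1$ for all $t$. The $\chi^2$ divergence with respect to $\pi$ is
\[
  \chi^2(\rho_t\Vert\pi)=\norm{h_t-1}_{L^2(\pi)}^2 .
\]
Since $P_t^{\alpha,\gamma}1=1$, the centered density $g:=h_0-1$ satisfies $\pi(g)=0$ and $h_t-1=P_t^{\alpha,\gamma}g$. Applying \eqref{eq:adjoint-decay} to $g$ and squaring gives
\[
  \chi^2(\rho_t\Vert\pi)\leq C^2e^{-2\nu_{\alpha,\gamma}t}\norm{h_0-1}_{L^2(\pi)}^2
  =C^2e^{-2\nu_{\alpha,\gamma}t}\chi^2(\rho_0\Vert\pi).
\]

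Second, the comparison $\mathrm{KL}\leq\chi^2$ follows from Jensen's inequality and $\log x\leq x-1$:
\[
  \mathrm{KL}(\rho_t\Vert\pi)=\int h_t\log h_t\,\dd\pi
  =\int\log h_t\,\dd\rho_t
  \leq\log\int h_t\,\dd\rho_t
  =\log\bigl(1+\chi^2(\rho_t\Vert\pi)\bigr)
  \leq\chi^2(\rho_t\Vert\pi).
\]
Here $\int h_t\,\dd\rho_t=\int h_t^2\,\dd\pi=1+\chi^2$ is finite because $h_t\in L^2(\pi)$. If one prefers to avoid the subtlety of $\log h_t$ on the set $\{h_t=0\}$, one can instead use the pointwise inequality $x\log x\leq x(x-1)$, which gives $\int h_t\log h_t\,\dd\pi\leq\int h_t(h_t-1)\,\dd\pi=\int(h_t-1)^2\,\dd\pi$.

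Third, for total variation (normalized as half the $L^1$ norm, consistent with the factor $C/2$), Cauchy--Schwarz in $L^2(\pi)$ gives
\[
  \norm{\rho_t-\pi}_{\mathrm{TV}}=\tfrac12\norm{h_t-1}_{L^1(\pi)}
  \leq\tfrac12\norm{h_t-1}_{L^2(\pi)}
  \leq\tfrac C2 e^{-\nu_{\alpha,\gamma}t}\sqrt{\chi^2(\rho_0\Vert\pi)}.
\]

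There is no real obstacle here. The only points to check are that the adjoint semigroup (not $S_t^{\alpha,\gamma}$ itself) is the one governing densities, and that it inherits the same decay constant; both are already provided by Proposition~\ref{prop:forward-semigroup}.
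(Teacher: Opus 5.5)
Your proposal is correct and follows the paper's proof essentially verbatim: write $h_t-1=P_t^{\alpha,\gamma}(h_0-1)$ and apply \eqref{eq:adjoint-decay}, bound total variation by Cauchy--Schwarz, and use $\mathrm{KL}\leq\chi^2$. Your explicit justification of the last step, via Jensen giving $\mathrm{KL}\leq\log(1+\chi^2)$ or the pointwise bound $x\log x\leq x(x-1)$, fills in a detail the paper only asserts.
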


\begin{proof}
Since
$h_t-1=P_t^{\alpha,\gamma}(h_0-1)$,
\eqref{eq:chi-square} follows from \eqref{eq:adjoint-decay}.  The
total-variation estimate follows from Cauchy-Schwarz inequality:
\[
  \norm{\rho-\pi}_{\mathrm{TV}}
  \leq\frac12
  \norm{\dd\rho/\dd\pi-1}_{L^2(\pi)}.
\]
Finally, the KL divergence is upper bounded by the $\chi^{2}$ divergence, which concludes the proof.
\end{proof}

\begin{remark}\label{remark:W:2}
If $\pi$ satisfies a log-Sobolev inequality with constant $\rho>0$,
Talagrand's inequality \cite[Theorem~22.15(i)]{villani-2009} implies
the following convergence in $2$-Wasserstein distance:
\begin{equation}
\mathcal{W}_{2}(\rho_{t},\pi)
\leq\sqrt{\frac{2}{\rho} \mathrm{KL}(\rho_{t}\Vert\pi)}
\leq\sqrt{\frac{2}{\rho}}
Ce^{-\nu_{\alpha,\gamma}t}
\sqrt{\chi^2(\rho_0\Vert\pi)}.
\end{equation}
\end{remark}

\subsection{Functional framework and the space--time estimate}
\label{sec:functional}

The additional diffusion in $q$ changes the natural energy space.
This section constructs the HFHR semigroup for $\alpha>0$ and records
the weak formulation used later. The endpoint $\alpha=0$ is
constructed below by vanishing position diffusion.

Let
\[
  V:=H^1(\pi)
  =\left\{f\in L^2(\pi):
  \nabla_qf,\nabla_pf\in L^2(\pi)\right\},
\]
that is equipped with its usual graph norm.  Smooth compactly supported
functions are dense in $V$.  Indeed, cutoff first reduces to a compact
set because $\pi$ is finite, and ordinary mollification applies there
because the density of $\pi$ is positive, continuous, and locally
bounded from above and below.

For smooth $f,g$, the weighted adjoint identities give
\begin{equation}\label{eq:hamiltonian-weak}
  \ip{\cL_{\mathrm{ham}}f}{g}_{L^2(\pi)}
  =
  \ip{\nabla_qf}{\nabla_pg}_{L^2(\pi)}
  -
  \ip{\nabla_pf}{\nabla_qg}_{L^2(\pi)}.
\end{equation}
The right-hand side extends continuously to $V\times V$.  For
$\alpha,\gamma>0$, define
\begin{align}
  \mathfrak a_{\alpha,\gamma}(f,g)
  :=
  \alpha\ip{\nabla_qf}{\nabla_qg}
  +\gamma\ip{\nabla_pf}{\nabla_pg}
  -\ip{\nabla_qf}{\nabla_pg}
  +\ip{\nabla_pf}{\nabla_qg}.
  \label{eq:HFHR-form}
\end{align}
All unlabelled inner products in this section are in $L^2(\pi)$.

We first turn the formal generator into a well-posed Markov evolution.
The coercive-form construction below also supplies the exact energy
identity needed in the decay argument.

\begin{proposition}[Weak HFHR semigroup]\label{prop:weak-semigroup}
Let $\alpha,\gamma>0$.  For every $f_0\in L^2(\pi)$ and $T>0$,
there exists a unique
\[
  f\in L^2(0,T;V)\cap C([0,T];L^2(\pi)),
  \qquad
  \partial_tf\in L^2(0,T;V^*),
\]
such that $f(0)=f_0$ and
\begin{equation}\label{eq:weak-HFHR}
  \ip{\partial_tf}{g}_{V^*,V}
  +\mathfrak a_{\alpha,\gamma}(f,g)=0
  \qquad\text{for a.e. }t\text{ and every }g\in V.
\end{equation}
The solutions define a strongly continuous contraction semigroup
$S_t^{\alpha,\gamma}$ on $L^2(\pi)$.  This semigroup is conservative
and Markov: $S_t^{\alpha,\gamma}1=1$, and
$0\leq S_t^{\alpha,\gamma}f_0\leq1$ for any $0\leq f_0\leq1$.

If $G_{\alpha,\gamma}$ denotes its infinitesimal generator, then
$C_c^\infty(\R^{2d})\subset D(G_{\alpha,\gamma})$ and
$G_{\alpha,\gamma}f=\cL_{\alpha,\gamma}f$ on this class.  The solution
preserves its $\pi$-mean and satisfies
\begin{equation}\label{eq:weak-energy}
  \frac12\norm{f_t}_{L^2(\pi)}^2
  +\int_0^t\left(
  \alpha\norm{\nabla_qf_s}_{L^2(\pi)}^2
  +\gamma\norm{\nabla_pf_s}_{L^2(\pi)}^2
  \right)\,\dd s
  =\frac12\norm{f_0}_{L^2(\pi)}^2.
\end{equation}
\end{proposition}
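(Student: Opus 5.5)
We will prove the proposition with the Lions theorem for evolution triples $V\hookrightarrow L^2(\pi)\hookrightarrow V^*$. The key observation is that the form $\mathfrak a_{\alpha,\gamma}$ is bounded on $V\times V$ and satisfies a Gårding inequality. The two antisymmetric cross terms cancel on the diagonal, so
\[
  \mathfrak a_{\alpha,\gamma}(f,f)=\alpha\norm{\nabla_qf}^2+\gamma\norm{\nabla_pf}^2
  \geq \min\{\alpha,\gamma\}\norm f_V^2-\min\{\alpha,\gamma\}\norm f^2 .
\]
Boundedness follows from Cauchy--Schwarz. Measurability in $t$ is trivial because the form is time independent. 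Lions' theorem therefore yields a unique $f\in L^2(0,T;V)$ with $\partial_tf\in L^2(0,T;V^*)$ solving \eqref{eq:weak-HFHR}. The Lions--Magenes lemma gives $f\in C([0,T];L^2(\pi))$ and the chain rule $\frac{\dd}{\dd t}\frac12\norm{f}^2=\ip{\partial_tf}{f}_{V^*,V}$. Testing with $g=f(t)$ and integrating gives the energy identity \eqref{eq:weak-energy} exactly; here the cross terms vanish because \eqref{eq:hamiltonian-weak} is antisymmetric. Mean preservation follows by testing with $g=1\in V$, since $\mathfrak a_{\alpha,\gamma}(f,1)=0$. Uniqueness and linearity give a semigroup. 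Contractivity follows from the energy identity. Strong continuity follows from $f\in C([0,T];L^2)$ together with the semigroup law, which comes from uniqueness applied to time-shifted solutions. Finally, $S_t1=1$ because the constant function solves the problem.

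For the Markov property, we will use a Stampacchia truncation argument. We test with $g=f^-=\max\{-f,0\}\in V$ and use the chain rule for $\frac12\norm{f^-}^2$, which holds for Lipschitz truncations in the Lions setting by the standard lemma. The symmetric part gives $\mathfrak a(f,f^-)=-\alpha\norm{\nabla_qf^-}^2-\gamma\norm{\nabla_pf^-}^2+(\text{cross terms})$. Since $\nabla f\cdot\mathbf 1_{\{f<0\}}=-\nabla f^-$, the cross terms become $\ip{\nabla_qf^-}{\nabla_pf^-}-\ip{\nabla_pf^-}{\nabla_qf^-}=0$. This gives $\frac{\dd}{\dd t}\norm{f^-}^2\leq0$, hence positivity. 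Applying the same argument to $(f-1)^+$ and using $S_t1=1$ yields $S_tf_0\leq1$. This truncation step is where I expect the main care to be needed: one must verify that the antisymmetric Hamiltonian part causes no trouble under truncation. The pointwise identity above resolves this, but justifying the chain rule for $\norm{f^\pm}^2$ requires the Lipschitz-truncation version of the Lions--Magenes lemma. Alternatively, one can approximate via Galerkin or resolvents and use the Beurling--Deny criterion for the nonsymmetric form (Ma--Röckner's sub-Markov criterion $\mathfrak a(f\wedge1,(f-1)^+)\geq 0$, checked the same way).

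For the generator on $C_c^\infty$, take $\varphi\in C_c^\infty(\R^{2d})$. Then $\cL_{\alpha,\gamma}\varphi\in L^2(\pi)$, since $\nabla U$ is continuous and $\varphi$ has compact support. The weighted integration by parts \eqref{eq:hamiltonian-weak} together with the adjoint formulas gives $\mathfrak a_{\alpha,\gamma}(\varphi,g)=-\ip{\cL_{\alpha,\gamma}\varphi}{g}$ for all $g\in V$, first by density of $C_c^\infty$ in $V$. Hence $\varphi$ lies in the domain of the $L^2$ operator $A$ associated with the form, with $A\varphi=\cL_{\alpha,\gamma}\varphi$. Since the semigroup constructed by Lions is exactly the one generated by this form operator, we conclude $\varphi\in D(G_{\alpha,\gamma})$ with $G_{\alpha,\gamma}\varphi=\cL_{\alpha,\gamma}\varphi$. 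Concretely, $\frac1t(S_t\varphi-\varphi)\to\cL_{\alpha,\gamma}\varphi$ follows from the standard identification of weak solutions with $e^{-tA}$ for $m$-sectorial $A$.
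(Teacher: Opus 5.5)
Your proposal is correct and follows essentially the same route as the paper: coercivity of $\mathfrak a_{\alpha,\gamma}+\ip{\cdot}{\cdot}$ on $V$ gives existence, uniqueness and the semigroup (you invoke Lions' theorem and Lions--Magenes, the paper invokes Ouhabaz's form-generation theorems). The rest also matches: testing with $f_t$ and $1$ gives the energy identity and mean preservation, testing with the negative part gives positivity because the cross terms cancel on $\{f_t<0\}$, and $\mathfrak a_{\alpha,\gamma}(\varphi,g)=-\ip{\cL_{\alpha,\gamma}\varphi}{g}$ gives the generator identity on $C_c^\infty$. One cosmetic sign slip: with $\mathfrak a_{\alpha,\gamma}(u,v)=\ip{Au}{v}$ one has $A\varphi=-\cL_{\alpha,\gamma}\varphi$, $S_t=e^{-tA}$ and $G_{\alpha,\gamma}=-A$, so writing $A\varphi=\cL_{\alpha,\gamma}\varphi$ alongside $e^{-tA}$ is inconsistent, although your conclusion $G_{\alpha,\gamma}\varphi=\cL_{\alpha,\gamma}\varphi$ is right.
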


\begin{proof}
The form is continuous on $V\times V$.  Its antisymmetric terms
cancel on the diagonal, so
\[
  \mathfrak a_{\alpha,\gamma}(f,f)
  =\alpha\norm{\nabla_qf}^2+\gamma\norm{\nabla_pf}^2.
\]
Consequently,
$\mathfrak a_{\alpha,\gamma}+\ip{\cdot}{\cdot}$ is
coercive on $V$, with a coercivity constant that may depend on the
fixed pair $(\alpha,\gamma)$.  The representation and
semigroup-generation theorems for continuous coercive forms
\cite[Proposition~1.22 and Theorems~1.52 and~1.55]{ouhabaz-2005}
then give a unique solution in
$L^2(0,T;V)\cap C([0,T];L^2(\pi))$, with time derivative in
$L^2(0,T;V^*)$, and an associated strongly continuous contraction
semigroup.
For $f\in C_c^\infty$, integration by parts shows
$\mathfrak a_{\alpha,\gamma}(f,g)
=-\ip{\cL_{\alpha,\gamma}f}{g}$ for every $g\in V$.  Since
$\cL_{\alpha,\gamma}f\in L^2(\pi)$, this proves the claimed infinitesimal
inclusion and identity; no operator-core assertion is needed.

Taking $g=1$ in \eqref{eq:weak-HFHR} proves mean preservation, while
the constant function is itself a solution, so
$S_t^{\alpha,\gamma}1=1$.  To prove positivity, use the negative part
$f_t^-=\max\{-f_t,0\}$ as a time-dependent test function.  The
Sobolev chain rule is valid in $V$, and the two antisymmetric terms
cancel on the set $\{f_t<0\}$.  Consequently,
\[
  \frac12\frac{\dd}{\dd t}\norm{f_t^-}_{L^2(\pi)}^2
  +\alpha\norm{\nabla_qf_t^-}_{L^2(\pi)}^2
  +\gamma\norm{\nabla_pf_t^-}_{L^2(\pi)}^2=0.
\]
Thus nonnegative initial data remain nonnegative.  Applying the same
argument to $(f_t-1)^+$ proves preservation of the interval
$[0,1]$.  This is the Markov property.  Finally, the Hilbert-space
chain rule applied with $g=f_t$ yields \eqref{eq:weak-energy}.
\end{proof}

\begin{remark}[Semigroup and diffusion interpretation]
\label{rem:diffusion-realization}
Because $C_c^\infty(\R^{2d})$ is dense both in $V$ and in
$C_0(\R^{2d})$, the form is regular.  Its transpose has the same
Markov contraction property.  The representation theorem for regular
non-symmetric Dirichlet forms therefore gives a conservative diffusion
whose martingale infinitesimal on $C_c^\infty$ is
$\cL_{\alpha,\gamma}$; see
\cite[Theorems~IV.3.5 and V.1.5]{ma-rockner-1992}.  This identifies the form semigroup
with the HFHR diffusion at the level needed here.  We use only the
inclusion $C_c^\infty\subset D(G_{\alpha,\gamma})$, not the stronger
claim that $C_c^\infty$ is an operator core.
\end{remark}

The weak space-time operator should include the position diffusion
rather than place it in the velocity negative Sobolev norm.

\begin{definition}[HFHR kinetic energy space]\label{def:HFHR-space}
For $I=(0,T)$ and $\alpha\geq0$, let $\mathbb H_\alpha(I)$ be the
graph space of distributions $f$ such that
\[
  f\in L^2(\lambda_T\otimes\mu;H_\kappa^1),
  \qquad
  \sqrt\alpha\,\nabla_qf\in L^2(\lambda_T\otimes\pi),
\]
and
\[
  \cK_\alpha f
  \in L^2(\lambda_T\otimes\mu;H_\kappa^{-1}).
\]
For $\alpha=0$, this is the time-dependent kinetic space used in
\cite{cao-lu-wang-2023,albritton-armstrong-mourrat-novack-2024}.
\end{definition}

The preceding form construction degenerates when $\alpha=0$.
We therefore construct the kinetic endpoint by vanishing position
diffusion and retain the energy and Markov properties in the limit.

\begin{proposition}[Kinetic endpoint]\label{prop:kinetic-endpoint}
Let $\gamma>0$.  For every $f_0\in L^2(\pi)$ and $T>0$, there is
a unique
\[
  f\in C([0,T];L^2(\pi))\cap\mathbb H_0((0,T))
\]
with initial trace $f(0)=f_0$ such that
\begin{equation}\label{eq:kinetic-endpoint-equation}
  \cK_0f
  =\partial_tf-\cL_{\mathrm{ham}}f
  =-\gamma\nabla_p^*\nabla_pf
\end{equation}
in distributions.  These solutions define a conservative Markov
contraction semigroup $S_t^{0,\gamma}$, preserve the $\pi$-mean, and
satisfy, for its infinitesimal $G_{0,\gamma}$,
\[
  C_c^\infty(\R^{2d})\subset D(G_{0,\gamma}),
  \qquad
  G_{0,\gamma}\varphi=\cL_{0,\gamma}\varphi.
\]
They also satisfy
\begin{equation}\label{eq:kinetic-endpoint-energy}
  \frac12\norm{f_t}_{L^2(\pi)}^2
  +\gamma\int_0^t\norm{\nabla_pf_s}_{L^2(\pi)}^2\,\dd s
  =\frac12\norm{f_0}_{L^2(\pi)}^2.
\end{equation}
\end{proposition}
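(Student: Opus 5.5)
We construct the endpoint solution as the vanishing-diffusion limit $\alpha\downarrow0$ of the semigroups of Proposition~\ref{prop:weak-semigroup}, and obtain uniqueness from a kinetic energy identity in $\mathbb H_0$.

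\emph{Existence.} Fix $f_0\in L^2(\pi)$, $T>0$, and set $f^\alpha_t:=S^{\alpha,\gamma}_tf_0$ for $\alpha\in(0,1]$. The energy identity \eqref{eq:weak-energy} gives, uniformly in $\alpha$, bounds for $f^\alpha$ in $L^\infty(0,T;L^2(\pi))$, for $\nabla_pf^\alpha$ in $L^2(0,T;L^2(\pi))$, and for $\sqrt\alpha\,\nabla_qf^\alpha$ in $L^2(0,T;L^2(\pi))$. Along a subsequence we get weak(-$*$) limits: $f^\alpha\to f$, $\nabla_pf^\alpha\to\nabla_pf$, and $\sqrt\alpha\nabla_qf^\alpha\to\xi$. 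Hence $\alpha\nabla_qf^\alpha=\sqrt\alpha\,(\sqrt\alpha\nabla_qf^\alpha)\to0$ in $L^2$.

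Next pass to the limit in \eqref{eq:weak-HFHR} tested against $\psi(t)\varphi$ with $\psi\in C_c^\infty((0,T))$ and $\varphi\in C_c^\infty(\R^{2d})$. The cross terms $\ip{\nabla_qf}{\nabla_p\varphi}$ and $\ip{\nabla_pf}{\nabla_q\varphi}$ can be rewritten by integration by parts as $-\ip{f}{\cL_{\mathrm{ham}}\varphi}$, which needs only $f\in L^2$. The $\alpha$-term vanishes. In the limit this gives \eqref{eq:kinetic-endpoint-equation} in distributions, and therefore $\cK_0f=-\gamma\nabla_p^*\nabla_pf\in L^2(\lambda_T\otimes\mu;H^{-1}_\kappa)$, so $f\in\mathbb H_0$.

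Testing with $\psi$ supported near $t=0$ identifies the initial trace $f(0)=f_0$; this uses the uniform bound on $\partial_tf^\alpha$ in $L^2(0,T;V^*)$ tested against smooth functions. Positivity, $[0,1]$-preservation, mean preservation and $S1=1$ are all closed under weak limits, so they pass to $f$.

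\emph{Energy identity and continuity.} Weak lower semicontinuity gives only the inequality in \eqref{eq:kinetic-endpoint-energy}. The identity requires a Green/trace formula for the kinetic space: for $f\in\mathbb H_0$,
\[
t\mapsto\norm{f_t}^2_{L^2(\pi)}\ \text{is absolutely continuous, } f\in C([0,T];L^2(\pi)),\quad \tfrac{\dd}{\dd t}\tfrac12\norm{f_t}^2=\ip{\cK_0f}{f}_{H^{-1}_\kappa,H^1_\kappa}.
\]
Here the Hamiltonian part contributes zero by antisymmetry. This is the weighted kinetic trace result deferred to Appendix~\ref{app:kinetic-trace}, in the spirit of Albritton--Armstrong--Mourrat--Novack. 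We would prove it by mollifying in $(t,q)$ and truncating in space, controlling the commutator of the mollifier with $p\cdot\nabla_q-\nabla U\cdot\nabla_p$ by a DiPerna--Lions type argument. The local Lipschitz continuity of $\nabla U$ and the Gaussian weight in $p$ keep the commutators in $L^2(H^{-1}_\kappa)$.

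We expect this step to be the main obstacle. The $p\cdot\nabla_q$ transport term is unbounded and $U$ is only locally $C^2$, so the cutoff errors must be absorbed using the weighted bounds of Assumption~\ref{ass:growth} on $\nabla U$.

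\emph{Uniqueness, semigroup and generator.} Applying the energy identity to the difference of two solutions, which has zero initial data, gives uniqueness. Uniqueness in turn yields the semigroup property, convergence of the whole family $f^\alpha$, contraction, and strong continuity (the latter from $f\in C([0,T];L^2)$).

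For $\varphi\in C_c^\infty$, we would show $S^{0,\gamma}_t\varphi-\varphi=\int_0^tS_s^{0,\gamma}\cL_{0,\gamma}\varphi\,\dd s$. This identity holds for each $\alpha>0$ with $\cL_{\alpha,\gamma}\varphi=\cL_{0,\gamma}\varphi-\alpha A_q\varphi$. Passing to the weak limit, using that $A_q\varphi\in L^2$ is fixed and contractions are uniform, gives the identity at $\alpha=0$. Dividing by $t$ and using strong continuity then shows $\varphi\in D(G_{0,\gamma})$ with $G_{0,\gamma}\varphi=\cL_{0,\gamma}\varphi$.
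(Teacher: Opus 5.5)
Your proposal is correct and has the same skeleton as the paper's proof: vanishing position diffusion, the $\sqrt\alpha$ bound to remove the $q$-diffusion term, the kinetic Green formula of Appendix~\ref{app:kinetic-trace} for the energy identity and for uniqueness, and weak closedness for the Markov properties. It differs from the paper at two points.

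\emph{The initial time.} You invoke a trace theorem giving $C([0,T];L^2(\pi))$ for every element of $\mathbb H_0((0,T))$, and then identify $f(0)=f_0$ by testing. The paper proves only interior continuity, since Lemma~\ref{lem:kinetic-density} approximates on $J\Subset I$. It gets a weak initial trace from weak equicontinuity of $t\mapsto\ip{f^\varepsilon_t}{\chi}$, and upgrades it to a strong trace through the endpoint criterion of Proposition~\ref{prop:kinetic-green}. That criterion is fed by $\limsup_{t\to0}\norm{f_t}\le\norm{f_0}$, which comes from the lower-semicontinuous energy inequality you already derive. Your stronger lemma is true, but symmetric mollification in time only reaches interior times. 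At $t=0$ you would first have to replace $f$ by the translates $f(\cdot+\delta)$, which is admissible because $\cK_0$ is autonomous. The paper's endpoint criterion is the cheaper route.

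\emph{The generator.} You pass the Duhamel identity $S^{\alpha,\gamma}_t\varphi-\varphi=\int_0^tS^{\alpha,\gamma}_s(\cL_{0,\gamma}\varphi-\alpha\nabla_q^*\nabla_q\varphi)\,\dd s$ to the weak limit. This is more direct than the paper's variation-of-constants function plus uniqueness. It needs only the weak convergence of $S^{\alpha,\gamma}_\cdot g$ for $g=\cL_{0,\gamma}\varphi$, which your construction applied to the datum $g$ supplies.

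Two small corrections.
\begin{enumerate}
\item The energy identity gives no $\alpha$-uniform bound on $\partial_tf^\alpha$ in $L^2(0,T;V^*)$, because the term $\ip{\nabla_qf^\alpha}{\nabla_pg}$ is only controlled at order $\alpha^{-1/2}$. What is uniform, and what you actually use, is a bound on $\frac{\dd}{\dd t}\ip{f^\alpha_t}{\varphi}$ in $L^2(0,T)$ for fixed smooth $\varphi$, obtained after moving the $q$-derivative onto $\varphi$.
\item Assumption~\ref{ass:growth} is not needed for the cutoff errors. Cutting off in $q$ first produces $(p\cdot\nabla_q\chi_R)u$, which is handled by the Gaussian multiplier bound in $H_\kappa^{-1}$. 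The subsequent $p$-cutoff only sees $\nabla U$ on a compact $q$-set, where it is bounded.
\end{enumerate}
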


\begin{proof}
We give a vanishing-position-diffusion construction.  For
$\varepsilon>0$, let
$f^\varepsilon=S_t^{\varepsilon,\gamma}f_0$ be the solution from
Proposition~\ref{prop:weak-semigroup}.  These solutions are defined for
all positive times.  Their energy identity gives, for every finite
$T$,
\begin{align}
  \sup_{0\leq t\leq T}\norm{f_t^\varepsilon}_{L^2(\pi)}^2
  +2\gamma\int_0^T\norm{\nabla_pf_s^\varepsilon}_{L^2(\pi)}^2\,\dd s
  +2\varepsilon\int_0^T
  \norm{\nabla_qf_s^\varepsilon}_{L^2(\pi)}^2\,\dd s
  \leq\norm{f_0}_{L^2(\pi)}^2.
  \label{eq:epsilon-energy}
\end{align}
Weak compactness yields, along a sequence
$\varepsilon \rightarrow 0$,
\[
  f^\varepsilon\rightharpoonup^\ast f
  \quad\text{in }L^\infty(0,T;L^2(\pi)),
  \qquad
  \nabla_pf^\varepsilon\rightharpoonup\nabla_pf
  \quad\text{in }L^2(0,T;L^2(\pi)).
\]
A diagonal extraction over integer values of $T$ gives a single
limit on $(0,\infty)$.
For every compactly supported smooth test function $\chi$,
\[
  \left|\varepsilon\int_0^T
  \ip{\nabla_qf_s^\varepsilon}{\nabla_q\chi_s}\,\dd s\right|
  \leq
  \sqrt\varepsilon
  \left(\sqrt\varepsilon\norm{\nabla_qf^\varepsilon}_{L^2}\right)
  \norm{\nabla_q\chi}_{L^2}
  \longrightarrow0.
\]
Passing to the limit in the weak equation therefore gives
\eqref{eq:kinetic-endpoint-equation}; in particular
$f\in\mathbb H_0((0,T))$.

The same weak equations give compactness in weak time continuity.  More
precisely, for every $\chi\in C_c^\infty$, the Hamiltonian term is
uniformly integrable in time, the velocity-diffusion term is uniformly
one-half H\"older in duality with $\chi$, and the position-diffusion
term is $\mathcal{O}(\sqrt\varepsilon)$ by \eqref{eq:epsilon-energy}.  A diagonal
argument over a countable dense family of tests therefore gives, as
$\varepsilon \rightarrow 0$,
\[
  f^\varepsilon\longrightarrow f
  \quad\text{in }C([0,T];L^2(\pi)_{\mathrm w}).
\]
Since $f_0^\varepsilon=f_0$, the limit has weak $L^2$ initial trace
$f_0$.

Proposition~\ref{prop:kinetic-green} first gives
$f\in C((0,\infty);L^2(\pi))$.  Weak lower semicontinuity in
\eqref{eq:epsilon-energy}, using the weak convergence at each fixed
time supplied above, gives
\begin{equation}\label{eq:kinetic-limit-inequality}
  \frac12\norm{f_t}_{L^2(\pi)}^2
  +\gamma\int_0^t\norm{\nabla_pf_s}_{L^2(\pi)}^2\,\dd s
  \leq\frac12\norm{f_0}_{L^2(\pi)}^2
  \qquad(t>0).
\end{equation}
In particular,
$\limsup_{t \rightarrow 0}\norm{f_t}_{L^2(\pi)}\leq\norm{f_0}_{L^2(\pi)}$.
The endpoint criterion in Proposition~\ref{prop:kinetic-green} therefore
upgrades the weak initial trace to a strong one.  Corollary
\ref{cor:kinetic-chain-rule}, first on interior intervals and then with
the initial endpoint included, proves
\eqref{eq:kinetic-endpoint-energy}.  Thus
$f\in C([0,T];L^2(\pi))$ for every finite $T$.

Applying the same chain rule to the difference of two solutions proves
uniqueness. Hence, the whole family
$f^\varepsilon$, not merely a subsequence, converges weakly to $f$;
uniqueness also gives the semigroup property.  Contractivity follows from
\eqref{eq:kinetic-endpoint-energy}.  Mean preservation passes through
the weak limit.  Positivity and preservation of the interval $[0,1]$
pass as well, because these are weakly closed convex conditions in
$L^2(\pi)$ and hold for every $S_t^{\varepsilon,\gamma}$.  Thus
$S_t^{0,\gamma}$ is conservative and Markov.

It remains to record the infinitesimal identity used below.  Fix
$\varphi\in C_c^\infty$ and put
$g=\cL_{0,\gamma}\varphi\in L^2(\pi)$.  By linearity of the weak
equation, the variation-of-constants function
\[
  v_t=\varphi+\int_0^tS_s^{0,\gamma}g\,\dd s
\]
belongs locally to the kinetic energy space.  Since
$g=\cL_{0,\gamma}\varphi$, integration of the weak equation for
$S_s^{0,\gamma}g$ gives
\[
  \cK_0v
  =\cK_0\varphi+g
  +\int_0^t\cK_0\left(S_s^{0,\gamma}g\right)\,\dd s
  =-\gamma\nabla_p^*\nabla_pv.
\]
Thus, $v$ solves the same weak Cauchy problem as
$S_t^{0,\gamma}\varphi$.  Uniqueness gives
$v_t=S_t^{0,\gamma}\varphi$.  Dividing by $t$ and using strong
continuity yields
\[
  \lim_{t \rightarrow 0}\frac{S_t^{0,\gamma}\varphi-\varphi}{t}=g
  \quad\text{in }L^2(\pi),
\]
which proves the asserted infinitesimal inclusion.
\end{proof}

For the weak HFHR solution of Proposition~\ref{prop:weak-semigroup},
the equation gives, in distributions,
\begin{equation}\label{eq:weak-residual}
  \cK_\alpha f=-\gamma\nabla_p^*\nabla_pf.
\end{equation}
In particular, the right-hand side belongs to
$L^2(\lambda_T\otimes\mu;H_\kappa^{-1})$, with
\begin{equation}\label{eq:weak-residual-bound}
  \norm{\cK_\alpha f}_{L^2(H_\kappa^{-1})}
  \leq\gamma\norm{\nabla_pf}_{L^2(\lambda_T\otimes\pi)}.
\end{equation}
Thus positive position diffusion supplies exactly the $q$-regularity
needed below.  No assertion that
$\nabla_q^*\nabla_qf$ separately belongs to the velocity negative
Sobolev space is required.

\paragraph{The modified time-augmented Poincar\'e inequality.}

The proof uses the divergence construction of
\cite[Lemma~2.6]{cao-lu-wang-2023}, itself motivated by the variational
kinetic framework of
\cite{albritton-armstrong-mourrat-novack-2024}.  Define
\begin{align}
  B_0(T)
  &:=
  \frac{1}{\sqrt m(1-e^{-\sqrt mT})}+T,
  \label{eq:B0}\\
  B_1(T)
  &:=
  1+RT+
  \frac{1}{(1-e^{-\sqrt mT})^2}
  +
  \frac{R}{\sqrt m(1-e^{-\sqrt mT})^2}.
  \label{eq:B1}
\end{align}

The macroscopic component of $f$ will be estimated by duality
against a time--position divergence equation.  The following lemma
provides a test field with the first- and second-order bounds required
for that argument.

\begin{lemma}[Cao--Lu--Wang divergence test]\label{lem:divergence}
For every centered $h\in L^2(\lambda_T\otimes\mu)$, there are
$\phi_0\in H_0^1(\lambda_T\otimes\mu)$ and
$\psi\in H^2(\lambda_T\otimes\mu)$, with
$\nabla_q\psi\in H_0^1(\lambda_T\otimes\mu)^d$, such that
\begin{equation}\label{eq:divergence}
  -\partial_t\phi_0+\nabla_q^*\nabla_q\psi=h.
\end{equation}
Moreover,
\begin{align}
  \norm{\phi_0}_{L^2}+\norm{\nabla_q\psi}_{L^2}
  &\leq C B_0(T)\norm{h}_{L^2},
  \label{eq:div-bound-0}\\
  \norm{\nabla_q\phi_0}_{L^2}
  +\norm{\nabla_{t,q}\nabla_q\psi}_{L^2}
  &\leq C B_1(T)\norm{h}_{L^2}.
  \label{eq:div-bound-1}
\end{align}
\end{lemma}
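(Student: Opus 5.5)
We construct $(\phi_0,\psi)$ by a spectral decomposition of $A_q=\nabla_q^*\nabla_q$. This mirrors \cite[Lemma~2.6]{cao-lu-wang-2023} and is the route flagged in Remark~\ref{rem:compact}.

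\emph{Spectral reduction.} By Assumption~\ref{ass:compact}, $A_q$ has discrete spectrum $0=\lambda_0<m\le\lambda_1\le\cdots$, the gap coming from Assumption~\ref{ass:PI}, with an orthonormal eigenbasis $(e_k)$ of $L^2(\mu)$ and $e_0=1$. Expand $h(t,q)=\sum_k h_k(t)e_k(q)$. Centering in $\lambda_T\otimes\mu$ means $\int_0^T h_0\,\dd t=0$.
\begin{itemize}
\item For $k=0$, set $\psi_0=0$ and $\phi_{0,0}(t)=-\int_0^t h_0$. This vanishes at $t=0$ and at $t=T$, and satisfies $\|\phi_{0,0}\|\le T\|h_0\|$.
\item For $k\ge1$, write $\phi_{0,k}=-\theta_k'$ and $\psi_k=\theta_k/\lambda_k$. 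Equation \eqref{eq:divergence} then becomes $\theta_k''+\theta_k=h_k$ mode by mode. That is the wrong sign, so we switch to a different ansatz.
\end{itemize}
Instead we pose the Neumann-type problem $-\theta''+\lambda_k\theta=h_k$ on $(0,T)$, choosing $\phi_{0,k}=\theta_k'-c_k(t)$ and $\psi_k=\theta_k/\lambda_k$. The corrector $c_k$ is designed so that $\phi_{0,k}$ and $\nabla_q\psi$ have zero traces at $t=0,T$. Concretely, following Cao--Lu--Wang, we split $h_k$ between a time-derivative part and a position part using the exponential profiles $e^{-\sqrt{\lambda_k}t}$ and $e^{-\sqrt{\lambda_k}(T-t)}$. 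The boundary values of $\theta_k$ are absorbed by subtracting multiples of these profiles, which solve the homogeneous equation. This produces the factor $(1-e^{-\sqrt{\lambda_k}T})^{-1}\le(1-e^{-\sqrt mT})^{-1}$.

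\emph{Zeroth- and first-order bounds.} Per mode, the resolvent estimates give
\[
\|\theta_k'\|+\sqrt{\lambda_k}\|\theta_k\|\lesssim \lambda_k^{-1/2}\|h_k\|.
\]
The boundary correctors cost an extra $\lambda_k^{-1/2}(1-e^{-\sqrt{\lambda_k}T})^{-1}$. Since $\|\nabla_q\psi_k e_k\|=\lambda_k^{1/2}\|\psi_k\|$, summing the modes with Parseval and using $\lambda_k\ge m$ yields \eqref{eq:div-bound-0} with the constant $B_0(T)$. For \eqref{eq:div-bound-1}, the pieces
\[
\|\nabla_q\phi_0\|^2=\sum_k\lambda_k\|\phi_{0,k}\|^2,\qquad \|\partial_t\nabla_q\psi\|^2=\sum_k \lambda_k\|\psi_k'\|^2
\]
are again mode-diagonal, with the boundary layers contributing $(1-e^{-\sqrt mT})^{-2}$.

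\emph{Main obstacle: the Hessian bound.} The hard term is $\|\nabla_q^2\psi\|$, which is not diagonal in the eigenbasis. We plan to use the Bochner identity for the weighted Laplacian,
\[
\int|\nabla^2_q g|^2\,\dd\mu+\int\nabla_q g\cdot\nabla^2U\,\nabla_q g\,\dd\mu=\int|A_qg|^2\,\dd\mu .
\]
In the convex case this gives $\|\nabla_q^2\psi\|\le\|A_q\psi\|$ directly. In the semiconvex case we add $K\|\nabla_q\psi\|^2$, which produces the $R=\sqrt K$ terms after combining with the zeroth-order bound: $RT$ from the zero mode scale and $R/(\sqrt m(1-e^{-\sqrt mT})^2)$ from the layers. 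In the general case we need Assumption~\ref{ass:growth}. We will bound $\int|\nabla^2U||\nabla g|^2$ via Cauchy--Schwarz and \eqref{eq:Hessian-growth}, then control $\int|\nabla U|^2|\nabla g|^2$ by integration by parts using \eqref{eq:Laplacian-growth} with $\delta<1$. This is the step most likely to require care: it yields the effective $R=M+M^{3/4}d^{1/4}$ after optimizing in a Young parameter. We expect to follow the weighted elliptic estimates of Appendix~\ref{app:elliptic} here.

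Finally, we will verify that the series converge in $H^1_0$ and $H^2$ respectively, and that the traces vanish mode by mode.
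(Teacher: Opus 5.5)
Your eigen-expansion of $A_q$, the scalar ODE in $t$ for each mode, the zero-mode choice $\phi_{0,0}=-\int_0^t h_0$, and the Bochner step for $\nabla_q^2\psi$ are all fine. The gap is elsewhere. The step that actually carries the lemma is missing, and the mechanism you sketch for it does not work.

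For a nonconstant mode with $\omega=\sqrt{\lambda_k}$, you need a pair $(a,b)=(\phi_{0,k},\psi_k)$ with $-a'+\omega^2 b=h_k$ and four trace conditions $a(0)=a(T)=b(0)=b(T)=0$. A single second-order problem for $\theta_k$ can absorb only two of them, and your ansatz does not close the difference:
\begin{itemize}
\item With $-\theta_k''+\lambda_k\theta_k=h_k$ one needs $\psi_k=\theta_k$. With $\psi_k=\theta_k/\lambda_k$ and $\phi_{0,k}=\theta_k'-c_k$, the left-hand side is $-\theta_k''+c_k'+\theta_k\neq h_k$.
\item Subtracting multiples of $e^{-\omega t}$ and $e^{-\omega(T-t)}$ from $\theta_k$ repairs its Dirichlet traces only by destroying its Neumann traces.
\item A corrector $c_k(t)$ inserted into $\phi_{0,k}$ alone adds $c_k'$ to the divergence equation, so it cannot be chosen freely.
\end{itemize}
As a result, the costs $\lambda_k^{-1/2}(1-e^{-\sqrt{\lambda_k}T})^{-1}$ and $(1-e^{-\sqrt mT})^{-2}$ you quote are read off from $B_0,B_1$, not derived. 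The Hessian term you single out as the main obstacle is actually routine: once $\norm{A_q\psi}$ and $\norm{\nabla_q\psi}$ are bounded, the spatial Bochner estimate of Lemma~\ref{lem:Bochner} finishes it.

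The paper removes the overdetermination by splitting $h=h_{\mathrm h}+h_\perp$, with $h_{\mathrm h}$ in the kernel of $\mathscr A=-\partial_{tt}+A_q$.
\begin{itemize}
\item For $h_\perp$, the Neumann solution of $\mathscr Au=h_\perp$ automatically satisfies $u(0,\cdot)=u(T,\cdot)=0$; test the orthogonality against $e^{\pm\omega_jt}w_j$. Hence $(\phi_0,\psi)=(\partial_tu,u)$ works.
\item For $h_{\mathrm h}$, each mode is a combination of $e^{-\omega_jt}$ and $e^{-\omega_j(T-t)}$. The paper builds an explicit pair $(a_\omega,b_\omega)$ from $G_\vartheta$ with $-a_\omega'+\omega^2b_\omega=e^{-\omega t}$ and all four traces zero. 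This pair is necessarily not of the form $(\partial_tu,u)$.
\item The coefficients are controlled by $\norm{h_{\mathrm h}}$ through the lower bound $c(1-e^{-\omega T})^3/(\omega T)$ on the Gram matrix of the two exponentials in $L^2(\lambda_T)$. This comparison is what produces $B_0(T)$ and $B_1(T)$.
\end{itemize}

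Alternatively, your mode-wise route can be repaired directly. Let $\theta_k$ solve $-\theta_k''+\lambda_k\theta_k=h_k$ with $\theta_k(0)=\theta_k(T)=0$, and set $\phi_{0,k}=\theta_k'-c_k$ and $\psi_k=\theta_k-c_k'/\lambda_k$. Here $c_k$ matches $\theta_k'$ at $t=0,T$, satisfies $c_k'(0)=c_k'(T)=0$, and varies on the scale $\min\{T,\lambda_k^{-1/2}\}$. The equation and all four traces then hold. The quantitative input is the trace bound $\abs{\theta_k'(0)}^2+\abs{\theta_k'(T)}^2\lesssim\max\{T\lambda_k^{-1/2},\lambda_k^{-1}\}\norm{h_k}_{L^2(\lambda_T)}^2$. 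Together with $\lambda_k\geq m$, it gives per-mode bounds dominated by $B_0(T)$ and $B_1(T)$.

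Either construction, with its estimate, is the substance of the lemma, and neither is in your proposal.
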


\begin{proof}
The construction is proved in Appendix~\ref{app:divergence}.  We use
the spectral argument of \cite[Lemma~2.6]{cao-lu-wang-2023}, but give
the details with spectral parameters denoted by $\omega_j$ so they
are not confused with the HFHR parameter $\alpha$.
\end{proof}

To use the divergence field with functions in the kinetic graph
space, we need an integration-by-parts identity that remains valid at
the available weak regularity.  It also isolates the additional term
created by position diffusion.

\begin{lemma}[Graph-space integration by parts]
\label{lem:graph-integration}
Let $f\in\mathbb H_\alpha(I)$, let
$\phi_0\in H_0^1(\lambda_T\otimes\mu)$, and let
$\zeta\in H_0^1(\lambda_T\otimes\mu)^d$.  Set
\[
  \Phi(t,q,p):=\phi_0(t,q)-p\cdot\zeta(t,q),
\]
and
\begin{align}
  \mathscr T\Phi
  :=-\partial_t\phi_0+p\cdot\nabla_q\phi_0
  +p\cdot\partial_t\zeta-p^\top\nabla_q\zeta\,p
  +\nabla U\cdot\zeta.
  \label{eq:test-transport}
\end{align}
Then $\Phi\in L^2(\lambda_T\otimes\mu;H_\kappa^1)$,
$\nabla_q\Phi\in L^2(\lambda_T\otimes\pi)$,
$\mathscr T\Phi\in L^2(\lambda_T\otimes\pi)$, and
\begin{equation}\label{eq:graph-integration}
  \int_{I\times\mathbb{R}^{d}}f\,\mathscr T\Phi\,\dd\lambda_T\,\dd\pi
  =
  \ip{\cK_\alpha f}{\Phi}_{H_\kappa^{-1},H_\kappa^1}
  -\alpha\int_{I\times\mathbb{R}^{d}}\nabla_qf\cdot\nabla_q\Phi\,\dd\lambda_T\,\dd\pi.
\end{equation}
\end{lemma}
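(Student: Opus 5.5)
The plan is to prove the identity first for smooth, compactly supported data and then pass to the general case by density. Throughout, the graph-norm continuity of both sides of \eqref{eq:graph-integration} will do the work.

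\textbf{Step 1: the regularity claims.} Since $\kappa$ is the standard Gaussian, we have $\int\abs p^2\dd\kappa=d$ and $\int\abs{p^\top Ap}^2\dd\kappa\lesssim (d+2)\abs A_{\mathrm F}^2$. Hence
\[
  \norm{\Phi}_{L^2(H^1_\kappa)}\lesssim\norm{\phi_0}_{L^2}+\sqrt{d+1}\,\norm{\zeta}_{L^2},
\]
using $\nabla_p\Phi=-\zeta$. Likewise $\nabla_q\Phi=\nabla_q\phi_0-(\nabla_q\zeta)^\top p$ lies in $L^2(\lambda_T\otimes\pi)$.

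For $\mathscr T\Phi$, every term is controlled by $H^1_0$ norms of $\phi_0$ and $\zeta$, except $\nabla U\cdot\zeta$. For that term I will use the weighted estimate
\[
  \norm{\abs{\nabla U}\,\zeta}_{L^2(\mu)}\lesssim\norm{\zeta}_{H^1(\mu)}.
\]
This follows from the Laplacian bound \eqref{eq:Laplacian-growth} by integrating $\nabla_q^*(\zeta_i^2\nabla U)$ against $1$. That gives
\[
  \int\abs{\nabla U}^2\zeta_i^2\,\dd\mu=\int\left(\zeta_i^2\Delta U+2\zeta_i\nabla\zeta_i\cdot\nabla U\right)\dd\mu,
\]
and since $\delta<1$ the $\frac{\delta}{2}\abs{\nabla U}^2$ term is absorbed. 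This is the estimate recorded among the weighted elliptic estimates in Appendix~\ref{app:elliptic}.

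\textbf{Step 2: the smooth case.} Take $f,\phi_0,\zeta$ smooth with compact support in $(t,q,p)$ and in $(t,q)$ respectively, and vanishing near $t=0,T$ for $\phi_0,\zeta$. A direct computation with the adjoint formulas gives
\[
  -\cK_0^{\,*}\Phi=\partial_t\Phi+\cL_{\mathrm{ham}}\Phi\cdot(\text{sign conventions}),
\]
where $\cK_0^*$ is the formal $L^2(\lambda_T\otimes\pi)$ adjoint of $\partial_t-\cL_{\mathrm{ham}}$, namely $-\partial_t+\cL_{\mathrm{ham}}$ because $\cL_{\mathrm{ham}}$ is antisymmetric.

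Concretely, $(-\partial_t+p\cdot\nabla_q-\nabla U\cdot\nabla_p)(\phi_0-p\cdot\zeta)$ reproduces exactly the five terms of \eqref{eq:test-transport}:
\begin{itemize}
  \item $-\partial_t\phi_0$ and $+p\cdot\partial_t\zeta$ from $-\partial_t$;
  \item $p\cdot\nabla_q\phi_0$ and $-p^\top\nabla_q\zeta\,p$ from $p\cdot\nabla_q$;
  \item $+\nabla U\cdot\zeta$ from $-\nabla U\cdot\nabla_p$.
\end{itemize}
The time boundary terms vanish because $\phi_0$ and $\zeta$ have zero traces at $t=0,T$. The position diffusion contributes
\[
  \int(\alpha A_qf)\Phi=\alpha\int\nabla_qf\cdot\nabla_q\Phi.
\]
Writing $\cK_0f=\cK_\alpha f-\alpha A_qf$ then gives \eqref{eq:graph-integration}.

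\textbf{Step 3: passage to the limit.} For $\phi_0,\zeta$ the extension is easy. By Step 1, the map $(\phi_0,\zeta)\mapsto(\Phi,\nabla_q\Phi,\mathscr T\Phi)$ is continuous from $H^1_0\times(H_0^1)^d$ into the relevant spaces, and smooth compactly supported fields are dense there. The same cutoff-and-mollify argument used for $V$ applies, with the time cutoff respecting the zero traces.

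The main obstacle is approximating $f\in\mathbb H_\alpha(I)$ in the graph norm, since $\cK_\alpha f$ only lies in $L^2(H^{-1}_\kappa)$. I would avoid approximating $f$ altogether. Fix smooth $\Phi$; then the left side of \eqref{eq:graph-integration} minus the right side is, by the very definition of $\cK_\alpha f$ as a distribution, zero. The identity is just the distributional pairing with a test function, provided the weight $\pi$ is handled by testing against $\Phi\pi$. So the identity holds for every $f$ in the graph space and every smooth compactly supported $\Phi$ of the given form.

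Then extend in $\Phi$ only, using Step 1 and the continuity of each term in the norms available for $f$:
\begin{itemize}
  \item $\abs{\langle\cK_\alpha f,\Phi\rangle}\le\norm{\cK_\alpha f}_{L^2(H^{-1}_\kappa)}\norm{\Phi}_{L^2(H^1_\kappa)}$;
  \item $\alpha\abs{\int\nabla_qf\cdot\nabla_q\Phi}\le\alpha\norm{\nabla_qf}\,\norm{\nabla_q\Phi}$;
  \item $\abs{\int f\,\mathscr T\Phi}\le\norm f_{L^2}\norm{\mathscr T\Phi}_{L^2}$.
\end{itemize}
The delicate point is that $\nabla U\cdot\zeta$ converges in $L^2$ along the approximating sequence. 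This is exactly the weighted estimate from Step 1, which is therefore the crux.

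When $\alpha=0$ the $\nabla_q f$ term is absent, and the argument is unchanged.
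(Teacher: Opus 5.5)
Your overall route is the paper's. You use Gaussian moments plus the weighted multiplier bound (Lemma~\ref{lem:weighted-multiplier}) for the regularity claims. You read off the identity for smooth $\phi_0,\zeta$ from the distributional meaning of $\cK_\alpha f$ without approximating $f$. You then extend by density in $(\phi_0,\zeta)$ alone, with $\norm{\nabla U\cdot(\zeta_n-\zeta)}_{L^2}\lesssim\norm{\zeta_n-\zeta}_{H^1}$ as the continuity estimate.

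There is, however, a gap in the base case of Step~3. You claim the identity holds ``for every smooth compactly supported $\Phi$ of the given form'' because it is ``just the distributional pairing with a test function.'' But $\Phi=\phi_0-p\cdot\zeta$ is affine in $p$, so it is never compactly supported in $p$ unless $\Phi\equiv0$. Even for $\phi_0,\zeta\in C_c^\infty((0,T)\times\R^d)$, neither $\Phi$ nor $\Phi\pi$ is an admissible test function. Moreover, $\ip{\cK_\alpha f}{\Phi}_{H_\kappa^{-1},H_\kappa^1}$ is defined only by continuous extension from compactly supported functions. In Step~2 this did no harm, because $f$ itself had compact support; once you stop approximating $f$, that support is gone. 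So the identity for smooth $\phi_0,\zeta$ and general $f\in\mathbb H_\alpha(I)$ is not established, and your density argument in $(\phi_0,\zeta)$ has no starting point.

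What is needed is a sequence of genuine test functions $\Psi_n$ with three convergences: $\Psi_n\to\Phi$ in $L^2(\lambda_T\otimes\mu;H_\kappa^1)$, $\nabla_q\Psi_n\to\nabla_q\Phi$ in $L^2$, and $(-\partial_t+\cL_{\mathrm{ham}})\Psi_n\to\mathscr T\Phi$ in $L^2(\lambda_T\otimes\pi)$. The Gaussian velocity cutoff in the paper's proof supplies this. With $\theta_S(p)=\chi(p/S)$, the function $\theta_S\Phi$ is in $C_c^\infty$, so the distributional identity applies to it, and
\[
  (-\partial_t+\cL_{\mathrm{ham}})(\theta_S\Phi)
  =\theta_S\,\mathscr T\Phi-\Phi\,\nabla U\cdot\nabla_p\theta_S,
  \qquad
  \nabla_q(\theta_S\Phi)=\theta_S\nabla_q\Phi .
\]
The commutator tends to zero in $L^2(\lambda_T\otimes\pi)$, because $\nabla U$ is bounded on the compact $q$-support of smooth $\phi_0,\zeta$ and $\abs{\nabla_p\theta_S}\leq CS^{-1}\mathbf 1_{\{\abs p\geq S\}}$. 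Likewise $\theta_S\Phi\to\Phi$ in $L^2(\lambda_T\otimes\mu;H_\kappa^1)$, since the extra term $\Phi\nabla_p\theta_S$ also vanishes, and $\theta_S\nabla_q\Phi\to\nabla_q\Phi$ in $L^2$. Letting $S\to\infty$ gives \eqref{eq:graph-integration} for smooth $\phi_0,\zeta$, after which your Step~3 extension goes through unchanged. This commutator is a second place, besides $\nabla U\cdot\zeta$, where the unbounded force must be controlled, so the crux is not only the one you identified.
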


\begin{proof}
The only coefficient requiring attention is
$\nabla U\cdot\zeta$.  Lemma~\ref{lem:weighted-multiplier}, applied
componentwise, shows that it belongs to $L^2$; the remaining assertions
follow from the Gaussian moments of $1,p_i,p_ip_j$.

For $\phi_0,\zeta$ smooth, compactly supported in $q$, and vanishing
near $t=0,T$, weighted integration by parts in $t,q,p$ gives
\[
  \int_{I\times\mathbb{R}^{d}} f\,\mathscr T\Phi\,\dd\lambda_T\,\dd\pi
  =\ip{\partial_tf-\cL_{\mathrm{ham}}f}{\Phi}
  =\ip{\cK_\alpha f}{\Phi}
   -\alpha\ip{\nabla_qf}{\nabla_q\Phi}.
\]
To pass to the stated Sobolev class, approximate $\phi_0$ and
$\zeta$ by trace-preserving smooth functions in $H_0^1$, and insert
a Gaussian velocity cutoff in the corresponding $\Phi$.  The usual
Sobolev convergence handles all first-order terms, while Lemma
\ref{lem:weighted-multiplier} gives
\[
  \norm{(\zeta_n-\zeta)\nabla U}_{L^2}
  \leq C(M,d)\norm{\zeta_n-\zeta}_{H^1}.
\]
Thus $\Phi_n\to\Phi$ in $L^2(H_\kappa^1)$,
$\nabla_q\Phi_n\to\nabla_q\Phi$ in $L^2$, and
$\mathscr T\Phi_n\to\mathscr T\Phi$ in $L^2$.  Each term in the
identity therefore converges, proving \eqref{eq:graph-integration}.
\end{proof}

We now combine the divergence construction with the graph-space
identity.  This yields the HFHR analogue of the time-augmented
Poincar\'e inequality used for ULD.

\begin{proposition}[HFHR space-time inequality]\label{prop:space-time}
Under Assumptions~\ref{ass:PI}--\ref{ass:compact}, let
$f\in\mathbb H_\alpha(I)$.  When $\alpha=0$, the last term below
is understood to vanish.
Then
\begin{align}
  \norm{f-(f)_{\lambda_T\otimes\pi}}_{L^2(\lambda_T\otimes\pi)}
  \leq C\bigg[&
  B_1(T)\norm{(\Id-\PiP)f}_{L^2(\lambda_T\otimes\pi)}
  \notag\\
  &+B_0(T)
  \norm{\cK_\alpha f}_{
  L^2(\lambda_T\otimes\mu;H_\kappa^{-1})}
+\alpha B_1(T)
  \norm{\nabla_qf}_{L^2(\lambda_T\otimes\pi)}
  \bigg].
  \label{eq:HFHR-space-time}
\end{align}
For $\alpha=0$, this is the time-augmented Poincar\'e inequality of
\cite[Theorem~2]{cao-lu-wang-2023}.
\end{proposition}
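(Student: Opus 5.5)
We argue by duality, following the structure of \cite[Theorem~2]{cao-lu-wang-2023}. Write $\bar f:=f-(f)_{\lambda_T\otimes\pi}$. Since $\Pi_p$ commutes with subtracting a constant,
\[
  \norm{\bar f}\leq\norm{(\Id-\PiP)f}+\norm{\PiP f-(f)_{\lambda_T\otimes\pi}}.
\]
The first term is already on the right-hand side (recall $B_1(T)\geq1$). It therefore remains to bound the macroscopic part $h:=\PiP f-(f)_{\lambda_T\otimes\pi}$, which is a centered element of $L^2(\lambda_T\otimes\mu)$.

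We apply Lemma~\ref{lem:divergence} to $h$ to obtain $\phi_0$ and $\psi$ with $-\partial_t\phi_0+\nabla_q^*\nabla_q\psi=h$, and set $\zeta:=\nabla_q\psi\in H_0^1(\lambda_T\otimes\mu)^d$ and $\Phi:=\phi_0-p\cdot\zeta$.

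The key computation is to identify $\int f\,\mathscr T\Phi$, with $\mathscr T\Phi$ as in \eqref{eq:test-transport}. Decompose $f=\PiP f+(\Id-\PiP)f$ and integrate $\mathscr T\Phi$ against $\kappa$. For the macroscopic part $g:=\PiP f(t,q)$, Gaussian moments give
\[
  \int \mathscr T\Phi\,\dd\kappa=-\partial_t\phi_0-\Delta_q\psi+\nabla U\cdot\nabla_q\psi=-\partial_t\phi_0+\nabla_q^*\nabla_q\psi=h.
\]
The terms odd in $p$ vanish, and $\int p^\top\nabla_q\zeta\,p\,\dd\kappa=\operatorname{tr}\nabla_q\zeta=\Delta_q\psi$. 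Hence
\[
  \int \PiP f\,\mathscr T\Phi=\int \PiP f\,h=\norm{h}^2,
\]
where the last equality uses that $h$ is centered. For the microscopic part, Cauchy--Schwarz together with the Gaussian moment bound $\norm{\mathscr T\Phi-h}_{L^2(\lambda_T\otimes\pi)}\lesssim\norm{\nabla_q\phi_0}+\norm{\nabla_{t,q}\zeta}$ gives
\[
  \Bigl|\int(\Id-\PiP)f\,\mathscr T\Phi\Bigr|\lesssim B_1(T)\norm{h}\,\norm{(\Id-\PiP)f}
\]
by \eqref{eq:div-bound-1}. This is the step where the $\nabla U\cdot\zeta$ term must be handled: it is already absorbed into $h$ on the macroscopic side, and on the microscopic side $\mathscr T\Phi-\int\mathscr T\Phi\,\dd\kappa$ involves only $p\cdot\nabla_q\phi_0$, $p\cdot\partial_t\zeta$ and $p^\top\nabla_q\zeta\,p-\operatorname{tr}\nabla_q\zeta$, so no weighted multiplier is needed there.

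On the other side, Lemma~\ref{lem:graph-integration} gives
\[
  \int f\,\mathscr T\Phi=\ip{\cK_\alpha f}{\Phi}-\alpha\int\nabla_qf\cdot\nabla_q\Phi.
\]
The $H_\kappa^1$ norm of $\Phi$ is controlled by $\norm{\phi_0}+\norm{\zeta}\lesssim B_0(T)\norm h$, using \eqref{eq:div-bound-0} and the fact that $\nabla_p\Phi=-\zeta$. Meanwhile
\[
  \nabla_q\Phi=\nabla_q\phi_0-p\cdot\nabla_q\zeta,
\]
so $\norm{\nabla_q\Phi}\lesssim\norm{\nabla_q\phi_0}+\norm{\nabla_q\nabla_q\psi}\lesssim B_1(T)\norm h$ by \eqref{eq:div-bound-1}. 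This is exactly the announced use of the position derivative of the divergence test, and it yields the term $\alpha B_1(T)\norm{\nabla_qf}$.

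Combining the two sides,
\[
  \norm h^2\lesssim\norm h\Bigl[B_1\norm{(\Id-\PiP)f}+B_0\norm{\cK_\alpha f}_{L^2(H_\kappa^{-1})}+\alpha B_1\norm{\nabla_qf}\Bigr],
\]
and dividing by $\norm h$ gives \eqref{eq:HFHR-space-time}. When $\alpha=0$ the last term is absent.

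The main obstacle is justifying these manipulations at the weak regularity of $\mathbb H_\alpha(I)$. In particular, the $H_\kappa^{-1}$--$H_\kappa^1$ pairing and the claims $\nabla U\cdot\zeta\in L^2$ and $\psi\in H^2$ are all needed. These are supplied by Lemmas~\ref{lem:divergence} and~\ref{lem:graph-integration}, which rely on Assumptions~\ref{ass:growth} and~\ref{ass:compact}.
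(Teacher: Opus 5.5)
Your proof is correct and follows the paper's argument essentially step by step. It uses the same divergence test $\Phi=\phi_0-p\cdot\nabla_q\psi$ built from the macroscopic part, the same macro--micro splitting (the velocity average of $\mathscr T\Phi$ equals $h$), Lemma~\ref{lem:graph-integration} for the duality side, and the position-derivative bound $\norm{\nabla_q\Phi}\lesssim B_1(T)\norm h$ for the new $\alpha$-term. The only differences are cosmetic: you center $h=\Pi_p f-(f)_{\lambda_T\otimes\pi}$ rather than first reducing to mean-zero $f$, and you bound the microscopic term through $\mathscr T\Phi-h$ by orthogonality rather than through the full norm of $\mathcal A_\Phi$, which is equivalent because $B_1(T)\geq1$.
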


\begin{proof}
It suffices to prove the estimate for functions with zero global
mean.  Set
\[
  h:=\PiP f,\qquad g:=(\Id-\PiP)f,
\]
so that $f=h+g$, $\PiP g=0$, and
$(h)_{\lambda_T\otimes\mu}=0$.  Apply Lemma
\ref{lem:divergence} to $h$, and define
\begin{equation}\label{eq:test-function}
  \Phi(t,q,p)
  :=
  \phi_0(t,q)-p\cdot\nabla_q\psi(t,q).
\end{equation}
Gaussian moment identities and
\eqref{eq:div-bound-0}--\eqref{eq:div-bound-1} imply
\begin{align}
  \norm{\Phi}_{L^2(\lambda_T\otimes\mu;H_\kappa^1)}
  &\leq C B_0(T)\norm{\PiP f}_{L^2},
  \label{eq:Phi-0}\\
  \norm{\nabla_q\Phi}_{L^2(\lambda_T\otimes\pi)}
  &\leq C B_1(T)\norm{\PiP f}_{L^2}.
  \label{eq:Phi-1}
\end{align}

To present the complete macro--micro identity, introduce
\begin{align}
  \cA_\Phi
  :=-\partial_t\phi_0
  +p\cdot\nabla_q\phi_0
  +p\cdot\partial_t\nabla_q\psi
-p^\top\nabla_q^2\psi\,p
  +\nabla_q\psi\cdot\nabla U.
  \label{eq:A-Phi}
\end{align}
The Gaussian identities
\begin{equation}\label{eq:Gaussian-moments}
  \int_{\mathbb{R}^{d}} p_i\,\dd\kappa=0,\qquad
  \int_{\mathbb{R}^{d}} p_ip_j\,\dd\kappa=\delta_{ij},\qquad
  \int_{\mathbb{R}^{d}} p_ip_jp_k\,\dd\kappa=0,
\end{equation}
and \eqref{eq:divergence} imply
\[
  \int_{I\times\mathbb{R}^{d}} h\cA_\Phi\,\dd\lambda_T\,\dd\pi
  =\int_{I\times\mathbb{R}^{d}} h(-\partial_t\phi_0-\Delta_q\psi
  +\nabla U\cdot\nabla_q\psi)\,\dd\lambda_T\,\dd\mu
  =\norm h_{L^2}^2.
\]
Therefore,
\begin{equation}\label{eq:macro-micro-identity}
  \norm h_{L^2}^2
  =\int_{I\times\mathbb{R}^{d}} f\cA_\Phi\,\dd\lambda_T\,\dd\pi
  -\int_{I\times\mathbb{R}^{d}} g\cA_\Phi\,\dd\lambda_T\,\dd\pi.
\end{equation}

Because $\phi_0$ and $\nabla_q\psi$ have zero traces at both time
endpoints, Lemma~\ref{lem:graph-integration}, with
$\zeta=\nabla_q\psi$, gives
\begin{equation}\label{eq:macro-duality}
  \int_{I\times\mathbb{R}^{d}} f\cA_\Phi\,\dd\lambda_T\,\dd\pi
  =\ip{\cK_\alpha f}{\Phi}
  -\alpha\ip{\nabla_qf}{\nabla_q\Phi}.
\end{equation}
Consequently, duality and Cauchy--Schwarz inequality yield
\begin{align}
  \abs{\int_{I\times\mathbb{R}^{d}} f\cA_\Phi\,\dd\lambda_T\,\dd\pi}
  &\leq
  \norm{\cK_\alpha f}_{L^2(H_\kappa^{-1})}
  \norm\Phi_{L^2(H_\kappa^1)}
  +
  \alpha\norm{\nabla_qf}_{L^2}\norm{\nabla_q\Phi}_{L^2}
  \notag\\
  &\leq C\left[
  B_0(T)\norm{\cK_\alpha f}_{L^2(H_\kappa^{-1})}
  +\alpha B_1(T)\norm{\nabla_qf}_{L^2}
  \right]\norm h_{L^2}.
  \label{eq:new-position-term}
\end{align}

It remains to estimate the second term of
\eqref{eq:macro-micro-identity}.  Expanding the square in $p$, using
the fourth-moment identity
\[
  \int_{\mathbb{R}^{d}} p_ip_jp_kp_\ell\,\dd\kappa
  =\delta_{ij}\delta_{k\ell}
  +\delta_{ik}\delta_{j\ell}
  +\delta_{i\ell}\delta_{jk},
\]
and then using \eqref{eq:divergence}, gives
\begin{align}
  \norm{\cA_\Phi}_{L^2(\lambda_T\otimes\pi)}^2
  \leq C\left(\norm h_{L^2}^2
  +\norm{\nabla_q\phi_0}_{L^2}^2
  +\norm{\partial_t\nabla_q\psi}_{L^2}^2
  +\norm{\nabla_q^2\psi}_{L^2}^2\right).
  \label{eq:A-Phi-bound-pre}
\end{align}
Indeed, the velocity average of $\cA_\Phi$ is $h$; all terms odd in
$p$ vanish; and the centered quadratic part has variance bounded by
a universal multiple of $\norm{\nabla_q^2\psi}^2$.  Lemma
\ref{lem:divergence} now gives
\begin{equation}\label{eq:A-Phi-bound}
  \norm{\cA_\Phi}_{L^2(\lambda_T\otimes\pi)}
  \leq C B_1(T)\norm h_{L^2}.
\end{equation}
Consequently,
\[
  \abs{\int_{I\times\mathbb{R}^{d}} g\cA_\Phi\,\dd\lambda_T\,\dd\pi}
  \leq C B_1(T)\norm g_{L^2}\norm h_{L^2}.
\]

If $h=0$, the desired estimate follows immediately from
$f=g$ and $B_1(T)\geq1$.  Otherwise, insert the last two bounds in
\eqref{eq:macro-micro-identity} and divide by $\norm h_{L^2}$.  This
controls the macroscopic part.  Finally,
\[
  \norm f_{L^2}
  \leq
  \norm h_{L^2}+\norm g_{L^2},
\]
and the microscopic term is absorbed because $B_1(T)\geq1$.
\end{proof}

\subsection{Proofs of the main results}
\label{sec:energy}

For $\alpha>0$, Proposition~\ref{prop:weak-semigroup} gives the
energy identity below directly for the weak solution.  For
$\alpha=0$, it is provided by Proposition
\ref{prop:kinetic-endpoint}:
\begin{equation}\label{eq:energy}
  \frac{\dd}{\dd t}\norm{f_t}_{L^2(\pi)}^2
  =
  -2\alpha\norm{\nabla_qf_t}_{L^2(\pi)}^2
  -2\gamma\norm{\nabla_pf_t}_{L^2(\pi)}^2.
\end{equation}
Write
\begin{equation}\label{eq:D}
  \cD_{\alpha,\gamma}(f)
  :=
  \alpha\norm{\nabla_qf}_{L^2(\pi)}^2
  +\gamma\norm{\nabla_pf}_{L^2(\pi)}^2.
\end{equation}

\paragraph{Direct coercivity.}

Position diffusion also gives a direct coercive mechanism that is
independent of hypocoercivity.  Tensorization of the position and
Gaussian Poincar\'e inequalities makes this contribution explicit.

\begin{lemma}[Product Poincar\'e bound]\label{lem:direct}
For every centered $f\in H^1(\pi)$,
\begin{equation}\label{eq:product-PI}
  \norm f_{L^2(\pi)}^2
  \leq
  \frac1m\norm{\nabla_qf}_{L^2(\pi)}^2
  +\norm{\nabla_pf}_{L^2(\pi)}^2.
\end{equation}
Consequently,
\begin{equation}\label{eq:direct-rate}
  \norm{S_t^{\alpha,\gamma}f}_{L^2(\pi)}
  \leq
  e^{-\nu_{\mathrm{dir}}t}\norm f_{L^2(\pi)},
  \qquad
  \nu_{\mathrm{dir}}=\min\{\alpha m,\gamma\}.
\end{equation}
\end{lemma}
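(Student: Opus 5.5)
The plan is to prove \eqref{eq:product-PI} by tensorizing the position Poincar\'e inequality with the Gaussian Poincar\'e inequality for $\kappa$, and then to feed it into the energy identity \eqref{eq:energy} via Gr\"onwall.

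\emph{Step 1: the product inequality.} For centered $f\in H^1(\pi)$, use the orthogonal decomposition $f=\PiP f+(\Id-\PiP)f$, where $\PiP f(q)=\int f(q,p)\,\dd\kappa(p)$. Since $\pi(f)=0$, we have $\mu(\PiP f)=0$. Then
\[
\norm f_{L^2(\pi)}^2=\norm{\PiP f}_{L^2(\mu)}^2+\int_{\R^d}\Var_\kappa(f(q,\cdot))\,\dd\mu(q).
\]
Assumption~\ref{ass:PI} bounds the first term by $m^{-1}\norm{\nabla_q\PiP f}_{L^2(\mu)}^2$. Differentiation under the integral (justified for smooth compactly supported $f$) gives $\nabla_q\PiP f=\PiP\nabla_q f$. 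Jensen's inequality then yields $\norm{\nabla_q\PiP f}^2_{L^2(\mu)}\le\norm{\nabla_qf}^2_{L^2(\pi)}$. For the second term, the Gaussian Poincar\'e inequality (constant $1$ for the standard Gaussian $\kappa$), applied for $\mu$-a.e.\ $q$, gives $\Var_\kappa(f(q,\cdot))\le\int|\nabla_pf(q,p)|^2\,\dd\kappa$. Summing the two bounds gives \eqref{eq:product-PI} for smooth compactly supported $f$. The general case $f\in H^1(\pi)$ follows from the density of $C_c^\infty$ in $V=H^1(\pi)$ noted in Section~\ref{sec:functional}; recentering the approximants only changes them by constants.

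\emph{Step 2: decay.} Take $f_0\in L^2_0(\pi)$ and $f_t=S^{\alpha,\gamma}_tf_0$. The mean is preserved (Propositions~\ref{prop:weak-semigroup} and~\ref{prop:kinetic-endpoint}), so $f_t$ stays centered. Set $\nu_{\mathrm{dir}}=\min\{\alpha m,\gamma\}$. Then
\[
\alpha\norm{\nabla_qf}^2+\gamma\norm{\nabla_pf}^2\ge\nu_{\mathrm{dir}}\Big(\tfrac1m\norm{\nabla_qf}^2+\norm{\nabla_pf}^2\Big)\ge\nu_{\mathrm{dir}}\norm f^2,
\]
so $\cD_{\alpha,\gamma}(f_t)\ge\nu_{\mathrm{dir}}\norm{f_t}^2$ for a.e.\ $t$ with $f_t\in V$. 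For $\alpha>0$, $f\in L^2(0,T;V)$ and the integrated energy identity \eqref{eq:weak-energy} holds. It gives $\norm{f_t}^2\le\norm{f_s}^2-2\nu_{\mathrm{dir}}\int_s^t\norm{f_r}^2\,\dd r$, and Gr\"onwall's inequality (in integral form, using the absolute continuity of $t\mapsto\norm{f_t}^2$) yields $\norm{f_t}\le e^{-\nu_{\mathrm{dir}}t}\norm{f_0}$. For $\alpha=0$, $\nu_{\mathrm{dir}}=0$ and the claim is just contractivity, which is \eqref{eq:kinetic-endpoint-energy}.

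\emph{Main obstacle.} The only delicate point is regularity: commuting $\nabla_q$ with $\PiP$, and making sure the Poincar\'e step is applied to functions in $V$. This is handled by proving the inequality on $C_c^\infty$ and extending by density, since both sides of \eqref{eq:product-PI} are continuous in the $H^1(\pi)$ norm.
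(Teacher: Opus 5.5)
Your proposal is correct and follows the same route as the paper: tensorize the position Poincar\'e inequality with the Gaussian one to get $\cD_{\alpha,\gamma}(f)\geq\min\{\alpha m,\gamma\}\norm f_{L^2(\pi)}^2$ for centered $f$. Then combine this with the weak energy identity for $\alpha>0$, and use plain contractivity from the kinetic energy identity at $\alpha=0$. You make explicit what the paper leaves implicit: the $\PiP$ decomposition with Jensen for $\nabla_q\PiP f$, the density extension, and the Gr\"onwall step using mean preservation.
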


\begin{proof}
Tensorize Assumption~\ref{ass:PI} with the Gaussian Poincar\'e
inequality.  Then
\[
  \cD_{\alpha,\gamma}(f)
  \geq
  \min\{\alpha m,\gamma\}\norm f_{L^2(\pi)}^2.
\]
For $\alpha>0$, the conclusion follows from the weak energy identity.
When $\alpha=0$, $\nu_{\mathrm{dir}}=0$, and the asserted estimate
is simply the $L^2$-contractivity following from the kinetic energy
identity; no $q$-regularity is needed at that endpoint.
\end{proof}

\paragraph{The time-averaged hypocoercive bound.}

Fix an interval $I=(s,s+T)$, translated to $(0,T)$, and use the
normalized measure $\lambda_T$.  For the HFHR solution,
\begin{equation}\label{eq:solution-residual}
  \cK_\alpha f
  =
  -\gamma\nabla_p^*\nabla_pf,
\end{equation}
and therefore
\begin{equation}\label{eq:Hminus-bound}
  \norm{\cK_\alpha f}_{
  L^2(\lambda_T\otimes\mu;H_\kappa^{-1})}
  \leq
  \gamma\norm{\nabla_pf}_{L^2(\lambda_T\otimes\pi)}.
\end{equation}
The Gaussian Poincar\'e inequality also gives
\begin{equation}\label{eq:micro-bound}
  \norm{(\Id-\PiP)f}_{L^2(\lambda_T\otimes\pi)}
  \leq
  \norm{\nabla_pf}_{L^2(\lambda_T\otimes\pi)}.
\end{equation}

Combining Proposition~\ref{prop:space-time} with
\eqref{eq:Hminus-bound}--\eqref{eq:micro-bound}, and then applying
weighted Cauchy--Schwarz inequality, yields
\begin{align}
  \norm f_{L^2(\lambda_T\otimes\pi)}
  \leq C\left[
  (B_1+\gamma B_0)
  \norm{\nabla_pf}_{L^2(\lambda_T\otimes\pi)}
  +\alpha B_1
  \norm{\nabla_qf}_{L^2(\lambda_T\otimes\pi)}
  \right],
  \label{eq:interval-first}
\end{align}
and hence
\begin{equation}\label{eq:interval-coercivity}
  \norm f_{L^2(\lambda_T\otimes\pi)}^2
  \leq C K_{\alpha,\gamma}(T)
  \int_I\cD_{\alpha,\gamma}(f_t)\,\dd\lambda_T(t),
\end{equation}
where
\begin{equation}\label{eq:K}
  K_{\alpha,\gamma}(T)
  :=
  \frac{(B_1(T)+\gamma B_0(T))^2}{\gamma}
  +\alpha B_1(T)^2.
\end{equation}

The following elementary bookkeeping lemma records both sides of the
estimate needed in the iteration.  Its lower bound is important: it
makes the prefactor in the resulting exponential estimate universal,
rather than merely finite for each fixed set of parameters.

\begin{lemma}[Constants at the natural time scale]
\label{lem:natural-scale}
Let
\begin{equation}\label{eq:Q-alpha-gamma}
T_*:=m^{-1/2},\qquad A:=\sqrt m+R,
\qquad
  Q_{\alpha,\gamma}
  :=
  \frac{(A+\gamma)^2}{m\gamma}
  +\frac{\alpha A^2}{m}.
\end{equation}
Then, with a universal constant $C>0$,
\begin{equation}\label{eq:natural-scale-ledger}
  B_0(T_*)\leq\frac{C}{\sqrt m},\qquad
  B_1(T_*)\leq C\frac{A}{\sqrt m},\qquad
  4T_*\leq K_{\alpha,\gamma}(T_*)
  \leq C Q_{\alpha,\gamma}.
\end{equation}
\end{lemma}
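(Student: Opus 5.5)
The plan is to substitute $T_*=m^{-1/2}$, so that $\sqrt m\,T_*=1$, and read off each bound from the explicit formulas \eqref{eq:B0}, \eqref{eq:B1} and \eqref{eq:K}. Set $\theta:=1-e^{-1}\in(0,1)$, a universal constant. Then
\[
  B_0(T_*)=\frac{1}{\sqrt m\,\theta}+\frac{1}{\sqrt m}=\frac{1+\theta^{-1}}{\sqrt m},
\]
which gives the first bound directly. For $B_1$ we have
\[
  B_1(T_*)=1+\frac{R}{\sqrt m}+\theta^{-2}+\frac{R}{\sqrt m\,\theta^{2}}
  =(1+\theta^{-2})\Bigl(1+\frac{R}{\sqrt m}\Bigr)
  =(1+\theta^{-2})\frac{A}{\sqrt m},
\]
so the second bound holds with $C=1+\theta^{-2}$. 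Both $B_0(T_*)$ and $B_1(T_*)$ are thus universal multiples of $1/\sqrt m$ and $A/\sqrt m$ respectively, and in addition $B_1(T_*)\geq 1$.

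For the upper bound on $K_{\alpha,\gamma}(T_*)$, insert these two estimates into \eqref{eq:K}. The first step is
\[
  B_1(T_*)+\gamma B_0(T_*)\leq C\,\frac{A+\gamma}{\sqrt m},
\]
which gives
\[
  \frac{(B_1+\gamma B_0)^2}{\gamma}\leq C^2\,\frac{(A+\gamma)^2}{m\gamma}.
\]
The second term satisfies
\[
  \alpha B_1^2\leq C^2\,\frac{\alpha A^2}{m}.
\]
Adding the two gives $K_{\alpha,\gamma}(T_*)\leq C' Q_{\alpha,\gamma}$.

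The lower bound $4T_*\leq K_{\alpha,\gamma}(T_*)$ is the only point needing a small argument, because it must hold uniformly in $\gamma$. Drop the nonnegative term $\alpha B_1^2$ and use the exact lower bounds $B_1(T_*)\geq 1+\theta^{-2}\geq 1$ and $B_0(T_*)\geq (1+\theta^{-1})/\sqrt m\geq 2/\sqrt m$. The AM--GM inequality $(x+y)^2\geq 4xy$ then yields
\[
  \frac{(B_1+\gamma B_0)^2}{\gamma}\geq \frac{4B_1\gamma B_0}{\gamma}=4B_1B_0\geq \frac{4\cdot 2}{\sqrt m}\geq 4T_*.
\]
This is uniform in $\alpha,\gamma$, which is precisely the feature emphasized before the lemma. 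No real obstacle is expected; the only care needed is to keep the constants universal, which holds since $\theta$ is fixed.
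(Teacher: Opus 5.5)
Your proposal is correct and follows essentially the same route as the paper. Like the paper, you evaluate $B_0(T_*)$ and $B_1(T_*)$ exactly with $1-e^{-1}$, insert them into $K_{\alpha,\gamma}$ for the upper bound, and use $(x+y)^2\geq 4xy$ together with $B_1\geq 1$ for the lower bound. The only cosmetic difference is that the paper proves $4B_1(T)B_0(T)\geq 4T$ for every $T$ via $B_0(T)\geq T$, whereas you specialize to $T_*$ and use $B_0(T_*)\geq 2/\sqrt m$.
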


\begin{proof}
At $T=T_*$, one has
$\eta_*:=1-e^{-\sqrt mT_*}=1-e^{-1}$, and in fact
\[
  B_0(T_*)=\frac{1+\eta_*^{-1}}{\sqrt m},
  \qquad
  B_1(T_*)=(1+\eta_*^{-2})\frac{A}{\sqrt m}.
\]
This proves the first two estimates and implies
\[
  B_1(T_*)+\gamma B_0(T_*)
  \leq C\frac{A+\gamma}{\sqrt m},
\]
which gives the upper bound for $K_{\alpha,\gamma}(T_*)$.
For the lower bound, the arithmetic--geometric mean inequality gives,
for every $T>0$,
\[
  \frac{(B_1(T)+\gamma B_0(T))^2}{\gamma}
  \geq4B_1(T)B_0(T)\geq4T,
\]
because $B_1(T)\geq1$ and $B_0(T)\geq T$.  This proves
\eqref{eq:natural-scale-ledger}.
\end{proof}

The interval coercivity estimate can now be combined with energy
dissipation and iterated over intervals of the natural length
$m^{-1/2}$.  The resulting estimate records the purely variational
decay rate.

\begin{proposition}[Variational rate]\label{prop:variational-rate}
There are universal constants $c,C>0$ such that, for every
$f\in L_0^2(\pi)$,
\begin{equation}\label{eq:variational-decay}
  \norm{S_t^{\alpha,\gamma}f}_{L^2(\pi)}
  \leq
  C e^{-\nu_{\mathrm{var}}t}\norm f_{L^2(\pi)},
\end{equation}
with
\begin{equation}\label{eq:variational-rate}
  \nu_{\mathrm{var}}
  \geq
  c\left[
  \frac{(\sqrt m+R+\gamma)^2}{m\gamma}
  +
  \frac{\alpha(\sqrt m+R)^2}{m}
  \right]^{-1}.
\end{equation}
\end{proposition}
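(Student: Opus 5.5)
The plan is the standard energy-plus-interval-coercivity iteration at the natural time scale $T_*=m^{-1/2}$. Fix $f\in L_0^2(\pi)$ and set $f_t=S_t^{\alpha,\gamma}f$, which stays centered by mean preservation (Propositions~\ref{prop:weak-semigroup} and \ref{prop:kinetic-endpoint}). For any $s\geq0$, I will apply \eqref{eq:interval-coercivity} on $I=(s,s+T_*)$. The translated solution is an element of $\mathbb H_\alpha(I)$ satisfying \eqref{eq:solution-residual}, so Proposition~\ref{prop:space-time} applies. This gives
\[
  \frac1{T_*}\int_s^{s+T_*}\norm{f_t}^2\,\dd t
  \leq C K_{\alpha,\gamma}(T_*)\frac1{T_*}\int_s^{s+T_*}\cD_{\alpha,\gamma}(f_t)\,\dd t .
\]
Integrating the energy identity \eqref{eq:energy} over the interval turns the right-hand side into $\tfrac{C K}{2T_*}\big(\norm{f_s}^2-\norm{f_{s+T_*}}^2\big)$. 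Contractivity gives $\norm{f_t}^2\geq\norm{f_{s+T_*}}^2$ on the interval, so the left-hand side is at least $\norm{f_{s+T_*}}^2$. Combining these yields
\[
  \norm{f_{s+T_*}}^2\leq \frac{CK/(2T_*)}{1+CK/(2T_*)}\norm{f_s}^2 = (1-\theta)\norm{f_s}^2,
  \qquad \theta:=\frac{1}{1+CK_{\alpha,\gamma}(T_*)/(2T_*)}.
\]

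Next I iterate over consecutive intervals. For $t\in[nT_*,(n+1)T_*)$, this and contractivity give $\norm{f_t}^2\leq(1-\theta)^n\norm f^2\leq (1-\theta)^{-1}e^{-\theta t/T_*}\norm f^2$. The constant must not depend on parameters. Here the lower bound $K_{\alpha,\gamma}(T_*)\geq 4T_*$ from Lemma~\ref{lem:natural-scale} is decisive: it gives $CK/(2T_*)\geq 2C$, so $1-\theta\geq 2C/(1+2C)$ is bounded below by a universal number. Hence $(1-\theta)^{-1/2}$ is a universal prefactor, which proves \eqref{eq:variational-decay} with rate $\theta/(2T_*)$.

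To identify the rate, use the same lower bound on $K$: $CK/(2T_*)\geq 2C$ implies $1+CK/(2T_*)\leq C'K/T_*$. Therefore $\theta/(2T_*)\geq c/K_{\alpha,\gamma}(T_*)\geq c/(C Q_{\alpha,\gamma})$ by the upper bound in \eqref{eq:natural-scale-ledger}. Substituting $A=\sqrt m+R$ into $Q_{\alpha,\gamma}$ gives exactly \eqref{eq:variational-rate}.

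The main technical point I expect is the legitimacy of applying Proposition~\ref{prop:space-time} to the semigroup solution on a translated interval, including at $\alpha=0$. There one needs $f\in\mathbb H_0$ with $\cK_0 f$ in $L^2(H_\kappa^{-1})$, which Proposition~\ref{prop:kinetic-endpoint} supplies. One also needs the integrated energy identity rather than merely an inequality. I would handle this by citing \eqref{eq:weak-energy} and \eqref{eq:kinetic-endpoint-energy} applied from time $s$, using the semigroup property. The rest is bookkeeping.
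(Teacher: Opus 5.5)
Your proposal is correct and follows the paper's proof essentially step for step. You integrate the energy identity over $(s,s+T_*)$, bound $\|f_{s+T_*}\|^2$ by the time average using monotonicity, and invoke \eqref{eq:interval-coercivity} to get a one-step contraction, which you then iterate. Both sides of Lemma~\ref{lem:natural-scale} are used exactly as in the paper: the lower bound $K_{\alpha,\gamma}(T_*)\ge 4T_*$ makes the prefactor universal, and the upper bound converts $c/K_{\alpha,\gamma}(T_*)$ into $c/Q_{\alpha,\gamma}$. The only difference is cosmetic: you write the contraction factor as $1-\theta$ and use $(1-\theta)^n\le e^{-n\theta}$, whereas the paper writes it as $(1+x_{T_*})^{-1}$ and uses $\log(1+x)\ge x/(1+C_0)$ for bounded $x$.
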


\begin{proof}
Let $E(t):=\norm{f_t}_{L^2(\pi)}^2$.  Integrating
\eqref{eq:energy} over $I=(s,s+T)$ gives
\begin{equation}\label{eq:interval-energy}
  E(s)-E(s+T)
  =
  2T\int_I\cD_{\alpha,\gamma}(f_t)\,\dd\lambda_T(t).
\end{equation}
Since $E$ is nonincreasing,
\[
  E(s+T)
  \leq\norm f_{L^2(\lambda_T\otimes\pi)}^2.
\]
Using \eqref{eq:interval-coercivity} and
\eqref{eq:interval-energy},
\[
  E(s+T)
  \leq
  \frac{C_{\mathrm{st}}K_{\alpha,\gamma}(T)}{2T}
  \left(E(s)-E(s+T)\right).
\]
Here $C_{\mathrm{st}}$ denotes one fixed universal constant for which
\eqref{eq:interval-coercivity} holds.  Set
\[
  x_T:=\frac{2T}{C_{\mathrm{st}}K_{\alpha,\gamma}(T)}.
\]
After moving the last occurrence of $E(s+T)$ to the left, the
previous inequality becomes the exact one-step contraction
\begin{equation}\label{eq:one-step-contraction}
  E(s+T)\leq(1+x_T)^{-1}E(s).
\end{equation}
Consequently, if $k=\lfloor t/T\rfloor$, monotonicity of $E$ and
iteration of \eqref{eq:one-step-contraction} give
\begin{align}
  E(t)
 \leq(1+x_T)^{-k}E(0)
\leq(1+x_T)
  \exp\left[-\frac{t}{T}\log(1+x_T)\right]E(0).
  \label{eq:continuous-from-discrete}
\end{align}

Choose $T=T_*=m^{-1/2}$.  Lemma~\ref{lem:natural-scale} implies
$0<x_{T_*}\leq C_0$ for a universal $C_0$.  Hence both the
prefactor $(1+x_{T_*})^{1/2}$ in the norm estimate and the constant
in
\[
  \log(1+x_{T_*})
  \geq\frac{x_{T_*}}{1+C_0}
\]
are universal.  Taking square roots in
\eqref{eq:continuous-from-discrete} therefore gives a decay exponent
\[
  \nu_{\mathrm{var}}
  =\frac1{2T_*}\log(1+x_{T_*})
  \geq\frac{c}{K_{\alpha,\gamma}(T_*)}
  \geq\frac{c}{Q_{\alpha,\gamma}}.
\]
Inserting \eqref{eq:Q-alpha-gamma} proves
\eqref{eq:variational-rate} and shows at the same time that the
constant $C$ in \eqref{eq:variational-decay} is independent of all
parameters.
\end{proof}

\paragraph{Combining the two mechanisms.}

The variational estimate retains the ULD contribution but deteriorates
when $\alpha$ is large, whereas direct coercivity improves with
$\alpha$.  The following elementary lemma combines the two
complementary bounds.

\begin{lemma}[Rate algebra]\label{lem:algebra}
Let $A\geq\sqrt m$, $\alpha\geq0$, and $\gamma>0$.  Set
\[
  \nu_0:=\frac{m\gamma}{(A+\gamma)^2},\qquad
  \nu_1:=
  \left(\nu_0^{-1}+\frac{\alpha A^2}{m}\right)^{-1},
  \qquad
  \nu_2:=\min\{\alpha m,\gamma\}.
\]
Then, for a universal $c>0$,
\begin{equation}\label{eq:algebra}
  \max\{\nu_1,\nu_2\}
  \geq
  c\min\{\gamma,\alpha m+\nu_0\}.
\end{equation}
\end{lemma}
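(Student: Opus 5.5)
We argue by cases, comparing $\alpha m$ with $\nu_0$, and use only elementary inequalities. Two preliminary facts will be used throughout. First, $\nu_0\le \gamma$ always holds, since $m\le A^2\le (A+\gamma)^2$. Second, $\nu_0\le m\gamma/(2A\gamma)\le \sqrt m/2$ by AM--GM. The target right-hand side is $\min\{\gamma,\alpha m+\nu_0\}$, which is comparable, up to a factor $2$, to $\min\{\gamma,\max\{\alpha m,\nu_0\}\}$.

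\emph{Case 1: $\alpha m\le \nu_0$.} The target is then at most $2\nu_0$. We show $\nu_1\gtrsim\nu_0$ by bounding the extra term: $\alpha A^2/m\le \nu_0A^2/m^2=\gamma A^2/(m(A+\gamma)^2)$. It suffices to compare this with $\nu_0^{-1}=(A+\gamma)^2/(m\gamma)$, which amounts to checking $\gamma^2A^2\le (A+\gamma)^4$. This is true since $\gamma A\le (A+\gamma)^2$. Hence $\nu_1\ge \nu_0/2$, and the target is at most $2\nu_0\le 4\nu_1$.

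\emph{Case 2: $\alpha m>\nu_0$.} Here $\nu_2=\min\{\alpha m,\gamma\}$, and the target satisfies $\min\{\gamma,\alpha m+\nu_0\}\le \min\{\gamma,2\alpha m\}\le 2\nu_2$. So $c=1/4$ works in both cases.

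The only step needing care is the comparison in Case 1. It rests on the algebra $\gamma A^2\cdot m\gamma\le m(A+\gamma)^4$, i.e. $\gamma A\le (A+\gamma)^2$, which is immediate from expanding the square. Note that the hypothesis $A\ge\sqrt m$ is used only to guarantee $\nu_0\le\gamma$, and even that fact is not strictly needed in the case analysis above, since the minimum with $\gamma$ is taken on the right-hand side anyway.
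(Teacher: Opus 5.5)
Your proof is correct and is essentially the paper's two-case argument, with the same constant $c=1/4$. The only difference is where you split: you compare $\alpha m$ with $\nu_0$, whereas the paper compares $\alpha A^2/m$ with $\nu_0^{-1}$. As a result you need only $\gamma A\le(A+\gamma)^2$ rather than $(A+\gamma)^2\ge 4A\gamma$, and you bypass both the auxiliary max--min inequality and the bound $\nu_0\le\gamma$. This is why, as you note, the hypothesis $A\ge\sqrt m$ is not actually needed.
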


\begin{proof}
First, $\nu_0\leq\gamma$.  If
$\alpha A^2/m\leq\nu_0^{-1}$, then
$\nu_1\geq\nu_0/2$.  Otherwise,
\[
  \frac{\alpha m}{\nu_0}
  >
  \frac{m^2}{A^2\nu_0^2}
  =
  \frac{(A+\gamma)^4}{A^2\gamma^2}
  \geq16.
\]
The last inequality is simply
$(A+\gamma)^2\geq4A\gamma$.  Hence in the second case
$\alpha m\geq16\nu_0$, and therefore
$\nu_2=\min\{\alpha m,\gamma\}\geq\nu_0$.  In both cases it follows
that
\[
  \max\{\nu_1,\nu_2\}
  \geq\frac12
  \max\{\nu_0,\min(\alpha m,\gamma)\}.
\]
Finally, for any $a\leq g$ and $b\geq0$,
\[
  \max\{a,\min(b,g)\}
  \geq\frac12\min\{g,a+b\}.
\]
Apply this elementary inequality with
$a=\nu_0$, $b=\alpha m$, and $g=\gamma$ to obtain
\eqref{eq:algebra}, for instance with $c=1/4$.
\end{proof}

Finally, we are ready to prove Theorem~\ref{thm:main}.

\begin{proof}[Proof of Theorem~\ref{thm:main}]
Set
\[
  A:=\sqrt m+R,
  \qquad
  \nu_0:=\frac{m\gamma}{(A+\gamma)^2},
\]
and define
\begin{equation}\label{eq:main-proof-rates}
  \nu_1
  :=\left(\nu_0^{-1}+\frac{\alpha A^2}{m}\right)^{-1},
  \qquad
  \nu_2:=\min\{\alpha m,\gamma\}.
\end{equation}

Suppose first that $\alpha>0$.  Proposition
\ref{prop:weak-semigroup} gives a mean-preserving weak solution in the
energy class, and \eqref{eq:weak-residual} shows that its restriction
to every finite time interval belongs to
$\mathbb H_\alpha(I)$.  Thus Proposition
\ref{prop:variational-rate} and Lemma~\ref{lem:direct} apply directly
and give universal constants $c_1,C_1>0$ such that
\begin{equation}\label{eq:main-proof-variational}
  \norm{S_t^{\alpha,\gamma}f}_{L^2(\pi)}
  \leq C_1e^{-c_1\nu_1t}\norm f_{L^2(\pi)},
\end{equation}
and
\begin{equation}\label{eq:main-proof-direct}
  \norm{S_t^{\alpha,\gamma}f}_{L^2(\pi)}
  \leq e^{-\nu_2t}\norm f_{L^2(\pi)}.
\end{equation}
After decreasing $c_1$ so that $c_1\leq1$ and increasing
$C_1$ so that $C_1\geq1$, selecting
\eqref{eq:main-proof-variational} when $\nu_1\geq\nu_2$ and
\eqref{eq:main-proof-direct} otherwise yields
\begin{equation}\label{eq:main-proof-best-rate}
  \norm{S_t^{\alpha,\gamma}f}_{L^2(\pi)}
  \leq
  C_1e^{-c_1\max\{\nu_1,\nu_2\}t}\norm f_{L^2(\pi)}.
\end{equation}

If $\alpha=0$, Proposition~\ref{prop:kinetic-endpoint} supplies the
mean-preserving solution in $\mathbb H_0(I)$, together with
\eqref{eq:kinetic-endpoint-equation} and
\eqref{eq:kinetic-endpoint-energy}.  Proposition
\ref{prop:variational-rate} therefore again gives
\eqref{eq:main-proof-variational}; meanwhile $\nu_2=0$, and
Lemma~\ref{lem:direct} supplies the contractive version of
\eqref{eq:main-proof-direct}.  Hence
\eqref{eq:main-proof-best-rate} remains valid at the kinetic endpoint.

Finally, Lemma~\ref{lem:algebra}, applied with
$A=\sqrt m+R$, gives
\[
  \max\{\nu_1,\nu_2\}
  \geq
  c_2\min\left\{
  \gamma,\,
  \alpha m+
  \frac{m\gamma}{(\sqrt m+R+\gamma)^2}
  \right\}
\]
for a universal $c_2>0$.  Inserting this estimate into
\eqref{eq:main-proof-best-rate} proves
\eqref{eq:main-decay}--\eqref{eq:main-rate}.  
This completes the proof.
\end{proof}

\section{Discrete-Time Analysis}\label{sec:discrete}

To implement HFHR dynamics \eqref{eq:HFHR-SDE} in practice, one relies
on a discrete-time algorithm that is based on a discretization
scheme of approximating the continuous-time dynamics \eqref{eq:HFHR-SDE}.

Let $h>0$, $t_k=kh$, and set
$g_k:=\nabla U(\bar q_{t_k})$.  On
$[t_k,t_{k+1})$, define the continuous interpolation
\begin{equation}\label{eq:discrete-interpolation}
\begin{cases}
  \dd\bar q_t=(\bar p_t-\alpha g_k)\,\dd t
  +\sqrt{2\alpha}\,\dd W_t^q,\\
  \dd\bar p_t=(-g_k-\gamma\bar p_t)\,\dd t
  +\sqrt{2\gamma}\,\dd W_t^p.
\end{cases}
\end{equation}
Thus the momentum appearing in the position equation is not frozen.
Writing
\begin{align}\label{eqn:abc}
  a:=e^{-\gamma h},\qquad
  b:=\frac{1-a}{\gamma},\qquad
  c:=\frac{h-b}{\gamma},
\end{align}
the grid-point chain generated by \eqref{eq:discrete-interpolation} is exactly the HFHRMC algorithm given in \eqref{eqn:HFHRMC}:
\begin{equation}\label{eq:frozen-force-scheme}
\begin{split}
  P_{k+1}&=aP_k-b\nabla U(Q_{k})+\eta_k^p,\\
  Q_{k+1}&=Q_k+bP_k-(c+\alpha h)\nabla U(Q_k)+\eta_k^q,
\end{split}
\end{equation}
such that $(\bar{p}_{kh},\bar{q}_{kh})=(P_{k},Q_{k})$ in distribution for every $k$, where $(\eta_k^q,\eta_k^p)$ are independent across steps, centered
joint Gaussian vectors with block covariances
\begin{align}
  \mathbb E\left[\eta_k^p\left(\eta_k^p\right)^\top\right]
  &=(1-a^2)\Id_d,
  \label{eq:noise-pp}\\
  \mathbb E\left[\eta_k^q\left(\eta_k^p\right)^\top\right]
  &=\frac{(1-a)^2}{\gamma}\Id_d,
  \label{eq:noise-qp}\\
  \mathbb E\left[\eta_k^q\left(\eta_k^q\right)^\top\right]
  &=\left[
    2\alpha h+\frac2\gamma
    \left(h-\frac{2(1-a)}\gamma
    +\frac{1-a^2}{2\gamma}\right)
  \right]\Id_d.
  \label{eq:noise-qq}
\end{align}
Consequently, \eqref{eq:frozen-force-scheme} is directly implementable
with one new evaluation of $\nabla U$ per iteration. This is the exact
linear frozen-force step; the corresponding frozen-gradient HFHR scheme
appears as \cite[Algorithm~2]{cortild-delplancke-oudjane-peypouquet-2025}.

Throughout this section we impose the following additional regularity,
which is separate from the weighted derivative bounds used in the
continuous-time analysis.

\begin{assumption}[Smoothness for discretization]
\label{ass:discrete-smoothness}
The gradient $\nabla U$ is globally $L$-Lipschitz, and the initial
law $\rho_0$ satisfies
\[
  \int_{\R^{2d}}\left(\abs q^2+\abs p^2\right)\,\dd\rho_0(q,p)<\infty.
\]
\end{assumption}

\subsection{Path-space KL discretization error}

The path-space argument below uses the SDE realization of the exact
dynamics, whereas the continuous-time decay theorem was formulated for
the semigroups constructed in Section~\ref{sec:functional}.  The next
lemma records that these are the same objects, including at the
degenerate endpoint $\alpha=0$.

\begin{lemma}[Identification of the exact SDE semigroup]
\label{lem:SDE-semigroup-identification}
Under Assumption~\ref{ass:discrete-smoothness}, let
$(\mathsf T_t^{\alpha,\gamma})_{t\geq0}$ be the backward transition
semigroup of the strong solution of \eqref{eq:HFHR-SDE}.  Then
\begin{equation}\label{eq:SDE-semigroup-identification}
  \mathsf T_t^{\alpha,\gamma}=S_t^{\alpha,\gamma}
  \qquad\text{on }L^2(\pi)
\end{equation}
for every $\alpha\geq0$ and $\gamma>0$.  Consequently, if
$\rho_0=h_0\pi$ with $h_0\in L^2(\pi)$, the exact SDE law at time
$t$ is $(P_t^{\alpha,\gamma}h_0)\pi$.
\end{lemma}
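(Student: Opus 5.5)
The plan is to show that the SDE transition semigroup $\mathsf T_t^{\alpha,\gamma}$ is a strongly continuous contraction semigroup on $L^2(\pi)$. Its generator then extends $\cL_{\alpha,\gamma}$ on $C_c^\infty$, and that extension turns out to be the same one as for $S_t^{\alpha,\gamma}$.

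\textbf{Step 1: $\mathsf T$ on $L^2(\pi)$.} Under Assumption~\ref{ass:discrete-smoothness} the drift of \eqref{eq:HFHR-SDE} is globally Lipschitz, so there is a unique non-exploding strong solution. Itô's formula applied to $\varphi\in C_c^\infty(\R^{2d})$ gives the martingale problem
\[
\mathsf T_t\varphi-\varphi=\int_0^t\mathsf T_s\cL_{\alpha,\gamma}\varphi\,\dd s .
\]
Integrating this against $\pi$ and using $\pi(\cL_{\alpha,\gamma}\varphi)=0$ shows that $\pi$ is invariant. (The Hamiltonian part is antisymmetric and the diffusion parts are in divergence form.) Jensen's inequality then makes $\mathsf T_t$ an $L^2(\pi)$ contraction. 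Strong continuity follows by approximating with $C_c^\infty$ functions, using the pathwise continuity of the SDE together with dominated convergence.

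\textbf{Step 2: identification via uniqueness of the weak Cauchy problem.} Fix $\varphi\in C_c^\infty$ and set $u_t:=\mathsf T_t\varphi$.
\begin{itemize}
\item For $\alpha>0$: I will show $u\in L^2(0,T;V)$ and that $u$ solves \eqref{eq:weak-HFHR}. Uniqueness in Proposition~\ref{prop:weak-semigroup} then gives $u_t=S_t^{\alpha,\gamma}\varphi$. Two inputs are needed:
  \begin{itemize}
  \item Regularity. Because $\nabla U$ is Lipschitz, the flow is differentiable, so $u_t$ is $C^1$ with bounded gradient, and hence $u_t\in V$.
  \item The weak equation. Take the backward Kolmogorov identity $\partial_tu=\cL_{\alpha,\gamma}u$ in the sense of distributions, pair it with $g\in C_c^\infty$, integrate by parts (valid thanks to the bounded gradients), and use density of $C_c^\infty$ in $V$.
  \end{itemize}
\item For $\alpha=0$: the kinetic Kolmogorov equation is only hypoelliptic. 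Rather than verify membership in $\mathbb H_0$ directly, I will use vanishing position diffusion, mirroring Proposition~\ref{prop:kinetic-endpoint}. The synchronous coupling estimate $\mathbb E|X_t^\varepsilon-X_t^0|^2\le C_T\varepsilon$ holds because the drifts differ by $\varepsilon\nabla U$, which grows linearly, and the noise differs by $\sqrt{2\varepsilon}W^q$. It gives $\mathsf T_t^{\varepsilon,\gamma}\varphi\to\mathsf T_t^{0,\gamma}\varphi$ pointwise and boundedly, hence in $L^2(\pi)$. The case $\alpha>0$ gives $\mathsf T^{\varepsilon,\gamma}=S^{\varepsilon,\gamma}$. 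The proof of Proposition~\ref{prop:kinetic-endpoint} gives $S^{\varepsilon,\gamma}_t f_0\rightharpoonup S^{0,\gamma}_tf_0$ for the whole family. Combining the two yields $\mathsf T_t^{0,\gamma}\varphi=S_t^{0,\gamma}\varphi$.
\end{itemize}

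\textbf{Step 3: extension and forward law.} Both semigroups are bounded and agree on the dense set $C_c^\infty$, so they coincide on $L^2(\pi)$. For the law: if $\rho_0=h_0\pi$, then $\mathbb E\varphi(X_t)=\ip{h_0}{\mathsf T_t\varphi}=\ip{P_t^{\alpha,\gamma}h_0}{\varphi}$, which identifies the time-$t$ law as $(P_t^{\alpha,\gamma}h_0)\pi$.

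\textbf{Main obstacle.} I expect the hardest part to be Step 2 for $\alpha>0$, namely justifying that $u$ is a $V$-valued weak solution with $\partial_t u\in L^2(V^*)$ when $U$ is merely $C^2$ with Lipschitz gradient. If the direct argument stalls, the fallback is to mollify $U$, apply classical Kolmogorov theory to the smoothed problem, and pass to the limit using stability of both semigroups. The SDE side converges by coupling. The form side converges because the weak formulation is stable under $L^2$-convergence of the drift, and $\nabla U_n\to\nabla U$ locally uniformly with uniform linear growth.
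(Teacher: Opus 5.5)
Your $\alpha=0$ step is exactly the paper's argument: synchronous coupling as the position diffusion vanishes, pointwise bounded and hence strong $L^2(\pi)$ convergence of $\mathsf T_t^{\varepsilon,\gamma}\varphi$, the weak limit $S_t^{\varepsilon,\gamma}\varphi\rightharpoonup S_t^{0,\gamma}\varphi$ from Proposition~\ref{prop:kinetic-endpoint}, then density and adjoints. The routes differ only at $\alpha>0$, and yours is correct in outline.

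The paper uses uniqueness on the probabilistic side. The regular non-symmetric Dirichlet form of Remark~\ref{rem:diffusion-realization} yields, via Ma--R\"ockner, a diffusion solving the $\cL_{\alpha,\gamma}$ martingale problem on $C_c^\infty$. Well-posedness of that martingale problem for the globally Lipschitz SDE then identifies this diffusion with the SDE.

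You use uniqueness on the analytic side, placing $u_t=\mathsf T_t\varphi$ in the variational class of Proposition~\ref{prop:weak-semigroup}. This avoids the Hunt-process machinery. The regularity half is also easier than you state. The noise is additive, so synchronous coupling gives $\abs{X_t^x-X_t^y}\leq e^{Ct}\abs{x-y}$. Hence $u_t$ is Lipschitz uniformly on $[0,T]$ and lies in $V$ without any differentiability of the flow.

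The real cost is the weak backward equation, which you assert but do not prove. It does not follow from the martingale problem, which only gives $\partial_tu=\mathsf T_t\cL_{\alpha,\gamma}\varphi$ rather than $\cL_{\alpha,\gamma}u$. Your mollification fallback settles it without changing $\pi$:
\begin{itemize}
\item Take $U_n=U*\rho_n$ with $\rho_n$ supported in the ball of radius $1/n$. Then $\sup\abs{\nabla U_n-\nabla U}\leq L/n$.
\item The classical equation for $u^n$, paired with $g\in C_c^\infty$ in the original $L^2(\pi)$, gives $\frac{\dd}{\dd t}\ip{u_t^n}{g}=-\mathfrak a_{\alpha,\gamma}(u_t^n,g)+\mathcal O(1/n)$.
\item Move the derivatives onto $g$. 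The $n$-uniform Lipschitz bound and the pointwise convergence $u_t^n\to u_t$ (again by coupling) then pass the identity to the limit.
\item Density of $C_c^\infty$ in $V$ gives $\partial_tu\in L^2(0,T;V^*)$, and the uniqueness of Proposition~\ref{prop:weak-semigroup} applies.
\end{itemize}

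One correction in Step~1. The identity $\pi(\cL_{\alpha,\gamma}\varphi)=0$ is only infinitesimal invariance. Using it to conclude from $\pi(\mathsf T_t\varphi)-\pi(\varphi)=\int_0^t\pi(\mathsf T_s\cL_{\alpha,\gamma}\varphi)\,\dd s$ already presupposes $\pi\mathsf T_s=\pi$, so the argument is circular as written. You have three ways to fix it:
\begin{itemize}
\item cite the invariance of $\pi$ quoted in the introduction;
\item invoke Echeverr\'{\i}a's theorem for the well-posed martingale problem;
\item deduce invariance afterwards from $\mathsf T_t\varphi=S_t\varphi$ and mean preservation of $S_t$.
\end{itemize}
The third option works because Step~2 needs only pointwise bounded convergence $\mathsf T_t\varphi\to\varphi$ for the initial trace. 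Boundedness of $\mathsf T_t$ on $L^2(\pi)$ is first used in Step~3, after the identification on $C_c^\infty$.
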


\begin{proof}
For $\alpha>0$, the identification follows from
Remark~\ref{rem:diffusion-realization}; global Lipschitz continuity of
the drift also gives uniqueness of the corresponding martingale
problem.

It remains to identify the endpoint obtained by vanishing position
diffusion.  Fix $x=(q,p)$, and construct on the same probability
space the solutions $X^\varepsilon$ and $X^0$ of
\eqref{eq:HFHR-SDE}, starting from $x$, with parameters
$\alpha=\varepsilon\in(0,1]$ and $\alpha=0$, and driven by the same
Brownian motions.  Global Lipschitz continuity and linear growth first
give
\[
  \mathbb E_x\sup_{0\leq s\leq T}\abs{X_s^\varepsilon}^2
  \leq C_{T,L,\gamma}
  \left(d+\abs x^2+\abs{\nabla U(0)}^2\right).
\]
Subtracting the equations in \eqref{eq:HFHR-SDE} for
$X^\varepsilon$ and $X^0$, corresponding respectively to
$\alpha=\varepsilon$ and $\alpha=0$, and then applying the
Cauchy--Schwarz and Burkholder--Davis--Gundy inequalities, gives
\begin{align*}
  \mathbb E_x\sup_{0\leq s\leq t}
  \abs{X_s^\varepsilon-X_s^0}^2
  \leq C_{T,L,\gamma}\int_0^t
  \mathbb E_x\sup_{0\leq r\leq s}
  \abs{X_r^\varepsilon-X_r^0}^2\,\dd s
  +C_{T,L,\gamma}\varepsilon
  \left(d+\abs x^2+\abs{\nabla U(0)}^2\right).
\end{align*}
Gr\"{o}nwall's inequality therefore gives, for every finite $T$,
\begin{equation}\label{eq:SDE-vanishing-alpha}
  \mathbb E_x\sup_{0\leq s\leq T}
  \abs{X_s^\varepsilon-X_s^0}^2
  \leq
  C_{T,L,\gamma}\varepsilon
  \left(d+\abs x^2+\abs{\nabla U(0)}^2\right).
\end{equation}
It follows from \eqref{eq:SDE-vanishing-alpha} that, for every fixed
$t\geq0$, $x\in\R^{2d}$, and
$\varphi\in C_c^\infty(\R^{2d})$, as $\varepsilon \rightarrow 0$,
\[
  \mathsf T_t^{\varepsilon,\gamma}\varphi(x)
  \longrightarrow
  \mathsf T_t^{0,\gamma}\varphi(x).
\]
The functions are uniformly bounded by $\norm\varphi_\infty$, so
the convergence also holds strongly in $L^2(\pi)$.

For $\varepsilon>0$, the first part of the proof gives
$\mathsf T_t^{\varepsilon,\gamma}=S_t^{\varepsilon,\gamma}$.
The vanishing-diffusion construction in Proposition
\ref{prop:kinetic-endpoint} gives
$S_t^{\varepsilon,\gamma}\varphi\rightharpoonup
S_t^{0,\gamma}\varphi$ in $L^2(\pi)$.  Comparing this weak limit
with the strong limit above yields
$\mathsf T_t^{0,\gamma}\varphi=S_t^{0,\gamma}\varphi$.
Since $C_c^\infty(\R^{2d})$ is dense in $L^2(\pi)$ and both
semigroups are contractions, the equality extends to all of
$L^2(\pi)$.  Taking adjoints proves the final assertion.
\end{proof}

The following entropy estimate will replace a time-uniform moment
assumption.  It uses only the Gaussian momentum law and global
smoothness of the potential.

\begin{lemma}[Entropy control of the kinetic moments]
\label{lem:entropy-moments}
Suppose $\nabla U$ is globally $L$-Lipschitz.  There is a universal
constant $C_0>0$ such that every probability measure
$\nu\ll\pi$ satisfies
\begin{equation}\label{eq:entropy-moment-bound}
  \int_{\R^{2d}}
  \left(\abs p^2+\abs{\nabla U(q)}^2\right)\,\dd\nu
  \leq
  C_0(1+L)\left[d+\mathrm{KL}(\nu\Vert\pi)\right].
\end{equation}
\end{lemma}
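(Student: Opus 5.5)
Our plan is to use the Donsker--Varadhan variational formula
\[
  \int F\,\dd\nu\le \mathrm{KL}(\nu\Vert\pi)+\log\int e^{F}\,\dd\pi ,
\]
applied with $F=\lambda\left(\abs p^2+\abs{\nabla U(q)}^2\right)$ for a small universal multiple $\lambda$ adapted to $L$. Since $\pi=\mu\otimes\kappa$ is a product and $F$ splits additively, the exponential moment factorizes into a $p$-factor and a $q$-factor. We then bound the two factors separately.

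\textbf{The $p$-factor.} For $\lambda\le 1/4$ the Gaussian computation gives
\[
  \int e^{\lambda\abs p^2}\,\dd\kappa=(1-2\lambda)^{-d/2}\le e^{Cd\lambda}\le e^{Cd}.
\]

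\textbf{The $q$-factor.} This is the main step. We want a dimension-linear bound
\[
  \log\int e^{\lambda\abs{\nabla U}^2}\,\dd\mu\le C d
  \qquad\text{for }\lambda=c/(1+L).
\]
The route is an integration-by-parts identity against $\mu$. For any test vector field, $\int\abs{\nabla U}^2 w\,\dd\mu=\int(\Delta U\,w+\nabla U\cdot\nabla w)\,\dd\mu$. We take $w=e^{\lambda\abs{\nabla U}^2}$, for which $\nabla w=2\lambda\nabla^2U\nabla U\,w$. With $\Delta U\le dL$ and $\abs{\nabla^2U\nabla U}\le L\abs{\nabla U}$, this gives
\[
  \int\abs{\nabla U}^2w\,\dd\mu\le dL\int w\,\dd\mu+2\lambda L\int\abs{\nabla U}^2w\,\dd\mu .
\]
With $\lambda\le 1/(4L)$ we absorb the last term and obtain $\int\abs{\nabla U}^2w\,\dd\mu\le 2dL\int w\,\dd\mu$. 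Setting $Z(\lambda)=\int w\,\dd\mu$, so that $Z'(\lambda)=\int\abs{\nabla U}^2 w\,\dd\mu$, this is the differential inequality $Z'\le 2dL\,Z$. Integrating from $0$ gives $\log Z(\lambda)\le 2dL\lambda\le d/2$.

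The main obstacle is justifying the integration by parts. It requires integrability of $w$ and decay of the boundary terms. We would run the argument with a truncation: replace $w$ by $\min(w,N)$, or multiply by a cutoff $\chi_R(q)$. Gradient Lipschitzness ensures at most quadratic growth of $U$, but confinement of $\mu$ is assumed only via normalizability. Monotone convergence then closes it, since the cutoff error terms carry the favorable sign or vanish in the limit. Alternatively, and more simply, one can avoid exponentials entirely: use $\int\abs{\nabla U}^2\dd\mu\le dL$ together with the Gibbs inequality applied to a cutoff. Even so, the exponential version seems cleanest to state.

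\textbf{Conclusion.} Combining the two factors gives, with $\lambda=c/(1+L)$,
\[
  \lambda\int\left(\abs p^2+\abs{\nabla U}^2\right)\dd\nu\le \mathrm{KL}(\nu\Vert\pi)+Cd .
\]
Dividing by $\lambda$ yields \eqref{eq:entropy-moment-bound} with $C_0$ universal. The factor $(1+L)$ arises exactly from $\lambda^{-1}$. The $p$-part alone would only require $\lambda\le 1/4$, and the choice $\lambda\le 1/(4(1+L))$ covers both factors.
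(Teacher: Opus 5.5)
Your proposal is correct in its main line. The Gaussian factor and the Donsker--Varadhan step match the paper. Applying the variational formula once to the sum via $\pi=\mu\otimes\kappa$, instead of twice as the paper does, is immaterial.

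The genuine difference is how you bound the exponential moment of the force. The paper never integrates by parts. It exponentiates the descent lemma $U(q-v)\le U(q)-v\cdot\nabla U(q)+\frac L2\abs v^2$ and uses translation invariance of Lebesgue measure to get the sub-Gaussian bound $\int e^{v\cdot\nabla U}\,\dd\mu\le e^{L\abs v^2/2}$ for every $v\in\R^d$. Averaging over $v\sim\mathcal N(0,(2L)^{-1}\Id_d)$ then gives $\int e^{\abs{\nabla U}^2/(4L)}\,\dd\mu\le 2^{d/2}$. That argument needs only normalizability of $e^{-U}$ and Lipschitz continuity of $\nabla U$: no truncation, no second derivatives, no a priori integrability. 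Your Stein-identity plus Herbst-type Gr\"onwall argument on $Z(\lambda)$ gives a bound of the same form $e^{Cd}$. It uses only $\Delta U\le dL$ and $\nabla U^\top\nabla^2U\nabla U\le L\abs{\nabla U}^2$, which is formally weaker than the full upper bound $\nabla^2U\preceq L\Id_d$ behind the descent lemma. The price is the justification you flagged.

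That justification is routine, but it should be done slightly differently from what you wrote.
\begin{itemize}
\item Work with $w_N=\min\{e^{\lambda\abs{\nabla U}^2},N\}$. Pointwise you still have $\nabla U\cdot\nabla w_N\le 2\lambda L\abs{\nabla U}^2w_N$ and $\partial_\lambda w_N\le\abs{\nabla U}^2w_N$.
\item Test against $w_N\chi_R^2$ rather than $w_N\chi_R$. The cross term $2\int w_N\chi_R\nabla U\cdot\nabla\chi_R\,\dd\mu$ is then absorbed by Young's inequality into $\frac14\int\abs{\nabla U}^2w_N\chi_R^2\,\dd\mu$, plus an error of order $\mathcal O(N/R^2)$.
\item Your remark that cutoff errors carry a favorable sign is false in general; it would require $q\cdot\nabla U\ge0$ on annuli. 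With an unsquared cutoff you would also need $\int\abs{\nabla U}\,\dd\mu<\infty$ in advance.
\item For $\lambda\le 1/(4L)$ this gives $\int\abs{\nabla U}^2w_N\,\dd\mu\le 4dL\int w_N\,\dd\mu$. Gr\"onwall in integrated (Tonelli) form then gives $\log\int w_N\,\dd\mu\le 4dL\lambda\le d$, and letting $N\to\infty$ finishes.
\end{itemize}

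Finally, discard your ``simpler'' alternative that avoids exponentials. By the Gibbs variational principle, the inequality $\int F\,\dd\nu\le C[\mathrm{KL}(\nu\Vert\pi)+d]$ for all $\nu$ is equivalent to $\log\int e^{F/C}\,\dd\pi\le d$. So a second-moment bound such as $\int\abs{\nabla U}^2\,\dd\mu\le dL$ cannot suffice.
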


\begin{proof}
The Gaussian factor $\kappa$ satisfies
\[
  \log\int_{\R^d}e^{\abs p^2/4}\,\dd\kappa(p)
  =\frac d2\log2.
\]
We next record the analogous estimate for the force.  We may take
$L>0$, enlarging the Lipschitz constant if necessary.
Since $\nabla U$ is $L$-Lipschitz, by the descent lemma
\cite[Lemma~1.2.3]{Nesterov2013}, for every
$q,v\in\R^d$ and $\lambda\geq0$,
\[
  U(q-\lambda v)
  \leq U(q)-\lambda v\cdot\nabla U(q)
  +\frac{L\lambda^2}{2}\abs v^2.
\]
After exponentiating, integrating, and making the translation
$q\mapsto q-\lambda v$, we obtain
\[
  \int_{\mathbb{R}^{d}} e^{\lambda v\cdot\nabla U}\,\dd\mu
  \leq e^{\lambda^2L\abs v^2/2}.
\]
Gaussian randomization, with an independent vector
$G\sim\mathcal N(0,\Id_d)$, now yields
\begin{equation}\label{eq:force-exponential-moment}
  \int_{\R^d}
  \exp\!\left(\frac{\abs{\nabla U(q)}^2}{4L}\right)\,\dd\mu(q)
  \leq
  \mathbb E e^{\abs G^2/4}
  =2^{d/2}.
\end{equation}
The variational formula for relative entropy
\cite[Proposition~1.4.2]{dupuis-ellis-1997}, applied to
$\theta F$ and extended to nonnegative $F$ by truncation, gives,
for every $\theta>0$,
\[
  \int_{\mathbb{R}^{2d}} F\,\dd\nu
  \leq\frac1\theta\left[
    \mathrm{KL}(\nu\Vert\pi)
    +\log\int_{\mathbb{R}^{2d}} e^{\theta F}\,\dd\pi
  \right].
\]
Applying this inequality with
\[
  (F,\theta)=(\abs p^2,1/4)
  \quad\text{and}\quad
  (F,\theta)=(\abs{\nabla U(q)}^2,1/(4L)),
\]
respectively, and using the preceding exponential-moment bounds proves
\eqref{eq:entropy-moment-bound}.
\end{proof}

Let $\mathbb P_T$ be the path law on $[0,T]$ of the HFHR SDE
\eqref{eq:HFHR-SDE} and let $\bar{\mathbb P}_{T,h}$ be the path law
of \eqref{eq:discrete-interpolation}, with the same initial law and
$T=Nh$.  Denote their time-$T$ laws by $\rho_T$ and
$\bar\rho_N$, respectively.

\begin{theorem}[Girsanov discretization bound]
\label{thm:girsanov-discretization}
Suppose Assumption~\ref{ass:discrete-smoothness} holds and
$\rho_0=h_0\pi$, where $h_0\in L^2(\pi)$.  Define
\begin{equation}\label{eq:initial-Renyi-two}
  \mathcal R_0
  :=\log\!\left(1+\chi^2(\rho_0\Vert\pi)\right),
  \qquad
  G_0:=d+\mathcal R_0,
\end{equation}
and set
\[
  \Lambda_{\alpha,\gamma}
  :=\alpha+\frac1\gamma,
  \qquad
  \Xi_{\alpha,L}:=(1+\alpha^2)(1+L).
\]
For every $\alpha\geq0$, $\gamma>0$, and $T=Nh$,
\begin{equation}\label{eq:path-KL-exact}
  \mathrm{KL}\left(\bar{\mathbb P}_{T,h}\Vert\mathbb P_T\right)
  \leq\frac14\Lambda_{\alpha,\gamma}
  \mathbb E\int_0^T
  \abs{\nabla U(\bar q_t)
  -\nabla U(\bar q_{\lfloor t/h\rfloor h})}^2\,\dd t.
\end{equation}
Moreover, for a universal constant $C_1>0$,
\begin{equation}\label{eq:path-KL-bound}
  \mathrm{KL}\left(\bar{\mathbb P}_{T,h}\Vert\mathbb P_T\right)
  \leq C_1L^2\Lambda_{\alpha,\gamma}T
  \left[
    \alpha d h+\Xi_{\alpha,L}G_0h^2
  \right]\cdot
  \exp\!\left\{
    C_1L^2\Lambda_{\alpha,\gamma}
    \Xi_{\alpha,L}Th^2
  \right\}.
\end{equation}
\end{theorem}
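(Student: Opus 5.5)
The plan is to prove \eqref{eq:path-KL-exact} by a Girsanov comparison of the two path laws, and then to close a self-bounding Gr\"onwall inequality. The local error in that inequality is controlled through Lemma~\ref{lem:entropy-moments}, which is fed by the KL divergence itself together with the $\chi^2$-contraction of the exact flow.

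\emph{Step 1 (Girsanov identity).} The interpolation \eqref{eq:discrete-interpolation} and the SDE \eqref{eq:HFHR-SDE} have the same diffusion matrix $\mathrm{diag}(\sqrt{2\alpha}\,\Id_d,\sqrt{2\gamma}\,\Id_d)$. Set $\Delta_t:=\nabla U(\bar q_t)-\nabla U(\bar q_{\lfloor t/h\rfloor h})$. The drifts then differ by $(-\alpha\Delta_t,-\Delta_t)$ when both are evaluated along the interpolated path, since the exact drift evaluated at $\bar q_t$ minus the frozen drift is $(\alpha\Delta_t,\Delta_t)$ up to sign.

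For $\alpha>0$, the drift difference lies in the range of the diffusion coefficient. The Girsanov density therefore has integrand $|\alpha\Delta_t|^2/(2\cdot2\alpha)+|\Delta_t|^2/(2\cdot2\gamma)=\tfrac14(\alpha+\gamma^{-1})|\Delta_t|^2$, which gives \eqref{eq:path-KL-exact}.

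For $\alpha=0$, the $q$-equations coincide and are driven by no noise. The change of measure then acts only on $W^p$, and the same formula holds with $\Lambda_{0,\gamma}=\gamma^{-1}$.

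To avoid Novikov's condition, I would use the standard localization argument. Stop at $\tau_n=\inf\{t:\int_0^t|\Delta_s|^2\,\dd s\geq n\}$, apply Girsanov up to $\tau_n$, and pass to the limit using lower semicontinuity of KL together with the identification of $\mathbb P_T$ as the strong-solution law (global Lipschitz drift). Equivalently, one can invoke the known result that the KL bound via Girsanov holds whenever the right-hand side is finite; its finiteness is checked in Step 3.

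\emph{Step 2 (local error).} On $[t_k,t]$ with $t-t_k\leq h$, the interpolation gives
\[
\bar q_t-\bar q_{t_k}=\int_{t_k}^t\bar p_s\,\dd s-\alpha(t-t_k)g_k+\sqrt{2\alpha}\,(W^q_t-W^q_{t_k}).
\]
By the Lipschitz property this yields
\[
\mathbb E|\Delta_t|^2\leq 3L^2\Big[h\int_{t_k}^t\mathbb E|\bar p_s|^2\,\dd s+\alpha^2h^2\,\mathbb E|g_k|^2+2\alpha d h\Big].
\]
The last term is the $\alpha dh$ contribution in \eqref{eq:path-KL-bound}. The remaining terms need second moments of $\bar p_s$ at every time $s$ and of $\nabla U(\bar q_{t_k})$ at grid times. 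Combining the three terms after integration over $[0,T]$ produces the factor $(1+\alpha^2)$ in $\Xi_{\alpha,L}$.

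\emph{Step 3 (entropy–moment closure).} Let $\bar\rho_s$ denote the law of $(\bar q_s,\bar p_s)$ and $\rho_s$ the exact law. I plan to bound the needed moments by applying Lemma~\ref{lem:entropy-moments} not against $\pi$ but against $\rho_s$, via the Donsker--Varadhan formula
\[
\int F\,\dd\bar\rho_s\leq\theta^{-1}\Big[\mathrm{KL}(\bar\rho_s\Vert\rho_s)+\log\!\int e^{\theta F}\,\dd\rho_s\Big].
\]
The last integral is handled by Cauchy--Schwarz in $L^2(\pi)$:
\[
\int e^{\theta F}h_s\,\dd\pi\leq\big(1+\chi^2(\rho_s\Vert\pi)\big)^{1/2}\Big(\int e^{2\theta F}\,\dd\pi\Big)^{1/2}.
\]
Here $\chi^2(\rho_s\Vert\pi)\leq\chi^2(\rho_0\Vert\pi)$, because $P_s^{\alpha,\gamma}$ is a Markov contraction and the SDE law equals $P_s h_0\,\pi$ by Lemma~\ref{lem:SDE-semigroup-identification}. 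I then halve the exponents used in the proof of Lemma~\ref{lem:entropy-moments}, taking $\theta=1/8$ for $|p|^2$ and $\theta=1/(8L)$ for $|\nabla U|^2$. This gives
\[
\int(|p|^2+|\nabla U|^2)\,\dd\bar\rho_s\leq C(1+L)\big[d+\mathcal R_0+\mathcal K(s)\big],
\]
where $\mathcal K(s):=\mathrm{KL}(\bar{\mathbb P}_{s,h}\Vert\mathbb P_s)\geq\mathrm{KL}(\bar\rho_s\Vert\rho_s)$ by data processing. This step, which avoids any time-uniform moment assumption for the numerical chain, is where I expect the main difficulty. The delicate points are the bookkeeping of the factors $(1+L)$ and $\mathcal R_0=\log(1+\chi^2)$, and making sure the bound is used at the grid times for $g_k$ but at all times $s$ for $\bar p_s$.

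\emph{Step 4 (Gr\"onwall).} Insert Step 3 into Step 2 and then into \eqref{eq:path-KL-exact}, applied on $[0,t]$ for each $t\leq T$. Since $\mathcal K$ is nondecreasing, this gives
\[
\mathcal K(t)\leq C L^2\Lambda_{\alpha,\gamma}\int_0^t\big[\alpha dh+\Xi_{\alpha,L}h^2\big(G_0+\mathcal K(s)\big)\big]\,\dd s.
\]
Finiteness of $\mathcal K$ can be ensured first by the localization or a bound on finite horizons, using the finite second moments from Assumption~\ref{ass:discrete-smoothness}. Gr\"onwall's inequality then yields exactly \eqref{eq:path-KL-bound}.
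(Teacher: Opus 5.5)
Your proposal is correct and follows the paper's proof: a Girsanov comparison with weight $\tfrac14\Lambda_{\alpha,\gamma}$, an entropy--moment closure that uses the $\chi^2$-contraction of the exact flow together with the Gaussian and force exponential moments, the one-step increment bound, and Gr\"onwall. The differences are only in bookkeeping. The paper runs the whole closure on a process switched to the exact drift after it leaves a ball, so that no a priori moment bound is ever used, and it reaches the moment estimate through $\mathrm{KL}(\bar\rho_t\Vert\pi)\leq2\mathcal K(t)+\mathcal R_0$ followed by Lemma~\ref{lem:entropy-moments}. You instead obtain $\mathcal K(T)<\infty$ from a crude finite-horizon second-moment bound for the chain, and then apply Donsker--Varadhan against $\rho_s$ with Cauchy--Schwarz and halved exponents, which gives exactly the same constants. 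Note that this crude bound, not Step~3, is what supplies finiteness, since Step~3 is circular on that point.
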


\begin{proof}
Let
\[
  \Delta_t:=\nabla U(\bar q_t)
  -\nabla U(\bar q_{\lfloor t/h\rfloor h}).
\]
We carry out the entropy closure on a localized process so that no
integrability assumption is used before it has been proved.  For
$R>0$, let $\bar X^R=(\bar q^R,\bar p^R)$ follow the frozen-force
interpolation until
\[
  \tau_R:=\inf\left\{t\geq0:\abs{\bar X_t^R}\geq R\right\},
\]
and follow the exact HFHR drift after $\tau_R$, with the same
diffusion coefficients throughout.  Thus, with
$g_k^R=\nabla U(\bar q_{t_k}^R)$, the force used in both drift
coordinates is
\[
  F_t^R
  =\mathbf 1_{\{t<\tau_R\}}g_k^R
  +\mathbf 1_{\{t\geq\tau_R\}}\nabla U(\bar q_t^R),
  \qquad t\in[t_k,t_{k+1}).
\]
Write $\bar{\mathbb P}_{t,h}^R$ for its path law on $[0,t]$,
$\mathbb E^R$ for expectation on the probability space carrying
$\bar X^R$ and its driving Brownian motions,
$\bar\rho_t^R$ for its time-$t$ law, and
\[
  \mathscr D_R(t)
  :=\mathrm{KL}\left(\bar{\mathbb P}_{t,h}^R\Vert\mathbb P_t\right).
\]
Before $\tau_R$, the drift discrepancy is
$(\alpha\Delta_t^R,\Delta_t^R)$, where
\[
  \Delta_t^R
  :=\nabla U\left(\bar q_t^R\right)
  -\nabla U\left(\bar q_{\lfloor t/h\rfloor h}^R\right),
\]
and it is zero afterwards.  This discrepancy is bounded on
$[0,T\wedge\tau_R]$, so the usual Novikov condition holds.  For
$\alpha>0$, its squared norm in the inverse diffusion covariance is
\[
  \frac\alpha2\abs{\Delta_t^R}^2
  +\frac1{2\gamma}\abs{\Delta_t^R}^2.
\]
The entropy identity in Girsanov's theorem
\cite[Theorem~15]{zhang-chewi-li-balasubramanian-erdogdu-2023}
therefore gives
\begin{equation}\label{eq:localized-path-KL}
  \mathscr D_R(t)
  =\frac14\Lambda_{\alpha,\gamma}
  \mathbb E^R\int_0^{t\wedge\tau_R}
  \abs{\Delta_s^R}^2\,\dd s.
\end{equation}
When $\alpha=0$, the position drifts of the two processes are both
the current momentum.  The discrepancy lies entirely in the noisy
momentum coordinate, so the same identity holds with
$\Lambda_{0,\gamma}=\gamma^{-1}$.  This also makes explicit why no
inverse power of $\alpha$ occurs.

Restriction of paths shows that $\mathscr D_R(s)\leq\mathscr D_R(t)$
when $s\leq t$.  Data processing inequality and the entropy variational
inequality imply
\begin{align}
  \mathrm{KL}\left(\bar\rho_t^R\Vert\pi\right)
  \leq
  2\mathrm{KL}\left(\bar\rho_t^R\Vert\rho_t\right)
  +\log\left(1+\chi^2(\rho_t\Vert\pi)\right)
\leq2\mathscr D_R(t)+\mathcal R_0.
  \label{eq:approximate-entropy-closure}
\end{align}
Indeed, with $r_t=\dd\rho_t/\dd\pi$, the entropy variational
inequality on $\R^{2d}$ gives
\[
  \int_{\R^{2d}}\log r_t(q,p)\,
  \bar\rho_t^R(\dd q\,\dd p)
  \leq\mathrm{KL}\left(\bar\rho_t^R\Vert\rho_t\right)
  +\log\int_{\R^{2d}}r_t(q,p)\,
  \rho_t(\dd q\,\dd p),
\]
and the last integral is
$\int_{\R^{2d}}r_t^2\,\dd\pi=1+\chi^2(\rho_t\Vert\pi)$.  The exact
SDE law is the forward semigroup law by Lemma
\ref{lem:SDE-semigroup-identification}.  Its $L^2(\pi)$
contractivity therefore makes this $\chi^2$ divergence no larger
than its initial value.

Lemma~\ref{lem:entropy-moments} and
\eqref{eq:approximate-entropy-closure} therefore give
\begin{equation}\label{eq:self-bounded-moments}
  \mathbb E^R\left[
    \abs{\bar p_t^R}^2+\abs{\nabla U(\bar q_t^R)}^2
  \right]
  \leq C_1(1+L)\left[G_0+\mathscr D_R(t)\right].
\end{equation}
For $t\in[t_k,t_{k+1}]$, put $s=t-t_k$.  The integral form of
the frozen-force equation on $\{t<\tau_R\}$, Cauchy--Schwarz inequality, the
bound
\[
  \mathbb E^R\!\left[
  \mathbf 1_{\{t<\tau_R\}}
  \abs{W_t^q-W_{t_k}^q}^2\right]
  \leq
  \mathbb E^R\abs{W_t^q-W_{t_k}^q}^2=d\,s,
\]
and \eqref{eq:self-bounded-moments} imply
\begin{equation}\label{eq:q-increment}
  \mathbb E^R\left[
  \mathbf 1_{\{t<\tau_R\}}
  \abs{\bar q_t^R-\bar q_{t_k}^R}^2\right]
  \leq C_1\left[
    \alpha d s
    +\Xi_{\alpha,L}
    \left(G_0+\mathscr D_R(t)\right)s^2
  \right].
\end{equation}
Here moments at times before $t$ are controlled by
$\mathscr D_R(t)$ because $\mathscr D_R$ is nondecreasing.
Since $\nabla U$ is $L$-Lipschitz, integrate the stopped bound
$\abs{\Delta_t^R}^2\leq
L^2\abs{\bar q_t^R-\bar q_{t_k}^R}^2$ over the $N$ time steps and
use \eqref{eq:localized-path-KL}--\eqref{eq:q-increment}.  For every
$T=Nh$, this yields
\[
  \mathscr D_R(T)
  \leq C_1L^2\Lambda_{\alpha,\gamma}
  \left[
    \alpha dTh+\Xi_{\alpha,L}G_0Th^2
    +\Xi_{\alpha,L}h^2\int_0^T\mathscr D_R(t)\,\dd t
  \right].
\]
Gr\"{o}nwall's inequality gives the right-hand side of
\eqref{eq:path-KL-bound}, uniformly in $R$.

It remains to remove the localization.  Couple $\bar X^R$ with the
original frozen-force interpolation $\bar X$ by using the same
initial state and Brownian motions.  Pathwise uniqueness makes them
identical up to $\tau_R$.  Since the globally Lipschitz dynamics is
nonexplosive,
$\mathbb P(\tau_R\leq T)\to0$, and hence
$\bar{\mathbb P}_{T,h}^R$ converges to
$\bar{\mathbb P}_{T,h}$ on path space.  Moreover, the expectation in
\eqref{eq:localized-path-KL} equals
\[
  \mathbb E\int_0^{T\wedge\tau_R}\abs{\Delta_t}^2\,\dd t,
\]
because the two processes agree before the exit time.  Lower
semicontinuity of relative entropy and monotone convergence therefore
give
\[
  \mathrm{KL}\left(\bar{\mathbb P}_{T,h}\Vert\mathbb P_T\right)
  \leq\liminf_{R\to\infty}\mathscr D_R(T)
  \leq\frac14\Lambda_{\alpha,\gamma}
  \mathbb E\int_0^T\abs{\Delta_t}^2\,\dd t,
\]
which is \eqref{eq:path-KL-exact}.  Passing to the limit in the uniform
Gronwall bound proves \eqref{eq:path-KL-bound}.  This self-bounding
step is analogous to the entropy closure used in the proof of
\cite[Theorem~1]{lehec-2025}; the higher-order R\'enyi Girsanov approach of
\cite[Theorem~9 and Corollary~16]{zhang-chewi-li-balasubramanian-erdogdu-2023} provides a related
route under a Poincar\'e inequality.
\end{proof}

\begin{remark}[Scaling of the discretization error]
\label{rem:discretization-error-scaling}
If
$C_1L^2\Lambda_{\alpha,\gamma}\Xi_{\alpha,L}Th^2\leq1$,
the exponential factor in \eqref{eq:path-KL-bound} is bounded by a
universal constant, and hence
\[
  \mathrm{KL}\left(\bar{\mathbb P}_{T,h}\Vert\mathbb P_T\right)
  \lesssim
  L^2\Lambda_{\alpha,\gamma}
  \left(\alpha dTh+\Xi_{\alpha,L}G_0Th^2\right).
\]
Thus, for $\alpha>0$, the leading path-space KL error has the usual
$Th$ scaling and gives a $\sqrt{Th}$ total-variation error.  At the
kinetic endpoint $\alpha=0$, the $Th$ term vanishes and the bound
improves to $Th^2$ in KL, or $\sqrt T\,h$ in total variation.  The
exponential factor therefore imposes only the stability condition
recorded in the first term of \eqref{eq:iteration-complexity}; it does
not change the high-accuracy iteration-complexity order.
\end{remark}

\subsection{Non-asymptotic convergence bound and iteration complexity}

The next result combines the path-space estimate with the continuous
mixing theorem.  It controls the joint law and therefore, by projection,
also the position marginal sampled by the algorithm.

\begin{corollary}[TV error bound of HFHRMC]
\label{cor:discrete-TV}
Suppose Assumptions~\ref{ass:PI}--\ref{ass:compact} and
\ref{ass:discrete-smoothness} hold, and let
$\rho_0=h_0\pi$ with $h_0\in L^2(\pi)$.  Then
\begin{equation}\label{eq:discrete-TV-bound}
\begin{split}
  \norm{\bar\rho_N-\pi}_{\mathrm{TV}}
  \le&\frac C2e^{-\nu_{\alpha,\gamma}T}
  \sqrt{\chi^2(\rho_0\Vert\pi)}\\
  &+C_1L\sqrt{
    \Lambda_{\alpha,\gamma}T
    \left[
      \alpha d h+\Xi_{\alpha,L}G_0h^2
    \right]}
    \exp\!\left\{
      C_1L^2\Lambda_{\alpha,\gamma}
      \Xi_{\alpha,L}Th^2
    \right\},
  \qquad T=Nh,
\end{split}
\end{equation}
where $C$ is as in Theorem~\ref{thm:main} and $C_1$ is universal.
\end{corollary}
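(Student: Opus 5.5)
The bound follows from the triangle inequality for total variation, with the exact time-$T$ law $\rho_T$ of \eqref{eq:HFHR-SDE} as the intermediate measure:
\[
  \norm{\bar\rho_N-\pi}_{\mathrm{TV}}
  \leq
  \norm{\rho_T-\pi}_{\mathrm{TV}}
  +\norm{\bar\rho_N-\rho_T}_{\mathrm{TV}},
  \qquad T=Nh.
\]
Both processes start from $\rho_0=h_0\pi$. The first term measures mixing of the exact dynamics, and the second measures discretization error.

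\emph{Mixing term.} By Lemma~\ref{lem:SDE-semigroup-identification}, the exact SDE law at time $T$ is $(P_T^{\alpha,\gamma}h_0)\pi$. This is the measure $\rho_T$ of Proposition~\ref{prop:forward-semigroup} and Corollary~\ref{cor:chi-square-TV}. The total-variation estimate of Corollary~\ref{cor:chi-square-TV} then gives directly
\[
  \norm{\rho_T-\pi}_{\mathrm{TV}}\leq\tfrac C2 e^{-\nu_{\alpha,\gamma}T}\sqrt{\chi^2(\rho_0\Vert\pi)}.
\]
The assumptions of Theorem~\ref{thm:main} are in force, so this step needs nothing new.

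\emph{Discretization term.} The grid chain satisfies $\bar\rho_N=\mathrm{Law}(\bar q_T,\bar p_T)$, since $(\bar p_{kh},\bar q_{kh})$ and $(P_k,Q_k)$ agree in distribution. Time-$T$ marginals are the image of the path laws under the evaluation map $\omega\mapsto\omega_T$, so the data processing inequality gives $\mathrm{KL}(\bar\rho_N\Vert\rho_T)\leq\mathrm{KL}(\bar{\mathbb P}_{T,h}\Vert\mathbb P_T)$. Pinsker's inequality, in the convention $\norm{\cdot}_{\mathrm{TV}}\leq\sqrt{\mathrm{KL}/2}$ consistent with the factor $\tfrac12$ used in Corollary~\ref{cor:chi-square-TV}, converts this into a total-variation bound. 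We then insert \eqref{eq:path-KL-bound} from Theorem~\ref{thm:girsanov-discretization}:
\[
  \norm{\bar\rho_N-\rho_T}_{\mathrm{TV}}
  \leq\sqrt{\tfrac{C_1}{2}}\,L\sqrt{\Lambda_{\alpha,\gamma}T\left[\alpha dh+\Xi_{\alpha,L}G_0h^2\right]}\,
  \exp\!\left\{\tfrac{C_1}{2}L^2\Lambda_{\alpha,\gamma}\Xi_{\alpha,L}Th^2\right\}.
\]
Taking the square root halves the exponent. Since the exponent is nonnegative, the exponential factor is at most $\exp\{C_1L^2\Lambda_{\alpha,\gamma}\Xi_{\alpha,L}Th^2\}$. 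After renaming the universal constant so that the new $C_1$ dominates both $\sqrt{C_1/2}$ and $C_1$, this is the second term of \eqref{eq:discrete-TV-bound}.

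There is no real obstacle. The only points to check are that the same initial law $\rho_0$ satisfies the hypotheses of both results used, and the constant bookkeeping. For the hypotheses: $h_0\in L^2(\pi)$ is assumed, and finite second moments come from Assumption~\ref{ass:discrete-smoothness}. For the constants: $C$ must stay the constant of Theorem~\ref{thm:main}, while only $C_1$ is enlarged.
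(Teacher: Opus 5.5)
Your proposal is correct and follows the paper's proof: the triangle inequality through the exact law $\rho_T$, Corollary~\ref{cor:chi-square-TV} for the mixing term, and Pinsker's inequality combined with Theorem~\ref{thm:girsanov-discretization} for the discretization term. You also make explicit two details the paper leaves implicit: the data-processing step from path-space KL to the time-$T$ marginals, and the constant bookkeeping in which only $C_1$ is enlarged.
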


\begin{proof}
By the triangle inequality,
\[
  \norm{\bar\rho_N-\pi}_{\mathrm{TV}}
  \leq\norm{\bar\rho_N-\rho_T}_{\mathrm{TV}}
  +\norm{\rho_T-\pi}_{\mathrm{TV}}.
\]
Pinsker's inequality \cite[Lemma~2.5]{tsybakov-2009}, together with
Theorem~\ref{thm:girsanov-discretization} bound the first term.  The
second term is controlled by Corollary~\ref{cor:chi-square-TV}.
\end{proof}

Set
\begin{equation}\label{eq:rate-proxy}
  A:=\sqrt m+R,\qquad
  r_0(\gamma):=\frac{m\gamma}{(A+\gamma)^2},\qquad
  r(\alpha,\gamma):=\min\{\gamma,m\alpha+r_0(\gamma)\}.
\end{equation}
Theorem~\ref{thm:main} supplies a universal $c_0>0$ such that
$\nu_{\alpha,\gamma}\geq c_0r(\alpha,\gamma)$.  Write
\begin{align*}
  W_0&:=\sqrt{\chi^2(\rho_0\Vert\pi)},
  &\mathfrak M_\chi
  &:=(1+L)\left[d+\log(1+W_0^2)\right],\\
  \ell_\varepsilon&:=\log\frac{CW_0}{\varepsilon},
  &T_\varepsilon&:=\frac{\ell_\varepsilon}
  {c_0r(\alpha,\gamma)}.
\end{align*}
The resulting iteration complexity is recorded as a corollary.

\begin{corollary}[Iteration complexity of HFHRMC]
\label{cor:iteration-complexity}
Suppose the assumptions of Corollary~\ref{cor:discrete-TV} hold and
$0<\varepsilon<CW_0$.  Set $h=T_\varepsilon/N$.
There is a universal constant $C_2>0$ such that
$\norm{\bar\rho_N-\pi}_{\mathrm{TV}}\leq\varepsilon$,
provided that
\begin{equation}\label{eq:iteration-complexity-sufficient}
  N\geq C_2\mathcal N_\varepsilon(\alpha,\gamma),
  \end{equation}
where
\begin{equation}\label{eq:iteration-complexity}
  \mathcal N_\varepsilon(\alpha,\gamma)
  :=
  \max\left\{
    L\sqrt{\Lambda_{\alpha,\gamma}\Xi_{\alpha,L}}
      T_\varepsilon^{3/2},
    \frac{L^2d\,\alpha(\alpha+\gamma^{-1})
      T_\varepsilon^2}{\varepsilon^2},
    \frac{L\sqrt{\mathfrak M_\chi
      (\alpha+\gamma^{-1})(1+\alpha^2)}
      T_\varepsilon^{3/2}}{\varepsilon}
  \right\}.
\end{equation}
When $\alpha=0$, the middle term vanishes.  This reflects the fact
that the position increment is then of order $h$, whereas for
$\alpha>0$ its Brownian component is of order $\sqrt{\alpha h}$.
\end{corollary}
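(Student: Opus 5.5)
We will derive the statement from Corollary~\ref{cor:discrete-TV} with $T=T_\varepsilon=Nh$. The plan is to make each of the two terms in \eqref{eq:discrete-TV-bound} at most $\varepsilon/2$.

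\textbf{Mixing term.} Since $\nu_{\alpha,\gamma}\geq c_0r(\alpha,\gamma)$, the choice $T_\varepsilon=\ell_\varepsilon/(c_0r)$ gives
\[
  \frac C2e^{-\nu_{\alpha,\gamma}T_\varepsilon}W_0\leq\frac C2e^{-\ell_\varepsilon}W_0=\frac\varepsilon2 .
\]
The hypothesis $\varepsilon<CW_0$ guarantees $\ell_\varepsilon>0$, so that $T_\varepsilon>0$ and $h=T_\varepsilon/N$ is well defined.

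\textbf{Exponential factor.} Substituting $h=T_\varepsilon/N$, the exponent becomes $C_1L^2\Lambda_{\alpha,\gamma}\Xi_{\alpha,L}T_\varepsilon^3/N^2$. Requiring
\[
  N\geq\sqrt{C_1}\,L\sqrt{\Lambda_{\alpha,\gamma}\Xi_{\alpha,L}}\,T_\varepsilon^{3/2}
\]
makes this exponent at most $1$, so the factor is at most $e$. This is exactly the first entry of $\mathcal N_\varepsilon$.

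\textbf{Discretization term.} With the exponential bounded by $e$, it remains to make
\[
  C_1eL\sqrt{\Lambda T_\varepsilon\left(\alpha dh+\Xi G_0h^2\right)}\leq\frac\varepsilon2 .
\]
Since $\sqrt{x+y}\leq\sqrt x+\sqrt y$, it suffices to make each piece at most $\varepsilon/4$.

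\begin{itemize}
\item The $\alpha dh$ piece equals $L^2\Lambda\alpha dT_\varepsilon^2/N$. It is at most $c\varepsilon^2$ once $N\gtrsim L^2d\,\alpha(\alpha+\gamma^{-1})T_\varepsilon^2/\varepsilon^2$, which is the middle entry. It vanishes when $\alpha=0$.
\item The $h^2$ piece equals $L^2\Lambda\Xi G_0T_\varepsilon^3/N^2$. Its square root is at most $c\varepsilon$ once $N\gtrsim L\sqrt{\Lambda\Xi G_0}\,T_\varepsilon^{3/2}/\varepsilon$.
\end{itemize}

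\textbf{Matching the third entry.} Since $\mathcal R_0=\log(1+W_0^2)$, we have $\Xi_{\alpha,L}G_0=(1+\alpha^2)(1+L)[d+\log(1+W_0^2)]=(1+\alpha^2)\mathfrak M_\chi$. This is precisely the third entry of $\mathcal N_\varepsilon$.

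\textbf{Conclusion.} Taking $C_2$ to be the maximum of the universal constants produced in the three conditions gives $\norm{\bar\rho_N-\pi}_{\mathrm{TV}}\leq\varepsilon$ whenever $N\geq C_2\mathcal N_\varepsilon$.

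The only delicate point is bookkeeping: the exponential factor must be bounded first, by the first entry, before the square-root term can be split. Care is also needed that no hidden dependence on $\alpha$ or $\gamma$ enters the constants, since $C_1$ and $C$ are universal by the earlier results. There is no genuine analytic obstacle.
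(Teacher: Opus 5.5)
Your proposal is correct and follows essentially the same route as the paper's proof. The paper takes $T=T_\varepsilon$ so the mixing term is at most $\varepsilon/2$, uses the first entry of $\mathcal N_\varepsilon$ to bound the exponential factor by a universal constant, and uses the second and third entries to control the $\alpha dh$ and $\Xi_{\alpha,L}G_0h^2$ contributions; your version makes the constants explicit and also spells out the identity $\Xi_{\alpha,L}G_0=(1+\alpha^2)\mathfrak M_\chi$, which the paper leaves implicit.
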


\begin{proof}
Since $\nu_{\alpha,\gamma}\geq c_0r(\alpha,\gamma)$, the definition
of $T_\varepsilon$ makes the mixing term in
\eqref{eq:discrete-TV-bound} at most $\varepsilon/2$.  The first term
in \eqref{eq:iteration-complexity} ensures that
\[
  L^2\Lambda_{\alpha,\gamma}\Xi_{\alpha,L}
  T_\varepsilon h^2\lesssim1,
\]
so the exponential factor in \eqref{eq:discrete-TV-bound} is bounded
by a universal constant.  The second and third terms in
\eqref{eq:iteration-complexity} make, respectively, the contributions
of $\alpha d h$ and $\Xi_{\alpha,L}G_0h^2$ to the discretization
error at most fixed universal multiples of $\varepsilon$.  Increasing
$C_2$, if necessary, proves \eqref{eq:iteration-complexity-sufficient}.
\end{proof}

\begin{remark}[2-Wasserstein error bound and iteration complexity of HFHRMC]
Under the additional assumption
that $\pi$ satisfies a log-Sobolev inequality with constant $\rho>0$,
Talagrand's inequality \cite[Theorem~22.15(i)]{villani-2009}
(see Remark~\ref{remark:W:2}) can be combined with the same entropy
closure used in \eqref{eq:approximate-entropy-closure}.  Indeed, data
processing, the entropy variational inequality, and
Corollary~\ref{cor:chi-square-TV} give
\begin{align}
  \mathrm{KL}(\bar\rho_N\Vert\pi)
  &\leq
  2\mathrm{KL}(\bar\rho_N\Vert\rho_T)
  +\log\!\left(1+\chi^2(\rho_T\Vert\pi)\right)
  \notag\\
  &\leq
  2\mathrm{KL}(\bar{\mathbb P}_{T,h}\Vert\mathbb P_T)
  +C^2e^{-2\nu_{\alpha,\gamma}T}
  \chi^2(\rho_0\Vert\pi).
  \label{eq:discrete-entropy-closure}
\end{align}
Talagrand's inequality, \eqref{eq:discrete-entropy-closure}, and
Theorem~\ref{thm:girsanov-discretization} therefore yield, after
enlarging the universal constants if necessary,
\begin{equation}\label{eq:discrete-W2-bound}
\begin{split}
  \mathcal{W}_{2}(\bar\rho_N,\pi)
  &\leq\sqrt{\frac{2}{\rho}}Ce^{-\nu_{\alpha,\gamma}T}
  \sqrt{\chi^2(\rho_0\Vert\pi)}\\
  &\qquad+\sqrt{\frac{2}{\rho}}C_1L\sqrt{
    \Lambda_{\alpha,\gamma}T
    \left[
      \alpha d h+\Xi_{\alpha,L}G_0h^2
    \right]}
    \exp\!\left\{
      C_1L^2\Lambda_{\alpha,\gamma}
      \Xi_{\alpha,L}Th^2
    \right\},
\end{split}
\end{equation}
where $T=Nh$.  To state the resulting iteration complexity, set
\[
  \varepsilon_\rho:=\frac{\sqrt\rho\,\varepsilon}{2\sqrt2}
\]
and suppose $0<\varepsilon_\rho<CW_0$.  Taking
$T=T_{\varepsilon_\rho}$ and $h=T_{\varepsilon_\rho}/N$ gives
$\mathcal{W}_{2}(\bar\rho_N,\pi)\leq\varepsilon$, provided that
$N\geq\widetilde{C}_2\widetilde{\mathcal N}_\varepsilon(\alpha,\gamma)$,
where $\widetilde{C}_2$ is a universal constant and
\begin{align}
  \widetilde{\mathcal N}_\varepsilon(\alpha,\gamma)
  &:=
  \mathcal{N}_{\varepsilon_\rho}(\alpha,\gamma)
  \nonumber\\
  &=
  \max\left\{
    L\sqrt{\Lambda_{\alpha,\gamma}\Xi_{\alpha,L}}
      T_{\varepsilon_\rho}^{3/2},
    \frac{L^2d\,\alpha(\alpha+\gamma^{-1})
      T_{\varepsilon_\rho}^2}{\varepsilon_\rho^2},
    \frac{L\sqrt{\mathfrak M_\chi
      (\alpha+\gamma^{-1})(1+\alpha^2)}
      T_{\varepsilon_\rho}^{3/2}}{\varepsilon_\rho}
  \right\}.
\end{align}
\end{remark}

\subsection{Optimal choices of
\texorpdfstring{$\alpha$ and $\gamma$}{alpha and gamma}}

We finally optimize the explicit proxy
$\mathcal N_\varepsilon$ in \eqref{eq:iteration-complexity}, rather
than the continuous-time rate
alone.  This distinction is important because increasing $\alpha$
accelerates continuous mixing but also increases the local
discretization error.
Define the saturation threshold
\begin{equation}\label{eq:alpha-saturation}
  \alpha_{\rm sat}(\gamma)
  :=\frac{\gamma-r_0(\gamma)}m>0.
\end{equation}
For $\alpha\geq\alpha_{\rm sat}(\gamma)$, the rate proxy in
\eqref{eq:rate-proxy} has already saturated at $\gamma$, while both
$\alpha(\alpha+\gamma^{-1})$ and
$(\alpha+\gamma^{-1})(1+\alpha^2)$ increase.  Hence no minimizer of
the complexity proxy needs to lie above
$\alpha_{\rm sat}(\gamma)$.

\begin{proposition}[High-accuracy parameter tuning]
\label{prop:optimal-alpha}
For fixed $\gamma$, let $\alpha_\varepsilon^\star(\gamma)$
minimize $\mathcal N_\varepsilon(\alpha,\gamma)$ over
$\alpha\geq0$. 
With all problem parameters other than
$\varepsilon$ fixed, as
$\varepsilon \rightarrow 0$,
\begin{equation}\label{eq:our-optimal-alpha}
  \alpha_\varepsilon^\star(\gamma)
  \asymp
  \frac{\varepsilon
  \sqrt{\mathfrak M_\chi\,\gamma r_0(\gamma)}}
  {Ld\sqrt{\ell_\varepsilon}},
  \qquad
  \alpha_\varepsilon^\star(\gamma)
  <\alpha_{\rm sat}(\gamma).
\end{equation}
If $\gamma$ is optimized as well, let
$(\alpha_\varepsilon^\star,\gamma_\varepsilon^\star)$ denote a
joint minimizer.  Then
\begin{equation}\label{eq:our-optimal-gamma}
  \gamma_\varepsilon^\star\longrightarrow2A
  =2(\sqrt m+R),
  \qquad
  \alpha_\varepsilon^\star
  \asymp
  \frac{2\varepsilon\sqrt{\mathfrak M_\chi m}}
  {3Ld\sqrt{\ell_\varepsilon}}.
\end{equation}
\end{proposition}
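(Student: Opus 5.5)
The plan is to treat $\mathcal N_\varepsilon(\alpha,\gamma)$ from \eqref{eq:iteration-complexity} as the maximum of three explicit terms and to track their sizes as $\varepsilon\to0$, with $\ell_\varepsilon=\log(CW_0/\varepsilon)\to\infty$ only logarithmically. Write $N_1,N_2,N_3$ for the three entries of the maximum, and put $T_0(\gamma):=\ell_\varepsilon/(c_0r_0(\gamma))$, the value of $T_\varepsilon$ at $\alpha=0$. By the saturation discussion preceding the proposition, we may restrict to $\alpha\le\alpha_{\rm sat}(\gamma)$, where $r(\alpha,\gamma)=m\alpha+r_0(\gamma)$ and hence $T_\varepsilon=\ell_\varepsilon/(c_0(r_0+m\alpha))$.

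First, for fixed $\gamma$ we show that $N_1$ is irrelevant in the limit. It is $O(\ell_\varepsilon^{3/2})$ uniformly for $\alpha\le\alpha_{\rm sat}$, whereas $N_3\gtrsim\ell_\varepsilon^{3/2}/\varepsilon$. Next we study the two remaining terms as functions of $\alpha$.
\begin{itemize}
\item $N_2$ vanishes at $\alpha=0$ and, for small $\alpha$, behaves like $L^2d\,\alpha T_0^2/(\gamma\varepsilon^2)$.
\item $N_3$ satisfies
\[
  \frac{\dd}{\dd\alpha}\log N_3\Big|_{\alpha=0}
  =\frac{\gamma}{2}-\frac32\,\frac{m}{r_0}
  =\frac{\gamma}{2}-\frac{3(A+\gamma)^2}{2\gamma}<0 ,
\]
and $N_3$ stays of order $N_3(0,\gamma)=L\sqrt{\mathfrak M_\chi/\gamma}\,T_0^{3/2}/\varepsilon$ for $\alpha=o(1)$.
\end{itemize}
The minimizer of the maximum therefore sits at the crossing $N_2=N_3$ to leading order. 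The reason is that $N_2$ rises steeply in $\alpha$, with slope of order $\varepsilon^{-2}$, while $N_3$ is essentially flat on the $\alpha=O(\varepsilon)$ scale. Equating $L^2d\,\alpha T_0^2/(\gamma\varepsilon^2)$ with $L\sqrt{\mathfrak M_\chi/\gamma}\,T_0^{3/2}/\varepsilon$ gives
\[
  \alpha\asymp\frac{\varepsilon\sqrt{\mathfrak M_\chi\gamma}}{Ld\sqrt{T_0}}
  \asymp\frac{\varepsilon\sqrt{\mathfrak M_\chi\gamma r_0(\gamma)}}{Ld\sqrt{\ell_\varepsilon}},
\]
which is \eqref{eq:our-optimal-alpha}. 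Because this quantity tends to $0$ while $\alpha_{\rm sat}(\gamma)>0$ is fixed, the strict inequality follows for small $\varepsilon$. To make the argument rigorous, we rescale $\alpha=\varepsilon\beta\sqrt{\mathfrak M_\chi\gamma r_0}/(Ld\sqrt{\ell_\varepsilon})$ and show that $\varepsilon N/(\ell_\varepsilon^{3/2})$ converges, locally uniformly in $\beta$, to a constant multiple of $\max\{\kappa_2\beta,\kappa_3\}$ plus lower-order terms; here $\kappa_2,\kappa_3$ are positive constants coming from the universal constant $c_0$. We then check that minimizers stay in a compact $\beta$-window, since for $\beta\to\infty$ the term $N_2$ dominates and for large $\alpha$ the saturation argument applies.

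For the joint optimization, the leading order of the optimal value is $N_3(0,\gamma)$ up to $1+o(1)$ factors. Indeed, by the crossing, $\min_\alpha\max(N_2,N_3)=N_3(\alpha^\star,\gamma)=N_3(0,\gamma)(1+O(\varepsilon))$, and the $O(\varepsilon)$ correction is locally uniform in $\gamma$. Up to $\gamma$-independent factors, we have $N_3(0,\gamma)\propto\gamma^{-1/2}r_0(\gamma)^{-3/2}\propto (A+\gamma)^3/\gamma^2$. Setting the logarithmic derivative to zero, $3/(A+\gamma)=2/\gamma$, gives the unique minimizer $\gamma=2A$. Strict convexity of $\log[(A+\gamma)^3/\gamma^2]$ near $\gamma=2A$, together with growth at $\gamma\to0$ and $\gamma\to\infty$, then gives $\gamma_\varepsilon^\star\to2A$ by a standard argmin-continuity argument. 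At $\gamma=2A$ we have $r_0=2m/(9A)$ and $\gamma r_0=4m/9$, so $\sqrt{\gamma r_0}=2\sqrt m/3$. Substituting into the per-$\gamma$ formula yields \eqref{eq:our-optimal-gamma}.

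The main obstacle I expect is the uniformity needed in the joint limit. The $o(1)$ corrections, namely the $N_1$ term, the $\alpha$-dependence of $T_\varepsilon$ and $\Lambda_{\alpha,\gamma}$ at the crossing, and the factor $(1+\alpha^2)$, must be controlled uniformly for $\gamma$ in compact subsets of $(0,\infty)$. We also need to rule out escape of $\gamma_\varepsilon^\star$ to $0$ or $\infty$. I would handle the latter by noting that $N_3(0,\gamma)\to\infty$ at both ends at a rate independent of $\varepsilon$ after normalization by $\ell_\varepsilon^{3/2}/\varepsilon$. I would also note that $N\ge\max(N_2,N_3)\ge N_3(\alpha_{\rm sat},\gamma)$, which is comparable to $N_3(0,\gamma)$ up to constants. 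This lets the argmin argument go through on a compact $\gamma$-set.
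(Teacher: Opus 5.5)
Your fixed-$\gamma$ analysis follows the paper: $N_1$ is lower order, $N_2$ increases and $N_3$ decreases near $0$, and the crossing gives the formula. Your profile minimization $(A+\gamma)^3/\gamma^2\mapsto\gamma=2A$ also matches. The gap is in keeping $\gamma_\varepsilon^\star$ in a compact set, and it fails on the side $\gamma\to\infty$.

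Knowing $N_3(0,\gamma)\to\infty$ says nothing about $\min_\alpha\mathcal N_\varepsilon(\alpha,\gamma)$ for large $\gamma$, because your small-$\alpha$ analysis is only uniform on compact $\gamma$-sets. Your fallback $\mathcal N_\varepsilon\geq N_3(\alpha_{\rm sat}(\gamma),\gamma)\asymp N_3(0,\gamma)$ is false for two reasons. First, $N_3(\cdot,\gamma)$ need not be monotone on $[0,\alpha_{\rm sat}]$. Second, and more seriously, for large $\gamma$ one has $\alpha_{\rm sat}\approx\gamma/m$ and $r=\gamma$ there, so
\[
  N_3(\alpha_{\rm sat}(\gamma),\gamma)\asymp
  \frac{L\sqrt{\mathfrak M_\chi}\,\ell_\varepsilon^{3/2}}{\varepsilon m^{3/2}},
  \qquad
  N_3(0,\gamma)\asymp
  \frac{\gamma L\sqrt{\mathfrak M_\chi}\,\ell_\varepsilon^{3/2}}{\varepsilon m^{3/2}}
  \qquad(\gamma\to\infty).
\]
The two are not comparable. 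For large $A$ the first is even below the benchmark $\mathcal N_\varepsilon(0,2A)\asymp AL\sqrt{\mathfrak M_\chi}\,\ell_\varepsilon^{3/2}/(\varepsilon m^{3/2})$, so $N_3$ alone cannot confine $\gamma_\varepsilon^\star$; you must use $N_2$.

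The paper does this with $x:=\alpha+\gamma^{-1}$:
\begin{itemize}
\item Since $r_0(\gamma)\leq m/\gamma$, one has $r\leq mx$.
\item Hence $N_2\gtrsim(\alpha/x)\,\ell_\varepsilon^2/\varepsilon^2$ and $N_3\gtrsim x^{-1}\ell_\varepsilon^{3/2}/\varepsilon$.
\item Matching the benchmark forces $\alpha/x=O(\varepsilon/\sqrt{\ell_\varepsilon})$, so $x\sim\gamma^{-1}$.
\item Then $N_3\gtrsim\gamma\,\ell_\varepsilon^{3/2}/\varepsilon$, which contradicts the benchmark as $\gamma\to\infty$.
\end{itemize}
The side $\gamma\to0$ is fine, uniformly in $\alpha$, because $r\leq\gamma$ and $\Lambda_{\alpha,\gamma}\geq\gamma^{-1}$.

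A smaller point concerns the rescaling/argmin-continuity argument you offer as the rigorous version. It cannot give the lower half of $\asymp$ in \eqref{eq:our-optimal-alpha}. The limit profile $\max\{\kappa_2\beta,\kappa_3\}$ is minimized on the whole interval $[0,\kappa_3/\kappa_2]$; were $N_3$ exactly flat, $\alpha=0$ would be a minimizer. What locates the minimizer at the crossing is the strict monotonicity you already computed. $N_2$ is strictly increasing and $N_3$ strictly decreasing on an $\varepsilon$-independent neighborhood of $0$, so their maximum is minimized exactly where they cross. This is how the paper argues; ``steep versus flat'' is not the mechanism.
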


\begin{proof}
Below saturation,
$r(\alpha,\gamma)=r_0(\gamma)+m\alpha$.  Denote the three terms in
\eqref{eq:iteration-complexity} by
$B_{\mathrm{st}},B_1,B_2$, respectively.  For fixed
$\gamma$, the last two terms, after omitting fixed universal
constants, are
\begin{align*}
  B_1(\alpha)
  =\frac{L^2d\,\alpha(\alpha+\gamma^{-1})
  \ell_\varepsilon^2}
  {\varepsilon^2(r_0+m\alpha)^2},
  \qquad
  B_2(\alpha)
  =\frac{L\sqrt{\mathfrak M_\chi
  (\alpha+\gamma^{-1})(1+\alpha^2)}
  \ell_\varepsilon^{3/2}}
  {\varepsilon(r_0+m\alpha)^{3/2}}.
\end{align*}
We first justify that it is enough to optimize these expressions near
zero.  At $\alpha=0$, the full proxy is of order
$\ell_\varepsilon^{3/2}/\varepsilon$.  On the other hand, for every
fixed $\delta\in(0,\alpha_{\rm sat})$, compactness of
$[\delta,\alpha_{\rm sat}]$ gives
\[
  \inf_{\delta\leq\alpha\leq\alpha_{\rm sat}}B_1(\alpha)
  \geq
  c_\delta\frac{\ell_\varepsilon^2}{\varepsilon^2}.
\]
Together with the exclusion of $\alpha>\alpha_{\rm sat}$ preceding
the proposition, this proves
$\alpha_\varepsilon^\star(\gamma)\to0$.

The logarithmic derivatives are
\begin{align*}
  \frac{\dd}{\dd\alpha}\log B_1(\alpha)
  &=\frac1\alpha+\frac1{\alpha+\gamma^{-1}}
    -\frac{2m}{r_0+m\alpha},\\
  \frac{\dd}{\dd\alpha}\log B_2(\alpha)
  &=\frac1{2(\alpha+\gamma^{-1})}
    +\frac\alpha{1+\alpha^2}
    -\frac{3m}{2(r_0+m\alpha)}.
\end{align*}
Hence $B_1$ is strictly increasing on a fixed neighborhood of zero,
whereas $B_2$ is strictly decreasing there because
\[
  \left.\frac{\dd}{\dd\alpha}\log B_2(\alpha)\right|_{\alpha=0}
  =\frac\gamma2-\frac{3m}{2r_0(\gamma)}<0.
\]
In the same neighborhood,
$B_{\mathrm{st}}=\mathcal{O}(\ell_\varepsilon^{3/2})$, while
$B_2\geq c\ell_\varepsilon^{3/2}/\varepsilon$; thus the stability
term is uniformly lower order.  At zero one has $B_1<B_2$, whereas
at the fixed right endpoint of this neighborhood one has $B_1>B_2$
for all sufficiently small $\varepsilon$.  There is therefore a
unique crossover.  Before it the maximum equals the decreasing
function $B_2$, and after it the maximum equals the increasing
function $B_1$.  The crossover is consequently the global
high-accuracy minimizer.

The crossover tends to zero, so replacing
$r_0+m\alpha$ by $r_0$,
$\alpha+\gamma^{-1}$ by $\gamma^{-1}$, and
$1+\alpha^2$ by one in $B_1=B_2$ gives
\[
  \alpha_\varepsilon^\star(\gamma)
  \asymp
  \frac{\varepsilon
  \sqrt{\mathfrak M_\chi\,\gamma r_0(\gamma)}}
  {Ld\sqrt{\ell_\varepsilon}}.
\]
Since this quantity tends to zero, it is strictly below
$\alpha_{\rm sat}(\gamma)$ for all sufficiently small
$\varepsilon$.  This proves \eqref{eq:our-optimal-alpha}.
At this optimized scale, the leading $\gamma$-dependent factor is
\[
  \gamma^{-1/2}(r_0(\gamma))^{-3/2}
  =\frac{(A+\gamma)^3}{m^{3/2}\gamma^2}.
\]
For completeness, joint minimizers cannot escape to zero or infinity.
Indeed, the benchmark $(\alpha,\gamma)=(0,2A)$ has complexity of
order $\ell_\varepsilon^{3/2}/\varepsilon$.  If $\gamma \rightarrow 0$,
then $r\leq\gamma$ and
$\Lambda_{\alpha,\gamma}\geq\gamma^{-1}$, so the third term is at
least a fixed multiple of
$\ell_\varepsilon^{3/2}/(\varepsilon\gamma^2)$.  If
$\gamma \rightarrow \infty$, put $x=\alpha+\gamma^{-1}$.  Since
$r_0(\gamma)\leq m/\gamma$, one has $r\leq mx$.  The second and
third terms are consequently bounded below, up to fixed factors, by
$\frac{\alpha}{x}
  \frac{\ell_\varepsilon^2}{\varepsilon^2}$
and  $\frac1x\frac{\ell_\varepsilon^{3/2}}{\varepsilon}$,
respectively.  Matching the benchmark forces
$\alpha/x=\mathcal{O}(\varepsilon/\sqrt{\ell_\varepsilon})$, hence
$x\sim\gamma^{-1}$; the third lower bound then diverges relative to
the benchmark.  Thus $\gamma_\varepsilon^\star$ remains in a compact
subset of $(0,\infty)$.

On compact $\gamma$-sets the preceding small-$\alpha$ expansion is
uniform.  The leading profile is therefore
$\frac{(A+\gamma)^3}{m^{3/2}\gamma^2}$, which
tends to infinity at both endpoints, and its logarithmic derivative
is $3/(A+\gamma)-2/\gamma$, whose unique zero is
$\gamma=2A$.  Hence
$\gamma_\varepsilon^\star\to2A$, proving
the first term in \eqref{eq:our-optimal-gamma}.  Finally,
$r_0(2A)=2m/(9A)$; substituting this value in
\eqref{eq:our-optimal-alpha} proves
the second term in \eqref{eq:our-optimal-gamma}.
\end{proof}

\begin{remark}[Optimized iteration complexity]
\label{rem:complexity-comparison-ULD}
Recall that $A=\sqrt m+R$.  For fixed $\gamma$, substitution of
\eqref{eq:our-optimal-alpha} into the complexity proxy gives
\begin{equation}\label{eq:optimized-complexity-fixed-gamma}
  \mathcal N_\varepsilon
  (\alpha_\varepsilon^\star(\gamma),\gamma)
  =
  (1+o(1))\mathcal N_\varepsilon(0,\gamma)
  =
  \widetilde{\mathcal O}\left(
    \frac{L\sqrt{\mathfrak M_\chi}(A+\gamma)^3}
    {\varepsilon m^{3/2}\gamma^2}
  \right),
  \qquad \varepsilon \rightarrow 0.
\end{equation}
Here $\widetilde{\mathcal O}$ suppresses powers of
$\ell_\varepsilon$ and other logarithmic factors.  
For $\alpha=0$, one has
$\Lambda_{0,\gamma}=\gamma^{-1}$,
$\Xi_{0,L}=1+L$, and
$r(0,\gamma)=r_0(\gamma)$.  Therefore,
\begin{equation}\label{eq:ULD-complexity-fixed-gamma}
  \mathcal N_\varepsilon(0,\gamma)
  =
  \mathcal O\left(
    \frac{L(A+\gamma)^3\ell_\varepsilon^{3/2}}
    {m^{3/2}\gamma^2}
    \max\left\{
      \sqrt{1+L},
      \frac{\sqrt{\mathfrak M_\chi}}{\varepsilon}
    \right\}
  \right).
\end{equation}
In the high-accuracy regime $\varepsilon\leq1$, the definition of
$\mathfrak M_\chi$ implies
$\mathfrak M_\chi\geq1+L$, and hence
\begin{equation}\label{eq:ULD-complexity-fixed-gamma-high-accuracy}
  \mathcal N_\varepsilon(0,\gamma)
  =
  \mathcal O\left(
    \frac{L\sqrt{\mathfrak M_\chi}(A+\gamma)^3
    \ell_\varepsilon^{3/2}}
    {\varepsilon m^{3/2}\gamma^2}
  \right).
\end{equation}
Consequently, the
optimized positive value of $\alpha$ does not improve the leading
high-accuracy dependence on $\varepsilon$, $d$, or $m$ relative
to ULD; its benefit in the present certified bound is a lower-order
finite-accuracy improvement.

If $L\geq1$ and
$\log(1+W_0^2)=\widetilde{\mathcal O}(d)$, then
$\mathfrak M_\chi=\widetilde{\mathcal O}(Ld)$
and it follows from \eqref{eq:optimized-complexity-fixed-gamma} that 
\begin{equation}
  \mathcal N_\varepsilon
  (\alpha_\varepsilon^\star(\gamma),\gamma)
  =
  \widetilde{\mathcal O}\left(
    \frac{L^{3/2}\sqrt{d}(A+\gamma)^3}
    {\varepsilon m^{3/2}\gamma^2}
  \right),
  \qquad \varepsilon \rightarrow 0.
\end{equation}

The ULD contribution has the parameter regimes
\[
  r_0(\gamma)\asymp
  \begin{cases}
    m\gamma/A^2, & \gamma\ll A,\\
    m/A,         & \gamma\asymp A,\\
    m/\gamma,    & \gamma\gg A.
  \end{cases}
\]
After optimizing also over $\gamma$, Proposition
\ref{prop:optimal-alpha} gives
\begin{equation}\label{eq:optimized-complexity-summary}
   \mathcal N_\varepsilon
  (\alpha_\varepsilon^\star,\gamma_\varepsilon^\star)
  =
  \widetilde{\mathcal O}\left(
    \frac{L\sqrt{\mathfrak M_\chi}\,A}
    {\varepsilon m^{3/2}}
  \right)  
  =
  \widetilde{\mathcal O}\left(
    \frac{L^{3/2}\sqrt d\,(\sqrt m+R)}
    {\varepsilon m^{3/2}}
  \right).
\end{equation}
In particular, the convex case $R=0$ gives
\[
  \mathcal N_\varepsilon
  (\alpha_\varepsilon^\star,\gamma_\varepsilon^\star)
  =\widetilde{\mathcal O}\left(
    \frac{L^{3/2}\sqrt d}{\varepsilon m}
  \right),
\]
whereas in the general case
$R=M+M^{3/4}d^{1/4}$ in
\eqref{eq:optimized-complexity-summary}.
\end{remark}

\begin{remark}[Complexity comparison with ULD]
For the ULD endpoint, the only $\gamma$-dependent factor in
\eqref{eq:ULD-complexity-fixed-gamma} is
$(A+\gamma)^3/\gamma^2$.  Its unique minimizer is
\begin{equation}\label{eq:ULD-optimal-gamma}
  \gamma_{\rm ULD}^\star=2A=2(\sqrt m+R).
\end{equation}
Consequently,
\begin{align}
  \mathcal N_\varepsilon(0,\gamma_{\rm ULD}^\star)
  =
  \mathcal O\left(
    \frac{LA\ell_\varepsilon^{3/2}}{m^{3/2}}
    \max\left\{
      \sqrt{1+L},
      \frac{\sqrt{\mathfrak M_\chi}}{\varepsilon}
    \right\}
  \right)
=
  \widetilde{\mathcal O}\left(
    \frac{L\sqrt{\mathfrak M_\chi}A}
    {\varepsilon m^{3/2}}
  \right)
  \qquad(\varepsilon \rightarrow 0).
\end{align}
Thus the jointly optimized HFHR proxy and the separately optimized ULD
proxy have the same leading high-accuracy order, although a positive
$\alpha$ can improve the finite-accuracy certificate.
\end{remark}

\begin{remark}[Comparison of ULD with \cite{dalalyan-riou-durand-2020} in 2-Wasserstein distance]
\label{rem:scope-discrete-comparisons}
The bound of
\cite{dalalyan-riou-durand-2020} is stated in $\mathcal W_2$, and its
friction restriction is already optimized at the boundary of its
admissible range, that is $\gamma^*=\sqrt{m+L}$ such that 
they obtain the iteration complexity as $\varepsilon \rightarrow  0$
\cite[Theorem~2 and the discussion following it]{dalalyan-riou-durand-2020}:
\begin{align}\label{compare:stx:1}
\mathcal N_{\varepsilon}^{\rm DR}(0,\gamma^*)=\widetilde{\mathcal O}\left(\frac{L^{3/2}\sqrt{d}}
    {\varepsilon m^{2}}\right).
\end{align} 
In our setting, when $U$ is $m$-strongly convex, applying the
Bakry--\'Emery criterion \cite[Theorem~3.1]{menz-schlichting-2014}
to the joint potential gives $\pi=\mu\otimes\kappa$ a log-Sobolev constant
$\rho=\min\{m,1\}$.  In the regime $m\leq1$, this is $\rho=m$, and as
$\varepsilon \rightarrow 0$ our $\mathcal{W}_{2}$ iteration complexity satisfies:
\begin{align}\label{compare:stx:2}
  \widetilde{\mathcal N}_\varepsilon(0,\gamma_{\rm ULD}^\star)
=
  \widetilde{\mathcal O}\left(
    \frac{L\sqrt{\mathfrak M_\chi}A}
    {\sqrt{\rho}\varepsilon m^{3/2}}
  \right)
  =\widetilde{\mathcal O}\left(
    \frac{L^{3/2}\sqrt{d}}
    {\varepsilon m^{3/2}}
  \right).
\end{align}
By comparing \eqref{compare:stx:1} with \eqref{compare:stx:2}, we can see that our iteration complexity bound has a $\sqrt{m}$ improvement.
Indeed, this further improves
the iteration complexity in \cite[Theorem~1]{cheng2018underdamped} which is of order
$\widetilde{\mathcal O}\left(
    \frac{L^{2}\sqrt{d}}
    {\varepsilon m^{5/2}}
  \right)$.
\end{remark}

\begin{remark}[Comparison of ULD with \cite{lehec-2025} in TV distance]
A closer ULD benchmark in total variation is
\cite[Theorem~1]{lehec-2025}.  Writing its Poincar\'e constant as $C_P=1/m$, its
warm-start iteration complexity is
$\mathcal O^\ast\left(
    \frac{(L/m)^{3/2}\sqrt d}{\varepsilon}
  \right)$
under a Poincar\'e inequality.
For our result at $\alpha=0$,
if $U$ is $K$-weakly convex
where $K$ is of constant order, 
then, our iteration complexity 
is
$\mathcal N_\varepsilon(0,\gamma_{\rm ULD}^\star)
  =
  \widetilde{\mathcal O}\left(
    \frac{L\sqrt{\mathfrak M_\chi}\,A}
    {\varepsilon m^{3/2}}
  \right)  
  =
  \widetilde{\mathcal O}\left(
    \frac{L^{3/2}\sqrt d}
    {\varepsilon m^{3/2}}
  \right)$
which matches their iteration complexity.
On the other hand, in the log-concave case, the TV iteration complexity in
\cite[Theorem~1]{lehec-2025} improves to
$\mathcal O^\ast\left(
    \frac{(L/m)\sqrt d}{\varepsilon}
  \right)$.
Thus our convex ULD specialization
$\widetilde{\mathcal O}(L^{3/2}\sqrt d/(\varepsilon m))$
does not uniformly improve the best known dependence on all problem
parameters.  
\end{remark}

\begin{remark}[Comparison of HFHRMC with \cite{cortild-delplancke-oudjane-peypouquet-2025} in the non-convex setting]
\label{rem:cortild-complexity-comparison}
We next compare with the iteration complexity in
\cite[Algorithm~2 and Corollary~4.4]{cortild-delplancke-oudjane-peypouquet-2025}, which is based on the
same HFHRMC algorithm.  In their notation, our normalization
corresponds to
$a=b=1$,
$\beta_{\rm CDOP}=\sigma_{x,{\rm CDOP}}^2=\alpha$,
$\alpha_{\rm CDOP}=\sigma_{y,{\rm CDOP}}^2=\gamma$,
$\gamma_{\rm CDOP}=1$.
If $U$ is $m$-strongly convex, applying the Bakry--\'Emery
criterion \cite[Theorem~3.1]{menz-schlichting-2014} to the joint
potential gives $\pi$ a
log-Sobolev constant $\rho=\min\{m,1\}$.
Since our iteration complexity is in TV distance whereas that in
\cite[Corollary~4.4]{cortild-delplancke-oudjane-peypouquet-2025} is in KL,
Pinsker's inequality \cite[Lemma~2.5]{tsybakov-2009} suggests using KL
accuracy of order $\varepsilon^{2}$ for a fair comparison. Substitution
in the explicit constants of
\cite[Appendix~C.3]{cortild-delplancke-oudjane-peypouquet-2025} gives in the high-accuracy regime
\begin{equation}\label{eq:cortild-TV-complexity}
  \mathcal N_{\varepsilon}^{\rm CDOP}(\alpha,\gamma)
  =\widetilde{\mathcal O}\left(
    \frac{L^2(1+\alpha^2)B_{\rm CDOP}(\alpha,\gamma)d}
    {\rho^2s^3\varepsilon^2}
  \right),
\end{equation}
where, using the notation of
\cite[Corollary~4.4]{cortild-delplancke-oudjane-peypouquet-2025},
\[
  s:=\min\{\alpha,\gamma\},\qquad
  B_{\rm CDOP}(\alpha,\gamma)
  :=12+4L+2\alpha+4L\alpha^2+3\gamma.
\]
For $L\geq1$, minimizing $\mathcal N_{\varepsilon}^{\rm CDOP}(\alpha,\gamma)$ over both $\alpha$ and
$\gamma$ yields
\[
  \inf_{\alpha,\gamma>0}
  \mathcal N_{\varepsilon}^{\rm CDOP}(\alpha,\gamma)
  =\widetilde{\mathcal O}\left(
    \frac{L^3d}{\rho^2\varepsilon^2}
  \right).
\]
By comparison, under the common strong-convexity and warm-start
conditions, our iteration complexity in \eqref{eq:optimized-complexity-summary} with $R=0$ gives
\[
  \mathcal N_\varepsilon
  (\alpha_\varepsilon^\star,\gamma_\varepsilon^\star)
  =\widetilde{\mathcal O}\left(
    \frac{L^{3/2}\sqrt d}{m\varepsilon}
  \right).
\]
Thus the present analysis improves the certified dependence on
accuracy and dimension, as well as the dependence on $m$ when
$m\leq1$, for this common discretization.  This is a comparison of
proved upper bounds, not a claim of pointwise algorithmic dominance.
\end{remark}

\begin{remark}[Comparison of HFHRMC with \cite{li-zha-tao-2022} in the strongly-convex setting]
\label{rem:alpha-star-comparison}
In the setting where $U$ is $m$-strongly convex and $L$-smooth,
\cite[Corollary~5.4]{li-zha-tao-2022} obtains the iteration complexity
\begin{align}
\mathcal N_\varepsilon^{\rm LZT}
(\alpha,\gamma)
=\widetilde{\mathcal O}\left(\frac{b'(\alpha^{2}-\frac{\alpha}{\gamma}+\frac{1}{\gamma^{2}})}{m(\frac{m}{\gamma}+m\alpha)}\frac{\sqrt{d}}{\varepsilon}\right),
\end{align}
for some constant $b'$ that may depend on $\gamma$ (and hence $L$ and $m$ given the constraint $\gamma^{2}>L+m$);  
\cite[Remark~5.5]{li-zha-tao-2022} minimizes the $\mathcal W_2$ iteration-complexity constant and under the contraint
$\gamma^2>L+m$ in
\cite[Theorem~5.1]{li-zha-tao-2022} and obtains
$\alpha_{\rm HFHR}^*
  =\min\left\{
    \frac{\sqrt3-1}{\gamma},
    \frac{\gamma^2-L-m}{m\gamma}
  \right\}$.
Since the dependence of $b'$ on $\gamma$ is unspecified in \cite{li-zha-tao-2022}, it is not clear how their iteration complexity depends on $L$ and $m$. However, since their $\gamma$ has the contraint $\gamma^{2}>L+m$ whereas we know
the unconstrained optimal $\gamma$ according to our analysis should be of order $\sqrt{m}$ in the convex setting (see \eqref{eq:our-optimal-gamma}), we expect that their iteration complexity is worse than ours.
\end{remark}

\section{Numerical Experiments}\label{sec:numerical}

This section illustrates our theory by studying a toy example of anisotropic Gaussian target on synthetic data and three Bayesian learning problems on real data. All discrete-time experiments use the HFHRMC algorithm in \eqref{eq:frozen-force-scheme}, with one full data gradient evaluation per iteration. Throughout this section, HFHRMC refers to $\alpha>0$ and $\alpha=0$ gives KLMC. The first experiment computes the continuous-time Gaussian law, and the second computes a Gaussian upper bound on the total variation error of the discrete chain. The last two experiments compare HFHRMC and KLMC with numerical posterior references for a Bayesian logistic regression and a Bayesian neural network. In the Bayesian learning experiments (Sections~\ref{subsec:numerical-linear-regression}, \ref{subsec:numerical-logistic-regression}, and~\ref{subsec:numerical-bnn}), $K$ is the number of completed iterations and also the number of full data gradient evaluations along each particle trajectory. The initial state is indexed by $K=0$. The number of particles and independent ensembles is not included in this count.

\subsection{Anisotropic Gaussian target}

We first consider a toy example with the four-dimensional Gaussian position target $\mu=\mathcal N(0,H^{-1})$ with
$H:=\operatorname{diag}(0.25,1,4,9)$.
Thus, $m=0.25$, $L=9$, $R=0$, and we consider $\gamma=\sqrt m=0.5$ and $\gamma=2\sqrt m=1$. The initial joint law is Gaussian with the invariant covariance $\operatorname{diag}(H^{-1},\Id_4)$ and a position mean shift $2e_1$ in the slowest eigendirection. Its initial divergence is therefore $\chi^2(\rho_0\Vert\pi)=e-1$. Since the covariance remains invariant, the mean $z_t$ is obtained from the exact matrix exponential of the linear HFHR dynamics and
$$
\chi^2(\rho_t\Vert\pi)=\exp\!\left(z_t^{\top}\operatorname{diag}(H,\Id_4)z_t\right)-1.
$$
Consequently, this toy experiment contains no time discretization or Monte Carlo error.

Figure~\ref{fig:gaussian-chi2} compares $\alpha\in\{0,0.1,0.3,0.7\}$ at both friction values. At $t=18$ and $\gamma=0.5$, the exact $\chi^2$ divergences are $2.1391\times10^{-4}$, $1.1877\times10^{-4}$, $3.6623\times10^{-5}$, and $4.3128\times10^{-6}$, respectively. At $\gamma=1$, they are $2.7566\times10^{-6}$, $2.8670\times10^{-7}$, $1.2550\times10^{-8}$, and $2.4157\times10^{-9}$. Increasing $\alpha$ accelerates convergence in this Gaussian slow mode for both choices of $\gamma$, which is consistent with the additional $\alpha m$ contribution in Theorem~\ref{thm:main} and the $\chi^2$-divergence estimate in Corollary~\ref{cor:chi-square-TV}.

\begin{figure}[htbp]
\centering
\begin{subfigure}{0.48\textwidth}
\centering
\includegraphics[width=\linewidth]{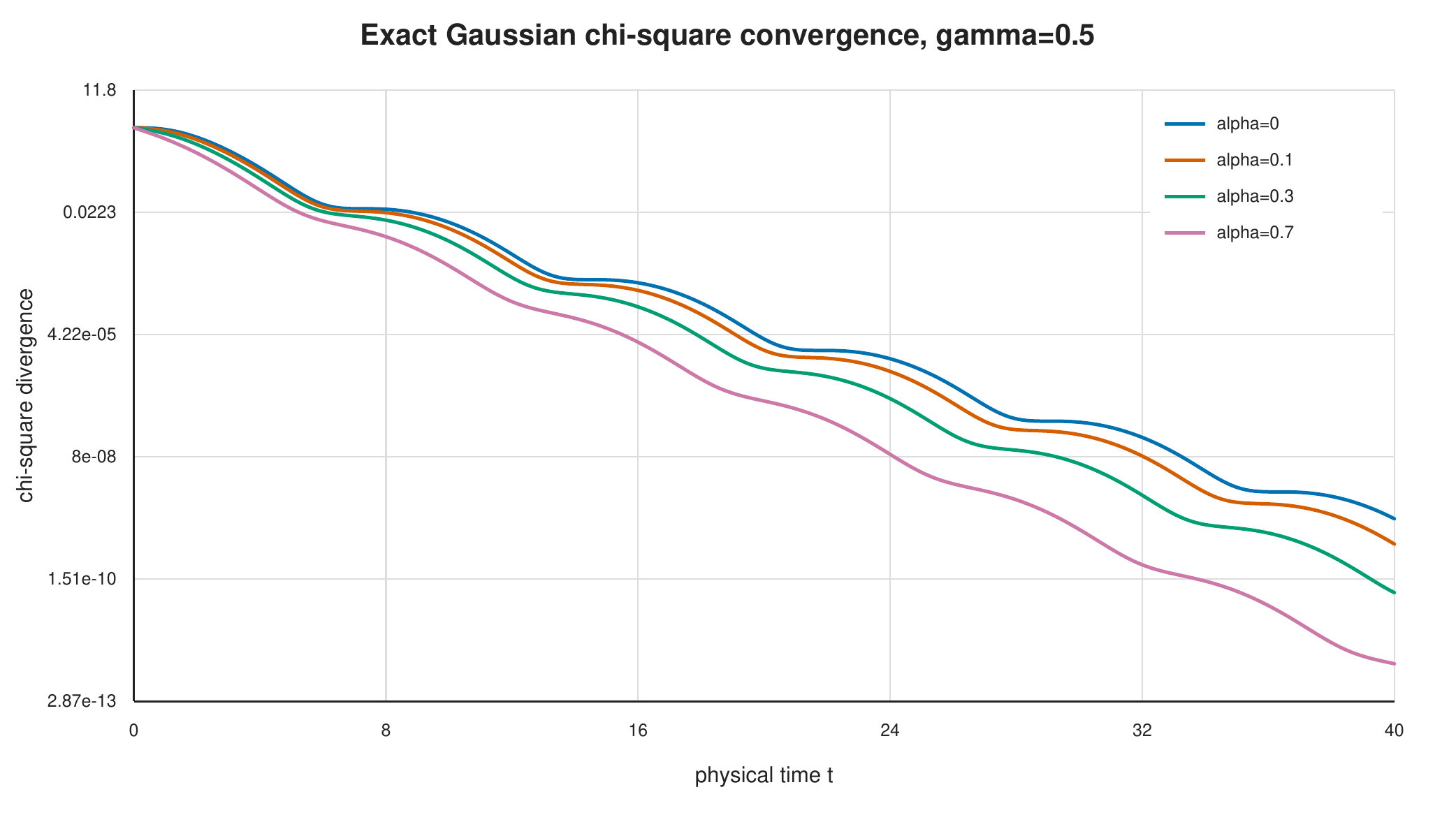}
\caption{$\gamma=\sqrt m=0.5$}
\end{subfigure}\hfill
\begin{subfigure}{0.48\textwidth}
\centering
\includegraphics[width=\linewidth]{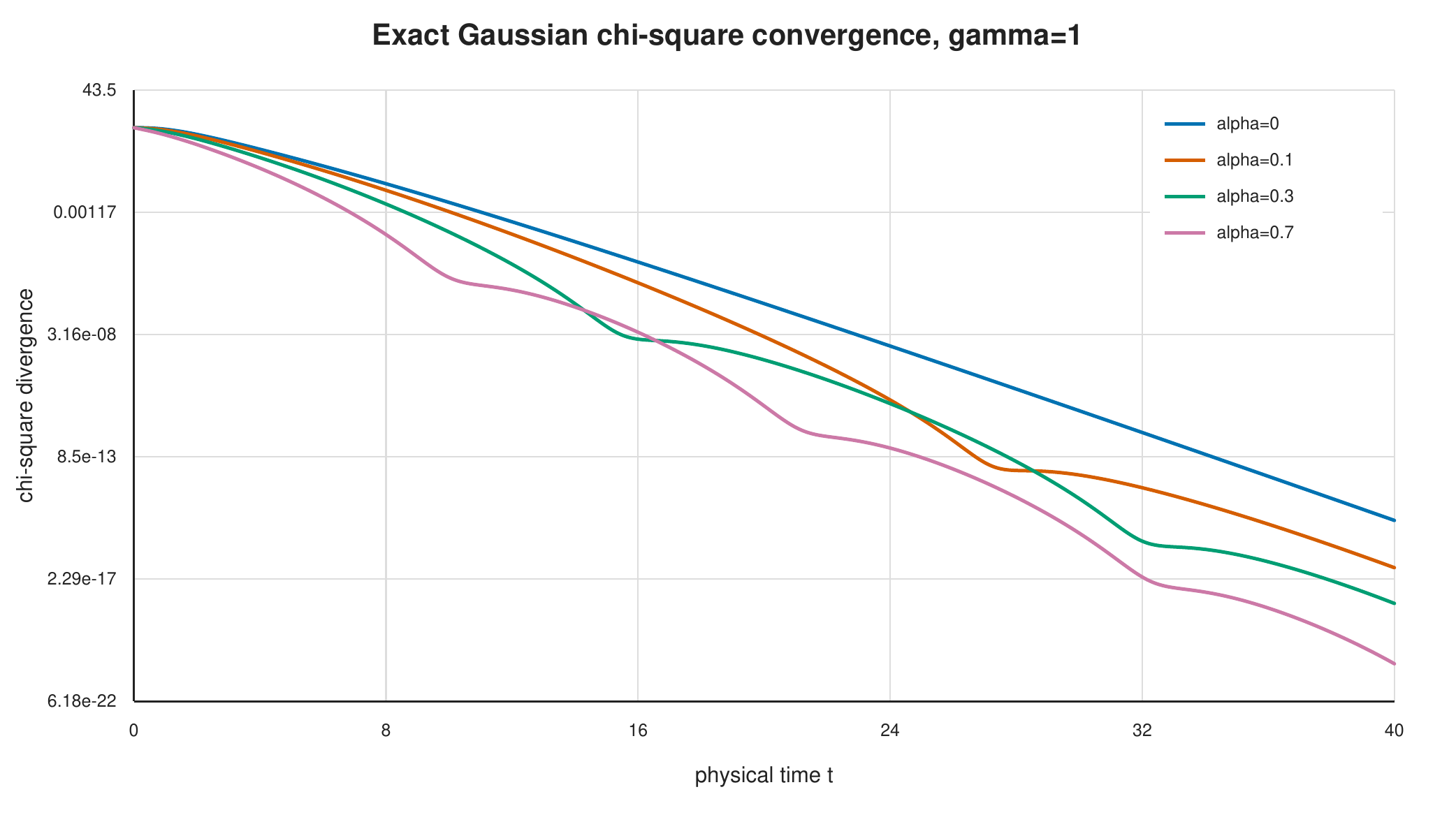}
\caption{$\gamma=2\sqrt m=1$}
\end{subfigure}
\caption{Exact convergence for the anisotropic Gaussian target in $\chi^2$-divergence. Every curve is evaluated analytically from the continuous-time Gaussian law.}
\label{fig:gaussian-chi2}
\end{figure}

\subsection{Bayesian linear regression on real data}\label{subsec:numerical-linear-regression}

We next conduct a Bayesian linear regresion experiment on real data. We use all $1599$ records and $11$ features of the UCI Wine Quality red wine data set.\footnote{The data set is available from \href{https://archive.ics.uci.edu/dataset/186/wine+quality}{the UCI Machine Learning Repository}, DOI 10.24432/C56S3T.} A fixed random split contains $1279$ training records and $320$ test records. Features and responses are centered and scaled using training statistics only. For the standardized response, we use the conjugate model
$$
y\mid\beta\sim\mathcal N(X\beta,\Id_{1279}),\qquad \beta\sim\mathcal N(0,4\Id_{11}).
$$
The posterior is $\mathcal N(\widehat\beta,H^{-1})$, where
$$
H:=X^{\top}X+0.25\Id_{11},\qquad \widehat\beta:=H^{-1}X^{\top}y.
$$
The eigenvalues of $H$ determine the geometry of this Gaussian posterior. Their smallest and largest values are $m=74.3373$ and $L=3976.8834$. Since the posterior potential is quadratic, its Hessian is constant, so $R=0$ and $A=\sqrt m+R=8.6219$. These values determine the friction scales used below. The response vector $y$ determines the posterior center $\widehat\beta$, but it does not change the precision matrix $H$. After transforming predictions back to the original quality scale, the posterior mean predictor has a test root mean square error (RMSE) of $0.6582$ quality units, compared with $0.8198$ for the mean training response. The convergence calculation below is instead based on the exact posterior law.

Since the posterior is Gaussian, its evolution can be calculated without simulating particles. Write the eigendecomposition of $H$ as
$$
H=V\operatorname{diag}(\lambda_1,\ldots,\lambda_{11})V^{\top},\qquad \lambda_1=m,
$$
and transform both the centered position and the momentum according to
$$
\widetilde Q=V^{\top}\left(Q-\widehat\beta\right),\qquad \widetilde P=V^{\top}P.
$$
In these coordinates, the joint dynamics separates into $11$ independent two-dimensional Gaussian systems, one for each eigenvalue of $H$. The continuous time mean and covariance are obtained from the corresponding matrix exponentials, and the discrete mean and covariance are propagated exactly under the linear Gaussian recurrence induced by the frozen force scheme. Let $v_1$ be the unit eigenvector corresponding to $\lambda_1=m$. We initialize the joint law by shifting its mean in this direction while keeping the target covariance,
$$
\mathbb E\left(Q_0-\widehat\beta\right)=\frac1{\sqrt m}v_1,\qquad \mathbb E P_0=0,\qquad \operatorname{Cov}\left((Q_0,P_0)\right)=\operatorname{diag}(H^{-1},\Id_{11}).
$$
The initial mean is therefore one posterior standard deviation away from the target mean in the slowest direction, since
$
\left(\frac1{\sqrt m}v_1\right)^{\top}H\left(\frac1{\sqrt m}v_1\right)=1,
$
and hence $\chi^2(\rho_0\Vert\pi)=e-1$. All distributions in this experiment are calculated exactly from their Gaussian means and covariance matrices.

We next specify how the iteration count is determined. For fixed $(\alpha,\gamma,\varepsilon)$, let $\rho_t^{\alpha,\gamma}$ be the exact law of the continuous-time dynamics.
Since $\norm{\rho_t^{\alpha,\gamma}-\pi}_{\mathrm{TV}}\leq B_{\rm cont}(t):=\frac12\sqrt{\chi^2(\rho_t^{\alpha,\gamma}\Vert\pi)}$, we choose $T$ as the earliest time for which $B_{\rm cont}(T)\leq\varepsilon/2$. Thus the total variation upper bound for the continuous law is at most half of the target tolerance at time $T$. For every integer $K\geq1$, we set $h_K=T/K$ and let $\left(Q_j^{(K)},P_j^{(K)}\right)_{j=0}^{K}$ denote the discrete chain with step size $h_K$. The exact Gaussian recurrence gives its law at the end of the same interval $[0,T]$. 
Then,
$$
\norm{\operatorname{Law}\left(Q_K^{(K)},P_K^{(K)}\right)-\pi}_{\mathrm{TV}}\leq B_K:=\frac12\sqrt{\chi^2\!\left(\operatorname{Law}\left(Q_K^{(K)},P_K^{(K)}\right)\Big\Vert\pi\right)}.
$$
Moreover, $B_K$ also bounds the total variation error of the coefficient distribution $Q_K^{(K)}$. We report the smallest $K$ for which $B_K\leq\varepsilon$. Increasing $K$ therefore reduces the step size while keeping the terminal time $T$ fixed. For every reported value, we repeat the calculation with $K-1$ equal intervals and verify
$
B_K\leq\varepsilon<B_{K-1},
$
where $B_{K-1}$ is computed from a different discretization with step size $h_{K-1}=T/(K-1)$. It is not obtained by stopping the $K$ iteration chain one iteration earlier. Although the Gaussian laws are propagated directly, $K$ is the number of updates and full data gradient evaluations that the corresponding sampler would use along one trajectory. This procedure is repeated for every candidate $\alpha$. At each accuracy, we choose the value requiring the smallest $K$ from the grid
$$
\{0.003,0.01,0.03,0.05,0.1,0.15,0.2,0.3,0.4,0.5,0.7,1,1.5,2\}.
$$
We report the results in Figure~\ref{fig:linear-regression-complexity}. 
The first panel compares HFHRMC and KLMC at the common friction $\gamma=2A$ and also shows KLMC at $\gamma=2\sqrt L$. The second panel compares HFHRMC and KLMC at the common friction $\gamma=A$.
In the first panel, the minimizing grid value is $\alpha=0.03$ for the displayed accuracies from $0.3$ to $10^{-3}$ and $\alpha=0.01$ for $3\times10^{-4}$ and $10^{-4}$. At $\varepsilon=10^{-4}$, HFHRMC requires $548652$ iterations, however, KLMC requires $1318015$ iterations at $\gamma=2A$ and $2000214$ iterations at $\gamma=2\sqrt L$. In the second panel, the grid selects $\alpha=0.03$ at every displayed accuracy. At $\varepsilon=10^{-4}$, HFHRMC requires $1034914$ iterations, compared with $4159467$ for KLMC. The decrease in the selected $\alpha$ in the first panel agrees qualitatively with Proposition~\ref{prop:optimal-alpha}, which gives $\alpha_\varepsilon^\star\to0$ as $\varepsilon \rightarrow 0$.

\begin{figure}[htbp]
\centering
\begin{subfigure}{0.48\textwidth}
\centering
\includegraphics[width=\linewidth]{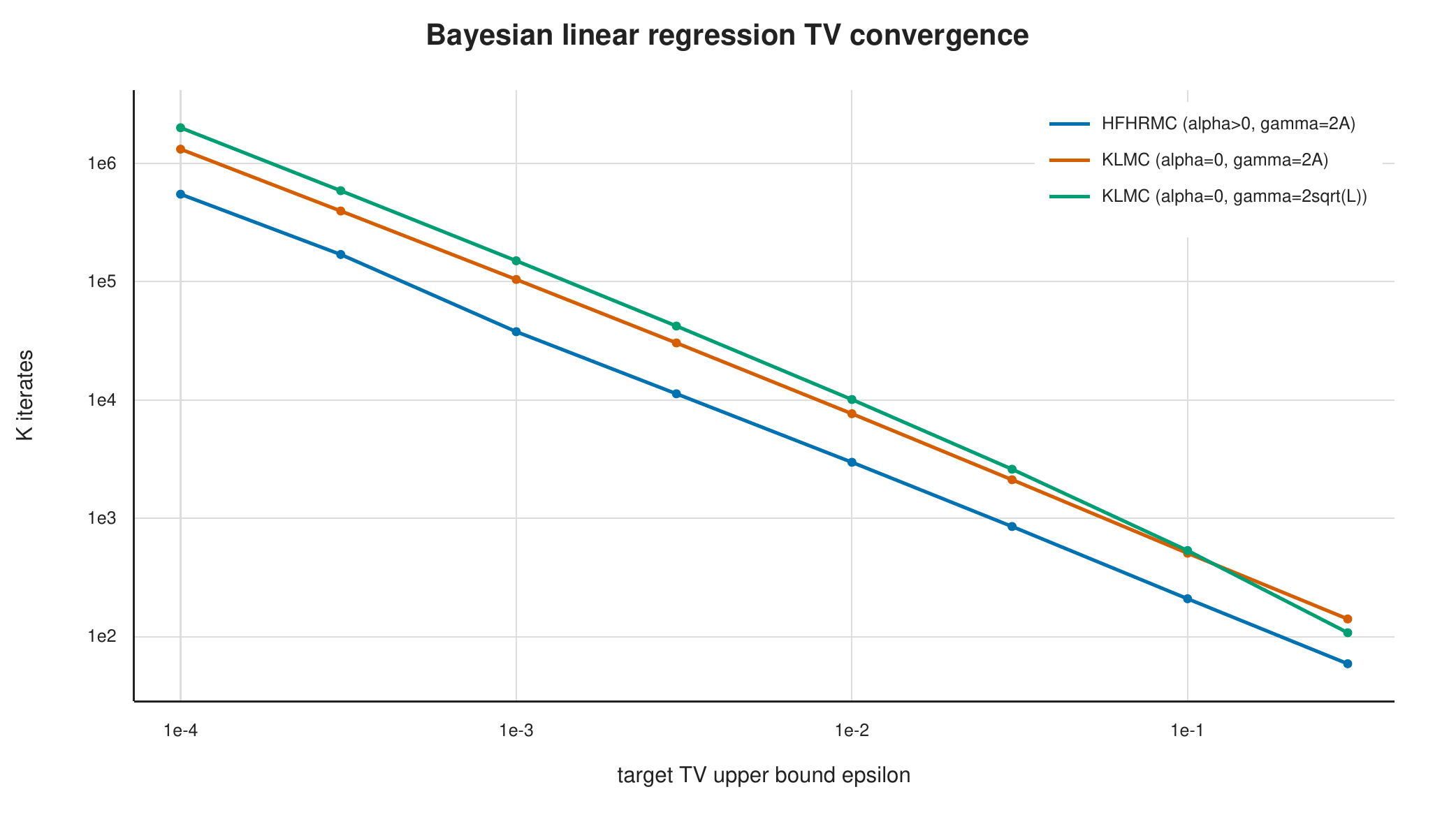}
\caption{The friction choices $\gamma=2A$ and $\gamma=2\sqrt L$}
\end{subfigure}\hfill
\begin{subfigure}{0.48\textwidth}
\centering
\includegraphics[width=\linewidth]{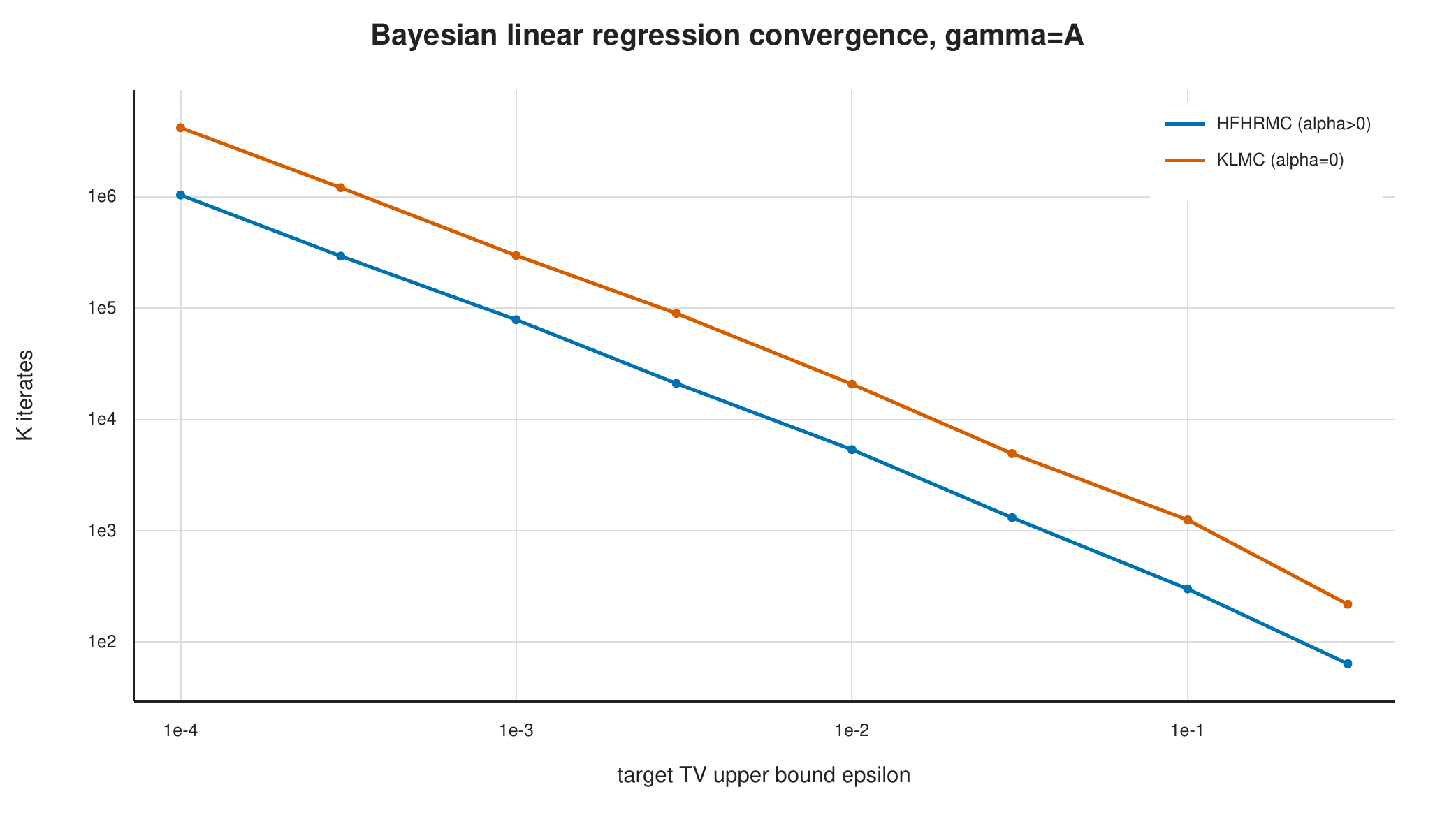}
\caption{The common friction $\gamma=A$}
\end{subfigure}
\caption{The number $K$ of iterations required to satisfy $B_K\leq\varepsilon$ for the Bayesian linear regression posterior on wine quality real data. The quantity $B_K$ is the analytically computed $\chi^2$-divergence upper bound on joint total variation. Positive $\alpha$ is selected from the stated grid separately at each accuracy. For the corresponding sampler, $K$ is also the number of full data gradient evaluations along one trajectory. Both axes are logarithmic. We note $\gamma = 2\sqrt{L}$ corresponds to~\cite{cheng2018underdamped}.}
\label{fig:linear-regression-complexity}
\end{figure}

\subsection{Bayesian logistic regression on real data}\label{subsec:numerical-logistic-regression}

We next conduct a Bayesian logistic regression experiment using the UCI Breast Cancer Wisconsin Diagnostic data set.\footnote{The data set is available from \href{https://archive.ics.uci.edu/dataset/17/breast+cancer+wisconsin+diagnostic}{the UCI Machine Learning Repository}, DOI 10.24432/C5DW2B.} We use a fixed stratified subset with $96$ training and $64$ test observations. The six features are radius, texture, smoothness, compactness, concavity, and number of concave points, all measured from the original image. Standardization uses training statistics only, after which the design is multiplied by $0.65$. With a standard Gaussian prior, the posterior potential is
$$
U(\beta)=\frac12\abs\beta^2+\sum_{i=1}^{96}\left[\log\!\left(1+e^{x_i^{\top}\beta}\right)-y_i x_i^{\top}\beta\right].
$$
It satisfies
$$
\Id_6\preceq\nabla^2U(\beta)\preceq\left(1+\frac14\norm{X}_{\mathrm{op}}^2\right)\Id_6,
$$
so that we use the lower bound $m=1$ and $R=0$, and the scaled design gives $L\leq39.7118$. We use a weighted importance sample with $60000$ proposal draws as the numerical posterior reference. Its effective sample size is $45100.5$, or $75.17\%$ of the proposal size.

With $m=1$, we have $A=1$ and choose $\gamma=2A=2$. Since $r_0(2)=2/9$, we set $\alpha=2/9$, which increases $r(\alpha,\gamma)$ in \eqref{eq:rate-proxy} from $2/9$ to $4/9$, and use $h=0.30/\sqrt L=0.047606$. This choice follows the continuous time rate proxy, whereas $\alpha_\varepsilon^\star$ in Proposition~\ref{prop:optimal-alpha} minimizes the discrete complexity bound. The comparisons are KLMC with the same $\gamma$ and $h$, and KLMC with $\gamma=2\sqrt L=12.6035$ and $h=0.50/\sqrt L=0.079343$. For every ensemble, $Q_0$ and $P_0$ are independent standard Gaussian arrays, and all configurations share these arrays and the random number streams. Each method performs $K_{\max}=500$ iterations using $24$ independent ensembles of $384$ particles. The curves report ensemble means and pointwise $95\%$ Student $t$ confidence intervals.

To define posterior convergence, consider one ensemble after $K$ iterations with $M=384$ coefficient particles $\left\{\beta_K^{(j)}\right\}_{j=1}^M$. Let $\left\{\widetilde\beta^{(s)},\omega_s\right\}_{s=1}^{S}$ denote the weighted numerical reference, where $S=60000$, $\omega_s\geq0$, and $\sum_{s=1}^{S}\omega_s=1$. The two empirical posterior laws are
$\widehat\nu_K=\frac1M\sum_{j=1}^{M}\delta_{\beta_K^{(j)}}$
and $\widehat\nu_{\rm ref}=\sum_{s=1}^{S}\omega_s\delta_{\widetilde\beta^{(s)}}$.
For each of the $96$ fixed unit directions $v_\ell$, let $\widehat F_{K,\ell}^{-1}$ be the empirical quantile function of $(v_\ell^{\top}\cdot)_{\#}\widehat\nu_K$. For the weighted reference, let $\widehat F_{{\rm ref},\ell}^{-1}$ be the interpolated projected quantile curve obtained from the cumulative weight midpoints used in the computation. With $J=512$ and $u_r=(r-1/2)/J$, the plotted posterior convergence diagnostic is
$$
D_{\rm post}(K)=\left[\frac1{96J}\sum_{\ell=1}^{96}\sum_{r=1}^{J}\left|\widehat F_{K,\ell}^{-1}(u_r)-\widehat F_{{\rm ref},\ell}^{-1}(u_r)\right|^2\right]^{1/2}.
$$
Thus $D_{\rm post}$ is the sliced $\mathcal W_2$ approximation computed with $96$ directions and $512$ quantile levels. It compares projected coefficient distributions rather than a single posterior moment. 

To define posterior mean probability convergence, write $p_\beta(x_i)=\sigma(x_i^{\top}\beta)$, where $\sigma(z)=(1+e^{-z})^{-1}$, for the positive class probability at the $i$th test input. The current and reference posterior mean probabilities and their discrepancy are
\begin{align*}
&\overline p_K(x_i)=\frac1M\sum_{j=1}^{M}p_{\beta_K^{(j)}}(x_i),\qquad\overline p_{\rm ref}(x_i)=\sum_{s=1}^{S}\omega_s p_{\widetilde\beta^{(s)}}(x_i), 
\\
&D_{\rm mean}(K)=\left[\frac1{64}\sum_{i=1}^{64}\left|\overline p_K(x_i)-\overline p_{\rm ref}(x_i)\right|^2\right]^{1/2}.
\end{align*}
Thus, $D_{\rm mean}$ is the RMSE of the posterior expected class probabilities over the $64$ test inputs. It measures convergence of the posterior predictive mean, not convergence of the entire posterior distribution or classification accuracy. Figure~\ref{fig:logistic-convergence} reports both diagnostics. Over the fixed transient window $0\leq K\leq240$, the sliced $\mathcal W_2$ areas under the curve are $28.4475$, $46.3072$, and $61.2117$ for HFHRMC, KLMC at $\gamma=2$, and KLMC at $\gamma=2\sqrt L$, respectively. The corresponding posterior mean probability RMSE areas are $2.9340$, $5.3162$, and $7.3604$. For sliced $\mathcal W_2$, the first recorded values after which the upper confidence limit remains below $0.10$ are $K=53$, $77$, and $164$, respectively. For probability RMSE, the corresponding values for the threshold $0.02$ are $K=30$, $53$, and $93$. Over the late window $280\leq K\leq500$, the coefficients of variation remain below $6.5\%$ for both diagnostics and all configurations. Since each iteration uses one full data gradient evaluation, HFHRMC reaches both thresholds with fewer gradient evaluations.

\begin{figure}[htbp]
\centering
\begin{subfigure}{0.48\textwidth}
\centering
\includegraphics[width=\linewidth]{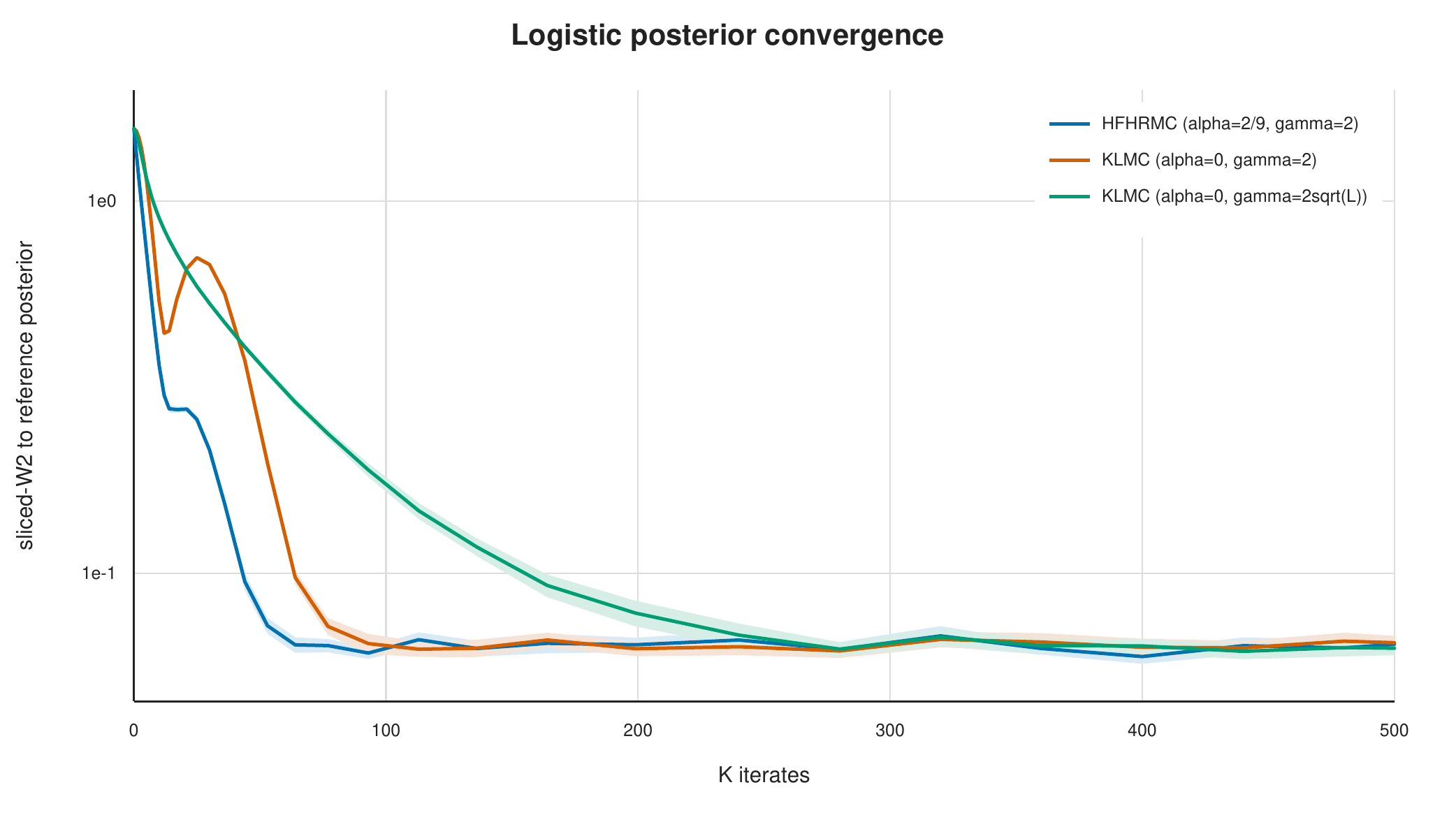}
\caption{Posterior convergence}
\end{subfigure}\hfill
\begin{subfigure}{0.48\textwidth}
\centering
\includegraphics[width=\linewidth]{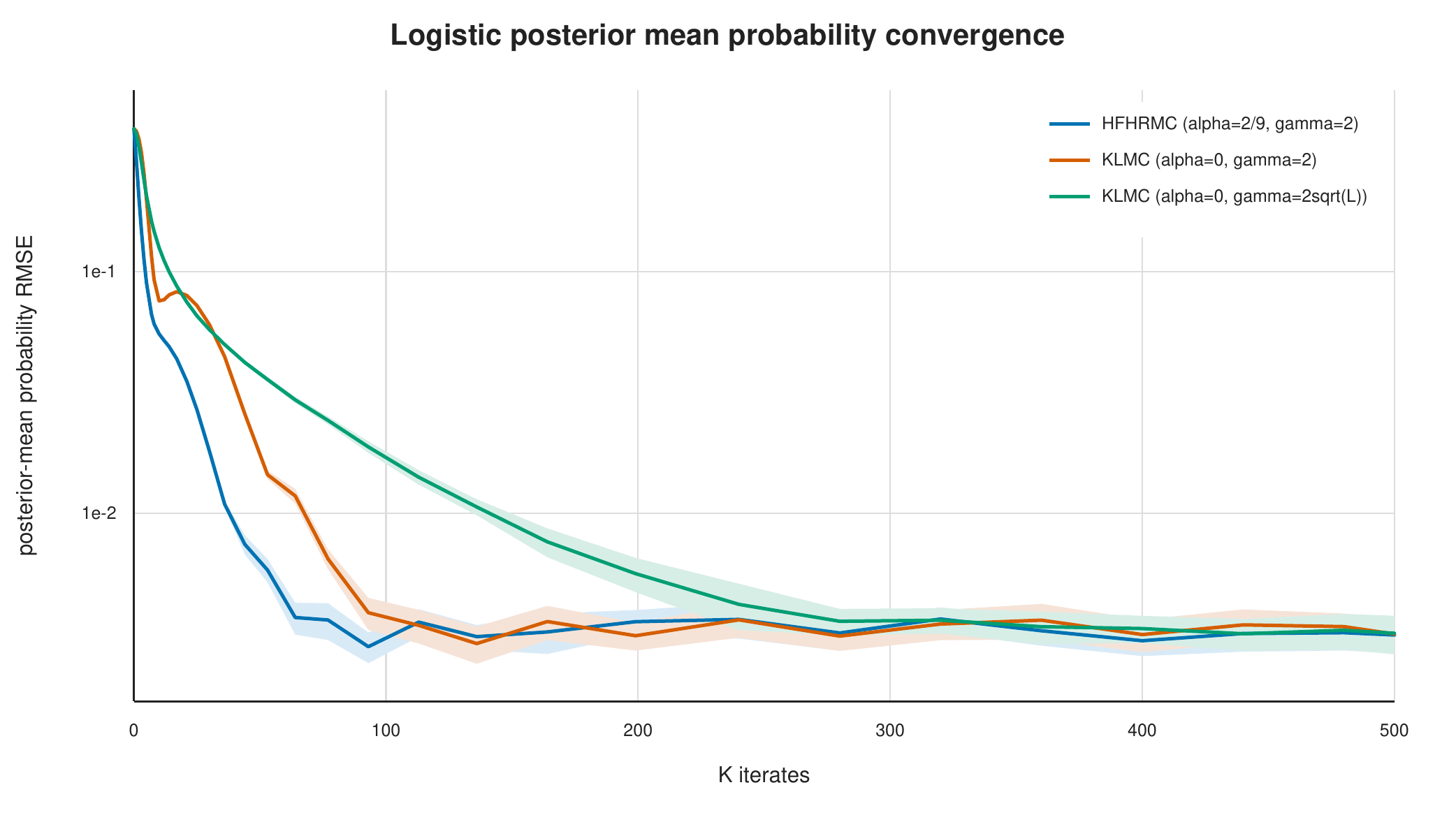}
\caption{Posterior mean probability convergence}
\end{subfigure}
\caption{Convergence for the real data Bayesian logistic posterior. Solid curves are independent ensemble means and shaded regions are pointwise $95\%$ confidence intervals. The horizontal axis reports $K$ iterates. The state shown at $K$ has used exactly $K$ full data gradient evaluations along each particle trajectory. We note $\gamma = 2\sqrt{L}$ corresponds to~\cite{cheng2018underdamped}.}
\label{fig:logistic-convergence}
\end{figure}

\subsection{Bayesian neural network with bounded weights and biases on real data}\label{subsec:numerical-bnn}

The final experiment is conducted for a Bayesian neural network with bounded weights and biases on real data; it uses the same training and test split with the standardized radius and texture features. Let $\theta\in\mathbb R^{13}$ be the latent parameter of a tanh network with two inputs, three hidden units, and one output. Its $13$ physical weights and biases are obtained coordinatewise from
$$
\vartheta_j=1.5\tanh(\theta_j)\in(-1.5,1.5),\qquad j=1,\ldots,13,
$$ and the latent prior is $\theta\sim\mathcal N(0,\Id_{13})$. Thus boundedness refers to the transformed weights and biases used by the network. The posterior of the latent variable $\theta$ has support $\mathbb R^{13}$, and its pushforward under the coordinatewise tanh map is supported on $(-1.5,1.5)^{13}$. The transformation pushes the Gaussian latent prior forward to a nonuniform prior supported on $(-1.5,1.5)^{13}$. Bounded parameter classes have also been used in Bayesian neural network theory. In particular, \cite{polson-rockova-2018} uses a uniform slab on $[-1,1]$ for active network parameters, \cite{kong-kim-2025} proves posterior concentration using approximation by fully connected networks with bounded parameters. Unlike the slab construction in \cite{polson-rockova-2018}, the tanh map gives a smooth unconstrained parametrization for the Langevin dynamics.

Since the output layer has three weights and one bias, the network logit satisfies $\abs{a_\theta(x)}\leq6$. If $a_\theta(x)$ denotes this logit, the Gibbs posterior potential is
$$
U(\theta)=\frac12\abs\theta^2+\frac6{96}\sum_{i=1}^{96}\left[\log\!\left(1+e^{a_\theta(x_i)}\right)-y_i a_\theta(x_i)\right].
$$
For $y\in\{0,1\}$, the oscillation of $\log(1+e^a)-ya$ over $a\in[-6,6]$ equals $6$. Consequently, if $W$ denotes the data term in $U$, then $\operatorname{osc}(W)\leq36$. The bounded perturbation principle \cite{holley-stroock-1987}, applied to the Gaussian latent prior, it gives a Poincar\'e inequality for the posterior. The finite data set and the smooth bounded derivatives of tanh also imply that $\nabla U$ is globally Lipschitz, and the posterior can be nonconvex. This example therefore satisfies the structural assumptions in a nonconvex setting, although we do not use the resulting conservative constants to tune the sampler. A weighted prior importance sample with $50000$ draws defines the numerical reference and has effective sample size $14145.1$, or $28.29\%$.

We compare HFHRMC with $(\alpha,\gamma,h)=(0.4,1,0.03)$ and KLMC with $(\alpha,\gamma,h)=(0,1,0.03)$. These parameters compare the methods at common $\gamma$ and $h$ rather than optimize over sampler parameters. We initialize $Q_0\sim\mathcal N(s,\Id_{13})$, where the entries of $s$ are equally spaced from $-0.7$ to $0.7$, and $P_0\sim\mathcal N(0,\Id_{13})$. Both methods use the same initial arrays, random number streams, friction, and step size. Each method performs $K_{\max}=260$ iterations and produces states indexed by $K=0,\ldots,260$. The state at $K$ has therefore used exactly $K$ full data gradient evaluations along each particle trajectory. We run $12$ independent ensembles of $384$ particles. For a latent parameter $\theta$, define its vector of test probabilities by
$$
f_\theta=\left(p_\theta(x_1),\ldots,p_\theta(x_{64})\right),\qquad p_\theta(x_i)=\sigma\!\left(a_\theta(x_i)\right).
$$
To avoid hidden unit permutation ambiguity, we use the same finite direction formula and projected quantile interpolation as $D_{\rm post}$ for the empirical and weighted reference laws of $f_\theta$ in $[0,1]^{64}$, again using $96$ fixed directions and $512$ quantile levels. The first Bayesian neural network (BNN) panel compares the posterior laws of predictive probability vectors, which are invariant under hidden unit permutations, rather than distances between latent parameter vectors. We use the formula for $D_{\rm mean}$ with $p_\beta$ replaced by $p_\theta$, so the second panel measures the RMSE of the posterior mean probability vector relative to the numerical reference. It measures a posterior mean rather than a second distributional distance.

Figure~\ref{fig:bnn-convergence} reports both diagnostics. Over the recorded checkpoints spanning $K=0$ to $K=260$, the areas under the sliced $\mathcal W_2$ curve for the predictive probability distribution are $7.3413$ for HFHRMC and $9.8377$ for KLMC. The posterior mean probability RMSE areas are $6.1325$ and $8.9156$, respectively. Over the recorded checkpoints with $K\geq151$, the sliced $\mathcal W_2$ coefficients of variation are $1.69\%$ and $3.51\%$, and the probability RMSE coefficients of variation are $5.63\%$ and $7.27\%$. Thus, HFHRMC has lower empirical areas under both reported error curves over the $260$ iteration budget and lower late window variability.

\begin{figure}[htbp]
\centering
\begin{subfigure}{0.48\textwidth}
\centering
\includegraphics[width=\linewidth]{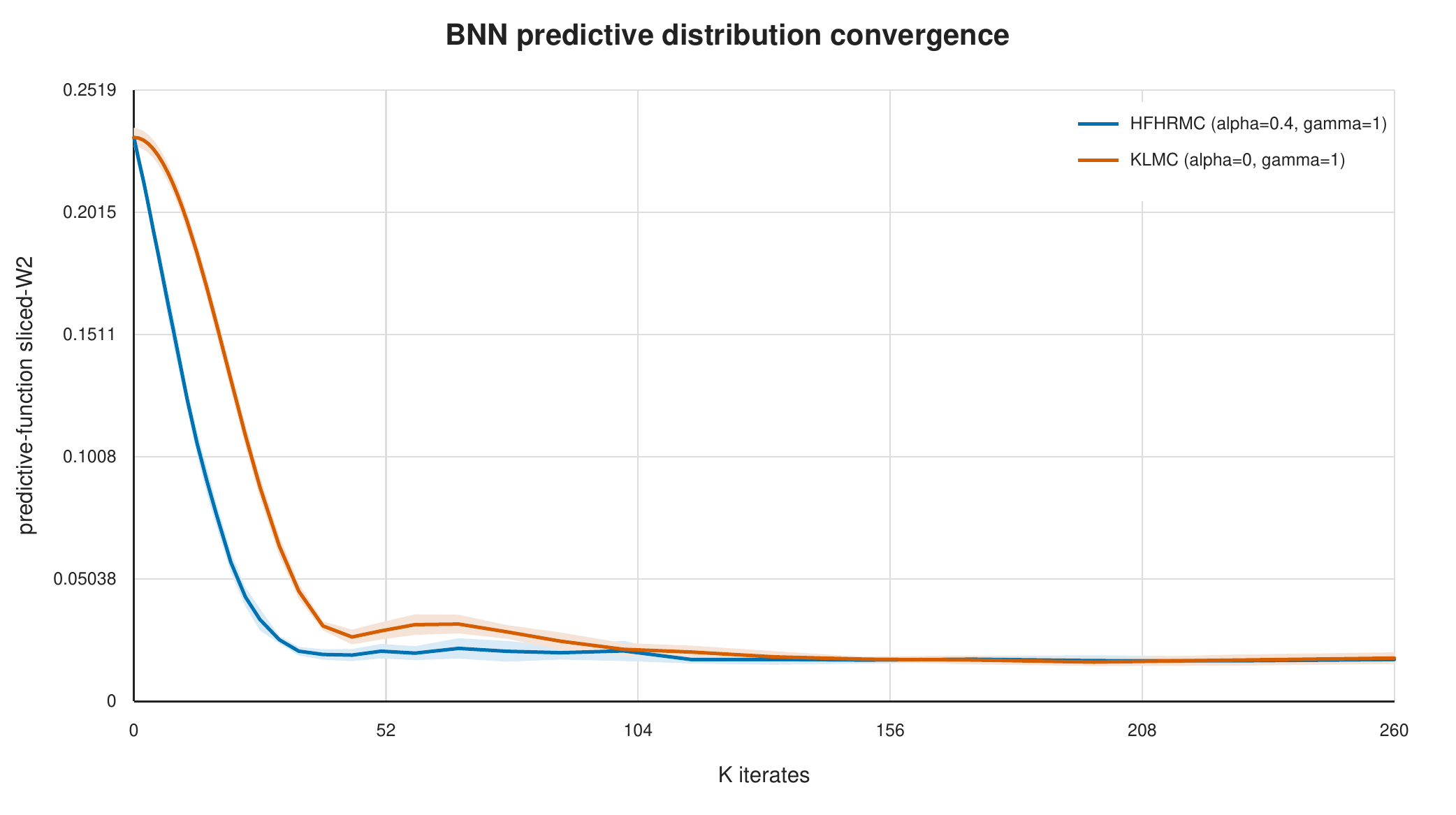}
\caption{Predictive distribution convergence}
\end{subfigure}\hfill
\begin{subfigure}{0.48\textwidth}
\centering
\includegraphics[width=\linewidth]{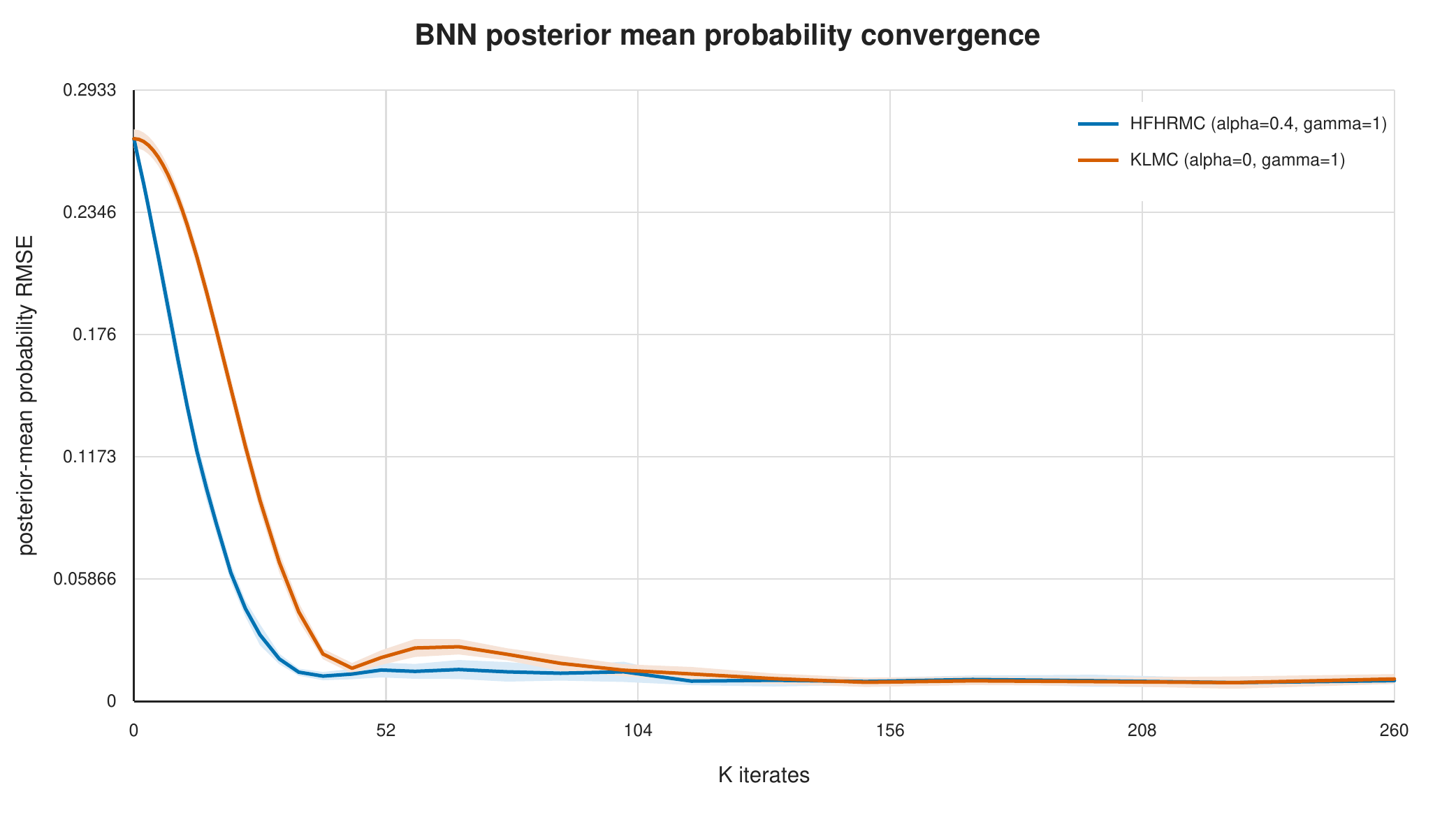}
\caption{Posterior mean probability convergence}
\end{subfigure}
\caption{Convergence for the Bayesian neural network with transformed physical weights and biases in $(-1.5,1.5)$. Solid curves are independent ensemble means and shaded regions are pointwise $95\%$ confidence intervals. The horizontal axis reports $K$ iterates. The state shown at $K$ has used exactly $K$ full data gradient evaluations along each particle trajectory.}
\label{fig:bnn-convergence}
\end{figure}


\section{Conclusion}\label{sec:conclude}

We established an explicit $L^2(\pi)$ convergence rate for HFHR
dynamics under a position Poincar\'e inequality and weighted
regularity assumptions, without requiring convexity of the potential.
The estimate combines the hypocoercive ULD contribution with the
direct position-diffusion contribution and is uniform at the ULD
endpoint $\alpha=0$.  The main analytic ingredient is an
HFHR-adapted time--augmented Poincar\'e inequality, in which the
position diffusion is handled through the position regularity of the
divergence test rather than through a velocity negative Sobolev norm.
As a by-product, we also obtained explicit convergence in $\chi^2$
divergence and total variation for $L^2$ initial densities.  For the
discrete-time HFHRMC algorithm, we obtained a non-asymptotic
convergence bound and iteration complexity by using a path-space KL
estimate without a separate time-uniform moment-stability assumption.
Our results hold throughout the parameter space
$\alpha\geq0$, $\gamma>0$, remain regular at the ULD endpoint, and
therefore allow unconstrained parameter tuning.  The
optimized positive position diffusion gives a finite-accuracy
improvement, while the leading high-accuracy order agrees with that of
the separately optimized ULD endpoint. 
Our iteration complexity improves upon the existing works on HFHR algorithms.  
The numerical experiments complemented these results including the Bayesian learning problems on real data. 
They illustrated the effect and benefit of positive $\alpha$ for the HFHRMC algorithm.

\appendix

\section{Weighted Elliptic Estimates}
\label{app:elliptic}

This appendix supplies the analytic ingredients for the divergence
construction.  They are the time-augmented versions of the estimates
in \cite[Lemmas~2.4 and~2.6]{cao-lu-wang-2023}.  We use the notation of
Section~\ref{sec:setup}
on $I\times\R^d$, with measure $\lambda_T\otimes\mu$; no spatial
boundary is present.

We begin by tensorizing the position Poincar\'e inequality with the
Neumann Poincar\'e inequality in time.  This provides coercivity for
the mixed time--position operator.

\begin{lemma}[Time--position Poincar\'e inequality]
\label{lem:appendix-PI}
For every $u\in H^1(\lambda_T\otimes\mu)$,
\begin{align}
  \norm{u-(u)_{\lambda_T\otimes\mu}}_{L^2}^2
  \leq
  \frac1m\norm{\nabla_qu}_{L^2}^2
  +\frac{T^2}{\pi^2}\norm{\partial_tu}_{L^2}^2
 \leq
  \max\left\{\frac1m,\frac{T^2}{\pi^2}\right\}
  \norm{\widetilde\nabla u}_{L^2}^2.
  \label{eq:appendix-PI}
\end{align}
\end{lemma}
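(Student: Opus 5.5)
The plan is to prove \eqref{eq:appendix-PI} by tensorizing two one-dimensional-in-spirit Poincar\'e inequalities: the position inequality of Assumption~\ref{ass:PI} for $\mu$, and the Neumann Poincar\'e inequality on $(0,T)$ for the normalized Lebesgue measure $\lambda_T$. The latter has sharp constant $T^2/\pi^2$, because the first nonzero Neumann eigenvalue of $-\partial_{tt}$ on $(0,T)$ is $\pi^2/T^2$, with eigenfunction $\cos(\pi t/T)$. By density it suffices to treat $u$ smooth with bounded support in $q$; the general $H^1(\lambda_T\otimes\mu)$ case then follows by approximation, since both sides are continuous in the $H^1$ norm.

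\textbf{Step 1: decompose.} Write $\bar u(q):=\int_I u(t,q)\,\dd\lambda_T(t)$ for the time average. The mean of $u$ under $\lambda_T\otimes\mu$ equals $\mu(\bar u)$. Since $u-\bar u$ has zero $\lambda_T$-mean for each fixed $q$, it is orthogonal in $L^2(\lambda_T\otimes\mu)$ to every function of $q$ alone. This gives the exact splitting
\[
  \norm{u-(u)_{\lambda_T\otimes\mu}}_{L^2}^2
  =\int_{\R^d}\norm{u(\cdot,q)-\bar u(q)}_{L^2(\lambda_T)}^2\,\dd\mu(q)
  +\Var_\mu(\bar u).
\]

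\textbf{Step 2: bound each piece.} For the first term, apply the Neumann Poincar\'e inequality in $t$ for each fixed $q$; integrating in $q$ bounds it by $\frac{T^2}{\pi^2}\norm{\partial_tu}_{L^2}^2$. For the second term, apply Assumption~\ref{ass:PI} to $\bar u$ to get the bound $\frac1m\norm{\nabla_q\bar u}_{L^2(\mu)}^2$. Then use Jensen (Cauchy--Schwarz in $t$), $\abs{\nabla_q\bar u(q)}^2\leq\int_I\abs{\nabla_qu(t,q)}^2\,\dd\lambda_T$, which bounds it further by $\frac1m\norm{\nabla_qu}_{L^2}^2$. Adding the two estimates gives the first inequality in \eqref{eq:appendix-PI}. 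The second inequality is immediate from $\abs{\widetilde\nabla u}^2=\abs{\partial_tu}^2+\abs{\nabla_qu}^2$.

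\textbf{Main obstacle.} The only delicate point is justifying that $\bar u\in H^1(\mu)$ with $\nabla_q\bar u=\int_I\nabla_qu\,\dd\lambda_T$, together with the measurability needed for Fubini, at the $H^1$ level. I would handle this via the smooth-approximation reduction above: for smooth $u$ with bounded $q$-support, differentiation under the integral is trivial. The inequality is then passed to the limit using continuity of both sides in $H^1(\lambda_T\otimes\mu)$, which rests on the density of such functions noted in Section~\ref{sec:functional}.
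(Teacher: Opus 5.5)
Your proof is correct and uses the same tensorization-plus-orthogonality argument as the paper. The only difference is which average you split off: you remove the time average $\bar u(q)$ (Neumann inequality fiberwise in $t$, then position Poincar\'e plus Jensen for $\bar u$), whereas the paper removes the position average $\bar u(t)$ (position Poincar\'e for each fixed $t$, then Neumann inequality plus Jensen for $\bar u$), and both give identical constants. Your density reduction is also fine, with one caveat: the density remark in Section~\ref{sec:functional} concerns $H^1(\pi)$ on $\R^{2d}$, so here you need approximants that are smooth up to $t=0,T$ (e.g.\ by even reflection in time), which is routine.
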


\begin{proof}
Write $u_t(q)=u(t,q)$ and
$\bar u(t)=\int_{\mathbb{R}^{d}} u_t\,\dd\mu$.  Variance decomposition and Assumption
\ref{ass:PI} give
\[
  \int_I\norm{u_t-\bar u(t)}_{L^2(\mu)}^2\,\dd\lambda_T(t)
  \leq\frac1m\norm{\nabla_qu}_{L^2}^2.
\]
The Neumann Poincar\'e inequality on $(0,T)$ gives
\[
  \norm{\bar u-(\bar u)_{\lambda_T}}_{L^2(\lambda_T)}^2
  \leq\frac{T^2}{\pi^2}
  \norm{\partial_t\bar u}_{L^2(\lambda_T)}^2
  \leq\frac{T^2}{\pi^2}\norm{\partial_tu}_{L^2}^2.
\]
The two components are orthogonal, which proves
\eqref{eq:appendix-PI}.
\end{proof}

The unbounded coefficient $\nabla U$ appears when the Hamiltonian
operator acts on the divergence test.  Assumption~\ref{ass:growth}
gives the following weighted multiplier estimate.

\begin{lemma}[Control of multiplication by $\nabla U$]
\label{lem:weighted-multiplier}
Under Assumption~\ref{ass:growth}, every
$\varphi\in H^1(\lambda_T\otimes\mu)$ satisfies
\begin{equation}\label{eq:weighted-multiplier}
  \norm{\varphi\nabla U}_{L^2}^2
  \leq
  16\norm{\nabla_q\varphi}_{L^2}^2
  +4Md\norm\varphi_{L^2}^2.
\end{equation}
\end{lemma}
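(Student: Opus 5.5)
The plan is to prove the bound pointwise in $t$ for a smooth compactly supported $\varphi(t,\cdot)$ and then pass to $H^1(\lambda_T\otimes\mu)$ by density. The tool is the integration-by-parts identity for $\mu$, namely $\int g\,\Delta U\,\dd\mu=\int\nabla g\cdot\nabla U\,\dd\mu$. This follows from $\nabla_q^*\nabla_q$ applied componentwise: with $F=\nabla U$ one has $\nabla_q^*F=-\Delta U+|\nabla U|^2$, so $\int g\,(|\nabla U|^2-\Delta U)\,\dd\mu=\int\nabla g\cdot\nabla U\,\dd\mu$. Take $g=\varphi^2$. Then
\[
  \int\varphi^2\abs{\nabla U}^2\,\dd\mu
  =\int\varphi^2\Delta U\,\dd\mu
  +2\int\varphi\,\nabla\varphi\cdot\nabla U\,\dd\mu.
\]

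Next insert the Laplacian bound \eqref{eq:Laplacian-growth}. The first term on the right is then at most
\[
  Md\norm\varphi^2+\frac\delta2\norm{\varphi\nabla U}^2 .
\]
The cross term, by Young's inequality with weight $\eta$, is at most
\[
  \eta\norm{\varphi\nabla U}^2+\eta^{-1}\norm{\nabla_q\varphi}^2 .
\]
Since $\delta<1$, choose $\eta=1/4$. The total coefficient of $\norm{\varphi\nabla U}^2$ on the right is then below $3/4$, and absorbing it gives
\[
  \tfrac14\norm{\varphi\nabla U}^2\le Md\norm\varphi^2+4\norm{\nabla_q\varphi}^2 .
\]
Multiplying by $4$ yields exactly the constants $16$ and $4Md$. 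Integrating in $t$ against $\lambda_T$ gives the time-augmented version, since no time derivatives enter.

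The main obstacle is justifying the integration by parts and the absorption step. Absorption requires knowing a priori that $\norm{\varphi\nabla U}<\infty$; for compactly supported smooth $\varphi$ this is clear, and the boundary terms vanish. For general $\varphi\in H^1$, approximate by $\varphi_n\in C_c^\infty$ converging in $H^1(\lambda_T\otimes\mu)$, which is possible by the density remark made for $V$. The bound shows that $(\varphi_n\nabla U)$ is Cauchy in $L^2$, and Fatou's lemma along an a.e.\ convergent subsequence transfers the inequality to the limit. The $C^2$ regularity of $U$ makes the pointwise identity legitimate on compact supports.
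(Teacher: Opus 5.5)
Your proof is correct and is the paper's argument: the same weighted integration by parts with $g=\varphi^2$, Young's inequality with weight $1/4$ on the cross term, the Laplacian bound \eqref{eq:Laplacian-growth}, and absorption using $1/4+\delta/2<3/4$, which gives exactly the constants $16$ and $4Md$. The density and a priori finiteness justification you add supplies a step the paper leaves implicit; it is best done slice-wise in $t$, since only $\varphi$ and $\nabla_q\varphi$ enter. Note also that the identity you first write, $\int g\,\Delta U\,\dd\mu=\int\nabla g\cdot\nabla U\,\dd\mu$, is misstated, whereas the form $\int g(\abs{\nabla U}^2-\Delta U)\,\dd\mu=\int\nabla g\cdot\nabla U\,\dd\mu$ that you actually use is correct.
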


\begin{proof}
Weighted integration by parts gives
\begin{align*}
  \int_{I\times\mathbb{R}^{d}}\varphi^2\abs{\nabla U}^2\,\dd\lambda_T\,\dd\mu
  &=
  \int_{I\times\mathbb{R}^{d}}\nabla_q\cdot(\varphi^2\nabla U)
   \,\dd\lambda_T\,\dd\mu\\
  &=
  2\int_{I\times\mathbb{R}^{d}}\varphi\nabla_q\varphi\cdot\nabla U
   \,\dd\lambda_T\,\dd\mu
  +\int_{I\times\mathbb{R}^{d}}\varphi^2\Delta U\,\dd\lambda_T\,\dd\mu.
\end{align*}
Young's inequality and \eqref{eq:Laplacian-growth} bound the right-hand
side by
\[
  \left(\frac14+\frac\delta2\right)
  \norm{\varphi\nabla U}_{L^2}^2
  +4\norm{\nabla_q\varphi}_{L^2}^2
  +Md\norm\varphi_{L^2}^2.
\]
Since $\delta<1$, absorption proves the claim.
\end{proof}

The divergence construction also requires second-order regularity.
The next weighted Bochner estimate controls it in terms of the
curvature parameter $R$.

\begin{lemma}[Time--position $H^2$ estimate]
\label{lem:Bochner}
Suppose $u\in H^2(\lambda_T\otimes\mu)$ and the boundary terms in the
time integration by parts vanish; in particular this holds when
$\widetilde\nabla u\in H_0^1(\lambda_T\otimes\mu)^{d+1}$.  Then,
\begin{equation}\label{eq:Bochner-full}
  \norm{D_{t,q}^2u}_{L^2}^2
  \leq
  C\left(
  \norm{\mathscr Au}_{L^2}^2
  +R^2\norm{\nabla_qu}_{L^2}^2
  \right).
\end{equation}
The spatial companion is
\begin{equation}\label{eq:Bochner-spatial}
  \norm{\nabla_q^2u}_{L^2}^2
  \leq
  C\left(
  \norm{\nabla_q^*\nabla_qu}_{L^2}^2
  +R^2\norm{\nabla_qu}_{L^2}^2
  \right).
\end{equation}
\end{lemma}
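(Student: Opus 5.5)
We will prove the estimate by a weighted Bochner identity for the operator $\mathscr A=-\partial_{tt}+\nabla_q^*\nabla_q$ in $L^2(\lambda_T\otimes\mu)$. By density we first treat $u$ smooth, compactly supported in $q$, with $\widetilde\nabla u$ vanishing at $t=0,T$. The Neumann-type time condition is what makes the time boundary terms vanish. Once the estimate holds on this class, we pass to the stated Sobolev class by approximation, using Lemma~\ref{lem:weighted-multiplier} to control the drift terms in the limit.

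The core computation expands $\norm{\mathscr Au}^2=\norm{\partial_{tt}u}^2+\norm{\nabla_q^*\nabla_qu}^2-2\ip{\partial_{tt}u}{\nabla_q^*\nabla_qu}$. For the spatial part, the weighted Bochner formula (commutator $[\nabla_q,\nabla_q^*]=\nabla^2U$) gives
\[
\norm{\nabla_q^*\nabla_qu}^2=\norm{\nabla_q^2u}^2+\int\nabla_qu^\top\nabla^2U\,\nabla_qu .
\]
For the cross term, two integrations by parts (in $q$, then in $t$, with vanishing traces) give $-2\ip{\partial_{tt}u}{\nabla_q^*\nabla_qu}=2\norm{\partial_t\nabla_qu}^2$. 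Summing yields
\[
\norm{\mathscr Au}^2=\norm{D^2_{t,q}u}^2+\int\nabla_qu^\top\nabla^2U\nabla_qu
\]
up to the factor $2$ on the mixed block. Hence both \eqref{eq:Bochner-full} and \eqref{eq:Bochner-spatial} reduce to bounding the negative part of the curvature term by $\tfrac12\norm{D^2u}^2+CR^2\norm{\nabla_qu}^2$.

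We then split into the three cases of \eqref{eq:R}. If $U$ is convex the curvature term is nonnegative and we are done with $R=0$. If $\nabla^2U\succeq-K$, it is at least $-K\norm{\nabla_qu}^2=-R^2\norm{\nabla_qu}^2$. The general case is the main obstacle. There we bound $\int\abs{\nabla^2U}\abs{\nabla_qu}^2\le\norm{\abs{\nabla^2U}_{\mathrm F}^{1/2}\abs{\nabla_qu}}^2$ using \eqref{eq:Hessian-growth}, which gives $\abs{\nabla^2U}_{\mathrm F}\le M(\sqrt d+\abs{\nabla U})$. The $M\sqrt d\norm{\nabla_qu}^2$ piece is fine since $M\sqrt d\le R^2$ up to constants. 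For the piece $M\int\abs{\nabla U}\abs{\nabla_qu}^2$ we apply Cauchy--Schwarz: it is at most $M\norm{\abs{\nabla U}\nabla_qu}\,\norm{\nabla_qu}$. Lemma~\ref{lem:weighted-multiplier}, applied componentwise to $\varphi=\partial_iu$, bounds $\norm{\abs{\nabla U}\nabla_qu}\le 4\norm{\nabla_q^2u}+2\sqrt{Md}\norm{\nabla_qu}$. Young's inequality then gives
\[
\tfrac12\norm{\nabla_q^2u}^2+C(M^2+M^{3/2}d^{1/2})\norm{\nabla_qu}^2,
\]
and $M^2+M^{3/2}d^{1/2}\lesssim(M+M^{3/4}d^{1/4})^2=R^2$. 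Absorbing $\tfrac12\norm{D^2u}^2$ proves \eqref{eq:Bochner-full}. The spatial estimate \eqref{eq:Bochner-spatial} follows identically from the spatial identity alone, with no time terms.
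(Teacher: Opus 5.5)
Your proposal is correct and follows essentially the same route as the paper: the integrated weighted Bochner identity $\norm{D_{t,q}^2u}^2=\norm{\mathscr Au}^2-\int(\nabla_qu)^\top\nabla^2U\,\nabla_qu$, then the three-case treatment of the curvature term. In the general case you argue exactly as the paper does, via \eqref{eq:Hessian-growth}, Cauchy--Schwarz, Lemma~\ref{lem:weighted-multiplier} applied to each $\partial_iu$, Young's inequality, and absorption. Your explicit derivation of the identity (the commutator $[\nabla_q,\nabla_q^*]=\nabla^2U$, and the cross term $2\norm{\partial_t\nabla_qu}^2$, which exactly accounts for the doubled mixed block in the Frobenius norm) and your density remark fill in details the paper states without proof.
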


\begin{proof}
The integrated weighted Bochner identity is
\begin{equation}\label{eq:integrated-Bochner}
  \norm{D_{t,q}^2u}_{L^2}^2
  =
  \norm{\mathscr Au}_{L^2}^2
  -\int_{I\times\mathbb{R}^{d}}
  (\nabla_qu)^\top\nabla^2U(\nabla_qu)
   \,\dd\lambda_T\,\dd\mu.
\end{equation}
For convex $U$, the last term is nonpositive and
\eqref{eq:Bochner-full} holds with $R=0$.  If
$\nabla^2U\succeq-K\Id_{d}$, it is bounded by
$K\norm{\nabla_qu}_{L^2}^2$, giving $R=\sqrt K$.

In the general case, \eqref{eq:Hessian-growth} and Cauchy--Schwarz inequality
yield
\begin{align*}
  \abs{\int_{I\times\mathbb{R}^{d}}
  (\nabla_qu)^\top\nabla^2U(\nabla_qu)\,\dd\lambda_T\,\dd\mu}
  &\leq
  M\sqrt d\,\norm{\nabla_qu}^2
  +M\norm{\nabla_qu}\norm{\abs{\nabla U}\abs{\nabla_qu}}.
\end{align*}
Apply Lemma~\ref{lem:weighted-multiplier} to each component of
$\nabla_qu$:
\[
  \norm{\abs{\nabla U}\abs{\nabla_qu}}^2
  \leq16\norm{D_{t,q}^2u}^2+4Md\norm{\nabla_qu}^2.
\]
Young's inequality therefore gives
\[
  \abs{\int_{I\times\mathbb{R}^{d}}
  (\nabla_qu)^\top\nabla^2U(\nabla_qu)\,\dd\lambda_T\,\dd\mu}
  \leq
  \frac12\norm{D_{t,q}^2u}^2
  +C\left(M^2+M^{3/2}d^{1/2}\right)\norm{\nabla_qu}^2.
\]
Absorb $\norm{D_{t,q}^2u}^2/2$ into the left-hand side of
\eqref{eq:integrated-Bochner} and note that
\[
  M^2+M^{3/2}d^{1/2}
  \leq C\left(M+M^{3/4}d^{1/4}\right)^2.
\]
This proves \eqref{eq:Bochner-full}.  The proof of
\eqref{eq:Bochner-spatial} is identical without the time coordinate.
\end{proof}

Recall that $A_q$ is the nonnegative self-adjoint operator associated with
the closed form
\[
  z\longmapsto\norm{\nabla_qz}_{L^2(\mu)}^2
  \quad\text{on }H^1(\mu).
\]
Thus $A_q$ is the Friedrichs realization of
$\nabla_q^*\nabla_q=-\Delta_q+\nabla U\cdot\nabla_q$.
We first identify its operator domain without using translations of
the Gibbs weight.

\begin{lemma}[Position operator core and domain regularity]
\label{lem:position-domain}
The operator
\[
  A_q^{\min}z=-\Delta_qz+\nabla U\cdot\nabla_qz,
  \qquad D(A_q^{\min})=C_c^\infty(\R^d),
\]
is essentially self-adjoint in $L^2(\mu)$.  Consequently,
$C_c^\infty(\R^d)$ is an operator core for $A_q$, and
\begin{equation}\label{eq:position-domain}
  D(A_q)\subset H^2(\mu),\qquad
  \norm{\nabla_q^2z}_{L^2(\mu)}^2
  \leq C\left(
  \norm{A_qz}_{L^2(\mu)}^2
  +R^2\norm{\nabla_qz}_{L^2(\mu)}^2
  \right).
\end{equation}
For $z\in D(A_q)$, the identity
\[
  A_qz=-\Delta_qz+\nabla U\cdot\nabla_qz
\]
holds in $L^2(\mu)$.
\end{lemma}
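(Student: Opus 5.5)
We will prove essential self-adjointness of $A_q^{\min}=-\Delta_q+\nabla U\cdot\nabla_q$ on $C_c^\infty(\R^d)$ by the standard criterion: it suffices to show that $\ker\big((A_q^{\min})^*+1\big)=\{0\}$. Let $w\in L^2(\mu)$ satisfy $\ip{w}{(A_q^{\min}+1)\varphi}_{L^2(\mu)}=0$ for all $\varphi\in C_c^\infty$. Since $U\in C^2$, we have $e^{-U}\in C^2$. Write $v:=we^{-U}$. Then $v$ solves, in distributions, the equation $-\Delta v-\nabla\cdot(v\nabla U)+v=0$, which is a uniformly elliptic equation with $C^1$ lower-order coefficients.

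Local elliptic regularity gives $v\in H^2_{\mathrm{loc}}$, hence $w\in H^2_{\mathrm{loc}}$, and $w$ satisfies $-\Delta w+\nabla U\cdot\nabla w+w=0$ pointwise a.e. Next we take cutoffs $\chi_R(q)=\chi(q/R)$ and test this equation against $\chi_R^2w$ in $L^2(\mu)$. Weighted integration by parts, which is legitimate because the test function has compact support, gives
\[
  \int\chi_R^2\abs{\nabla w}^2\,\dd\mu+\int\chi_R^2w^2\,\dd\mu
  =-2\int\chi_Rw\nabla\chi_R\cdot\nabla w\,\dd\mu.
\]
By Young's inequality the right-hand side is at most $\frac12\int\chi_R^2\abs{\nabla w}^2\,\dd\mu+2\int\abs{\nabla\chi_R}^2w^2\,\dd\mu$, and the last integral is $\mathcal O(R^{-2})\norm w^2_{L^2(\mu)}$. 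Letting $R\to\infty$ yields $w=0$. This argument uses no growth condition on $U$; completeness of $\R^d$ (the cutoffs) is enough.

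Essential self-adjointness implies that the closure of $A_q^{\min}$ is the unique self-adjoint extension. The Friedrichs operator $A_q$ extends $A_q^{\min}$ (integration by parts on $C_c^\infty$ shows $C_c^\infty\subset D(A_q)$ with the formula), so $A_q$ equals this closure and $C_c^\infty$ is a core.

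For the domain estimate, we first establish \eqref{eq:position-domain} for $z\in C_c^\infty$. This is the spatial Bochner bound \eqref{eq:Bochner-spatial} of Lemma~\ref{lem:Bochner}; for compactly supported smooth $z$ the integrated Bochner identity holds without boundary terms. The general Hessian case invokes Lemma~\ref{lem:weighted-multiplier} in its time-independent version, which holds verbatim. Now let $z\in D(A_q)$ and take a core sequence $z_n\in C_c^\infty$ with $z_n\to z$ and $A_qz_n\to A_qz$ in $L^2(\mu)$. The form identity $\norm{\nabla_qz_n}^2=\ip{A_qz_n}{z_n}$ gives convergence of $z_n$ in $H^1(\mu)$. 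The estimate applied to $z_n-z_k$ then shows that $(\nabla_q^2z_n)$ is Cauchy in $L^2(\mu)$. Hence $z\in H^2(\mu)$, and the estimate passes to the limit.

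Finally, $A_qz=-\Delta_qz+\nabla U\cdot\nabla_qz$ holds in $L^2(\mu)$. For the Laplacian term, $\Delta z_n\to\Delta z$ in $L^2(\mu)$. For the drift term, $\nabla U\cdot\nabla z_n\to\nabla U\cdot\nabla z$ in $L^2(\mu)$ by Lemma~\ref{lem:weighted-multiplier} applied to $\nabla z_n-\nabla z$, using the $H^2$ convergence.

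The main obstacle is the kernel step: we must justify local $H^2$ regularity with only $C^1$ drift coefficients, and make sure the cutoff integration by parts produces no uncontrolled $\nabla U$ term. The weighted form handles the latter, since the drift is absorbed by the measure.
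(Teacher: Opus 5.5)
Your proposal is correct and follows the paper's proof essentially step for step. You check $\ker((A_q^{\min})^*+1)=\{0\}$ via local elliptic regularity and the cutoff test against $\chi_R^2w$ with Young's inequality, identify the closure with the Friedrichs operator, and take a core sequence that is Cauchy in $H^1(\mu)$ by the form identity and in the Hessian by the spatial Bochner estimate, with Lemma~\ref{lem:weighted-multiplier} handling the drift term. Your detour through the Lebesgue density $v=we^{-U}$ only makes explicit the local regularity step that the paper states directly.
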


\begin{proof}
The minimal operator is symmetric and nonnegative by weighted
integration by parts.  For a densely defined nonnegative symmetric
operator $B$, the condition $\ker(B^*+1)=\{0\}$ is equivalent to
density of $\operatorname{Ran}(B+1)$.  Let $\overline B$ be its
closure.  The estimate
$\norm{(\overline B+1)z}\geq\norm z$, together with closedness of
$\overline B$, makes $\operatorname{Ran}(\overline B+1)$ closed.
It contains the dense range of $B+1$, so it equals the whole Hilbert
space; a closed symmetric operator with $-1$ in its resolvent is
self-adjoint.  We therefore verify
\[
  \ker\left((A_q^{\min})^*+1\right)=\{0\}.
\]
Let $z$ belong to this kernel.  Since the density of $\mu$ is
positive and locally bounded above and below, $z\in L^2_{\rm loc}$.
The distributional equation
\[
  -\Delta_qz+\nabla U\cdot\nabla_qz+z=0,
\]
and local elliptic regularity give $z\in H^2_{\rm loc}$.  Choose
$\chi\in C_c^\infty(\R^d)$, with $0\leq\chi\leq1$, equal to one on
the unit ball, and put $\chi_R(q)=\chi(q/R)$.  Testing the equation by
$\chi_R^2z$, justified by compactly supported $H^1$ approximation,
gives
\[
  \norm{\chi_Rz}_{L^2(\mu)}^2
  +\int_{\mathbb{R}^{d}}\chi_R^2\abs{\nabla_qz}^2\,\dd\mu
  =-2\int_{\mathbb{R}^{d}}\chi_Rz\,\nabla_q\chi_R\cdot\nabla_qz\,\dd\mu.
\]
Young's inequality therefore yields
\[
  \norm{\chi_Rz}_{L^2(\mu)}^2
  +\frac12\int_{\mathbb{R}^{d}}\chi_R^2\abs{\nabla_qz}^2\,\dd\mu
  \leq\frac{C}{R^2}\norm z_{L^2(\mu)}^2.
\]
Letting $R\to\infty$ proves $z=0$.  Hence
$A_q^{\min}$ is essentially self-adjoint, and its closure is the
Friedrichs operator $A_q$.

Now let $z\in D(A_q)$.  There are $z_n\in C_c^\infty(\R^d)$ such
that $z_n\to z$ and $A_qz_n\to A_qz$ in $L^2(\mu)$.  For
$w=z_n-z_k$,
\[
  \norm{\nabla_qw}_{L^2(\mu)}^2
  =\ip{A_qw}{w}_{L^2(\mu)}
  \leq\norm{A_qw}_{L^2(\mu)}\norm w_{L^2(\mu)}.
\]
Thus $z_n$ is Cauchy in $H^1(\mu)$.  Applying the spatial Bochner
estimate \eqref{eq:Bochner-spatial} to $z_n-z_k$ shows that its
spatial Hessians are also Cauchy in $L^2(\mu)$.  Local equivalence of
weighted and unweighted norms identifies the limits as the weak
derivatives of $z$, proving $z\in H^2(\mu)$; passage to the limit
also proves \eqref{eq:position-domain}.  Finally, Lemma
\ref{lem:weighted-multiplier}, applied to each component of
$\nabla_qz$, gives
\[
  \norm{\abs{\nabla U}\abs{\nabla_qz}}_{L^2(\mu)}^2
  \leq
  16\norm{\nabla_q^2z}_{L^2(\mu)}^2
  +4Md\norm{\nabla_qz}_{L^2(\mu)}^2.
\]
The claimed $L^2$ realization of $A_q$ follows by passing to the
limit in the distributional identity for $z_n$.
\end{proof}

We next apply the preceding coercivity and Bochner estimates to the
mixed Neumann problem.  Its solution supplies the elliptic potentials
used in the divergence construction.

\begin{lemma}[Mixed elliptic regularity]
\label{lem:mixed-Neumann}
Let $h\in H^{-1}(\lambda_T\otimes\mu)$ have zero mean.  There is a
unique mean-zero weak solution $u\in H^1(\lambda_T\otimes\mu)$ of
\begin{equation}\label{eq:mixed-Neumann}
  \mathscr Au=h,\qquad
  \partial_tu(0,\cdot)=\partial_tu(T,\cdot)=0.
\end{equation}
If $h\in L^2$, then $u\in H^2$ and
\begin{equation}\label{eq:mixed-energy}
  \norm{\widetilde\nabla u}_{L^2}^2
  \leq
  \max\left\{\frac1m,\frac{T^2}{\pi^2}\right\}
  \norm h_{L^2}^2.
\end{equation}
Moreover,
\begin{equation}\label{eq:mixed-spectral-identity}
  \norm{\partial_{tt}u}_{L^2}^2
  +2\norm{\partial_t\nabla_qu}_{L^2}^2
  +\norm{A_qu}_{L^2}^2
  =\norm h_{L^2}^2,
\end{equation}
and
\begin{equation}\label{eq:mixed-H2-bound}
  \norm{D_{t,q}^2u}_{L^2}^2
  \leq
  C\left[
  1+R^2\max\left\{\frac1m,\frac{T^2}{\pi^2}\right\}
  \right]\norm h_{L^2}^2.
\end{equation}
\end{lemma}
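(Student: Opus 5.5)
The plan is to use the Lax--Milgram theorem for existence, then spectral and Bochner arguments for the second-order bounds. We work on the mean-zero subspace $\mathcal H_0:=\{u\in H^1(\lambda_T\otimes\mu):(u)_{\lambda_T\otimes\mu}=0\}$ with the bilinear form
\[
  \mathfrak b(u,v):=\ip{\partial_tu}{\partial_tv}_{L^2}+\ip{\nabla_qu}{\nabla_qv}_{L^2}.
\]
Lemma~\ref{lem:appendix-PI} makes $\mathfrak b$ coercive on $\mathcal H_0$, with constant $\max\{1/m,T^2/\pi^2\}^{-1}$. A zero-mean $h\in H^{-1}$ annihilates constants, so Lax--Milgram gives a unique $u\in\mathcal H_0$ with $\mathfrak b(u,v)=\langle h,v\rangle$ for all $v\in H^1$. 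This is the weak formulation of \eqref{eq:mixed-Neumann}: the Neumann conditions in time are natural because no trace constraint is imposed on the test functions. Testing with $v=u$ and applying Lemma~\ref{lem:appendix-PI} once more yields \eqref{eq:mixed-energy}.

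For $L^2$ data we diagonalize in time. Expand $u$ and $h$ in the Neumann cosine basis $e_k(t)=\sqrt{2-\delta_{k0}}\cos(k\pi t/T)$ of $L^2(\lambda_T)$, writing $u=\sum_k u_k(q)e_k(t)$. The weak equation decouples into
\[
  (\omega_k^2+A_q)u_k=h_k \quad\text{in }L^2(\mu),\qquad \omega_k=k\pi/T.
\]
For $k\geq1$ the operator $\omega_k^2+A_q$ is invertible. For $k=0$, the condition that $h$ has zero mean gives $\mu(h_0)=0$, and Assumption~\ref{ass:PI} inverts $A_q$ on $L^2_0(\mu)$. Hence each $u_k\in D(A_q)$, and Lemma~\ref{lem:position-domain} gives $u_k\in H^2(\mu)$. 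Squaring $\omega_k^2u_k+A_qu_k=h_k$ and using $\ip{A_qu_k}{u_k}=\norm{\nabla_qu_k}^2$ gives, for each $k$,
\[
  \omega_k^4\norm{u_k}^2+2\omega_k^2\norm{\nabla_qu_k}^2+\norm{A_qu_k}^2=\norm{h_k}^2.
\]
Summing over $k$ and applying Parseval yields \eqref{eq:mixed-spectral-identity}. The three terms are exactly $\norm{\partial_{tt}u}^2$, $2\norm{\partial_t\nabla_qu}^2$ and $\norm{A_qu}^2$ once the sums are identified with weak derivatives. That identification is routine because the cosine series of $\partial_tu$ is the sine series with coefficients $-\omega_ku_k$. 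The same summability shows $u\in H^2$.

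It remains to prove \eqref{eq:mixed-H2-bound}. Since $\norm{D^2_{t,q}u}^2=\norm{\partial_{tt}u}^2+2\norm{\partial_t\nabla_qu}^2+\norm{\nabla_q^2u}^2$, only the spatial Hessian still needs control. We apply the spatial estimate \eqref{eq:position-domain} to each $u_k$ and sum:
\[
  \norm{\nabla_q^2u}^2\leq C\left(\norm{A_qu}^2+R^2\norm{\nabla_qu}^2\right).
\]
The first term is at most $\norm h^2$ by \eqref{eq:mixed-spectral-identity}. The second is controlled by \eqref{eq:mixed-energy}. This avoids the time boundary terms that a direct use of Lemma~\ref{lem:Bochner} would require, since $\partial_tu$ vanishes at the endpoints but $\nabla_qu$ need not.

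The main obstacle I anticipate is the $H^2$ identification. It requires showing that the formal cosine series solution equals the Lax--Milgram solution, and that the weak second derivatives of $u$ agree with the summed series. I would handle this by truncating the series to $\sum_{k\le N}$, which gives $H^2$ functions to which the finite-dimensional identities apply. These truncations are Cauchy in $H^2$ by the coefficientwise bounds, and they converge in $H^1$ to the Lax--Milgram $u$ by uniqueness.
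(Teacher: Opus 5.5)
Your proof is correct and takes the paper's route: Lax--Milgram on the mean-zero subspace of $H^1(\lambda_T\otimes\mu)$, then the identity \eqref{eq:mixed-spectral-identity} from the commuting pair formed by the Neumann $-\partial_{tt}$ and $A_q$, then \eqref{eq:position-domain} applied fiberwise and combined with \eqref{eq:mixed-energy} to control the spatial Hessian. The difference is only presentational: you diagonalize time explicitly with the cosine basis, apply the spatial estimate mode by mode, and identify the series with the weak solution by truncation plus uniqueness, whereas the paper uses the abstract tensor-product joint spectral calculus and applies Lemma~\ref{lem:position-domain} in the direct integral over time.
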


\begin{proof}
On
\[
  V_0=\left\{v\in H^1(\lambda_T\otimes\mu):
  (v)_{\lambda_T\otimes\mu}=0\right\},
\]
consider
\[
  \mathcal B(u,v)
  =\ip{\partial_tu}{\partial_tv}
  +\ip{\nabla_qu}{\nabla_qv}.
\]
Lemma~\ref{lem:appendix-PI} makes $\mathcal B$ coercive on $V_0$.
The Lax--Milgram theorem \cite[Lemma~1.3]{ouhabaz-2005} gives a unique
weak solution.  Testing by $u$, applying
Cauchy--Schwarz inequality, and then using \eqref{eq:appendix-PI} proves
\eqref{eq:mixed-energy}.

For the $L^2$ regularity, let $B_t$ be the nonnegative Neumann
operator $-\partial_{tt}$ in $L^2(\lambda_T)$.
We write
$H_1\mathbin{\widehat\otimes}H_2$ for the Hilbert-space tensor product,
namely the completion of the algebraic tensor product in its canonical
Hilbert norm.  The map
$f\otimes g\mapsto[(t,q)\mapsto f(t)g(q)]$ extends to a canonical
unitary identification
\[
  L^2(\lambda_T\otimes\mu)
  \simeq L^2(\lambda_T)\mathbin{\widehat\otimes}L^2(\mu);
\]
see \cite[Definition~7.2 and Example~7.9]{schmudgen-2012}.
Under this identification, the operators $B_t\otimes\Id$ and
$\Id\otimes A_q$ are nonnegative self-adjoint operators, their
algebraic sum is essentially self-adjoint, and they strongly commute
\cite[Theorem~7.23 and Lemma~7.24]{schmudgen-2012}.  The operator
associated with the form $\mathcal B$ is the closure of that sum.
Therefore the
weak solution above is the joint-spectral solution
\[
  u=(B_t\otimes\Id+\Id\otimes A_q)^{-1}h
\]
on the orthogonal complement of the constants, by the joint functional
calculus \cite[Proposition~5.25]{schmudgen-2012}.  This inverse is
bounded there by Lemma~\ref{lem:appendix-PI}.
In the formulas below, tensoring with the identity operator on the
complementary factor is left implicit.

Let $\sigma_u$ be the scalar joint spectral measure of $u$ for
these two operators, with spectral variables $(b,a)$.  Since
$h=(a+b)u$ in this representation,
\begin{align*}
  \norm h_{L^2}^2
  =
  \int_{[0,\infty)^2}(a+b)^2\,\dd\sigma_u(b,a)
  =
  \norm{B_tu}_{L^2}^2
  +\norm{A_qu}_{L^2}^2
  +2\norm{\left(B_t^{1/2}\otimes A_q^{1/2}\right)u}_{L^2}^2.
\end{align*}
The form domains of $B_t$ and $A_q$ are $H^1(0,T)$ and
$H^1(\mu)$, respectively.  Tensor-product approximation consequently
identifies
\[
  B_tu=-\partial_{tt}u,\qquad
  \norm{\left(B_t^{1/2}\otimes A_q^{1/2}\right)u}_{L^2}
  =\norm{\partial_t\nabla_qu}_{L^2}.
\]
This proves \eqref{eq:mixed-spectral-identity}, including the Neumann
time traces encoded in $D(B_t)$.

Finally, $u\in D(\Id\otimes A_q)$.  Lemma
\ref{lem:position-domain}, applied in the direct integral over time,
and \eqref{eq:mixed-energy} give
\[
  \norm{\nabla_q^2u}_{L^2}^2
  \leq C\left(
  \norm{A_qu}_{L^2}^2
  +R^2\max\left\{\frac1m,\frac{T^2}{\pi^2}\right\}
  \norm h_{L^2}^2
  \right).
\]
Together with \eqref{eq:mixed-spectral-identity}, this proves
$u\in H^2$ and \eqref{eq:mixed-H2-bound}.
\end{proof}

\section{Spectral Construction of the Divergence Test}
\label{app:divergence}

We now prove Lemma~\ref{lem:divergence}.  The proof is included to make
clear both the role of compact embedding and the origin of the
$T,m,R$ dependence.  It follows the construction of
\cite[Lemma~2.6]{cao-lu-wang-2023}.

\paragraph{Position spectrum and harmonic decomposition.}

The closed quadratic form
$v\longmapsto\norm{\nabla_qv}_{L^2(\mu)}^2$
has associated operator $A_q=\nabla_q^*\nabla_q$.  Assumption
\ref{ass:compact} makes the resolvent of $A_q$, restricted to
mean-zero functions, compact.  Hence
\cite[Proposition~5.12]{schmudgen-2012} gives an orthonormal basis
$\{1\}\cup\{w_j:j\geq1\}$
of $L^2(\mu)$
such that
\begin{equation}\label{eq:position-spectrum}
  A_qw_j=\omega_j^2w_j,
  \qquad
  \omega_j\geq\sqrt m.
\end{equation}
The eigenfunctions lie in $H^2(\mu)$ by Lemma
\ref{lem:position-domain}.  Moreover,
\begin{equation}\label{eq:eigen-gradient-orthogonality}
  \ip{\nabla_qw_i}{\nabla_qw_j}_{L^2(\mu)}
  =\omega_j^2\delta_{ij}.
\end{equation}

Let
\[
  \cH
  :=
  \left\{v\in L^2(\lambda_T\otimes\mu):
  \mathscr Av=0\text{ in distributions}\right\}.
\]
This is a closed subspace of $L^2$.  Decompose the centered datum as
\[
  h=h_{\mathrm h}+h_\perp,
  \qquad
  h_{\mathrm h}\in\cH,\quad h_\perp\perp\cH.
\]

\paragraph{The component orthogonal to harmonic functions.}

Let $u$ be the mean-zero solution of
\[
  \mathscr Au=h_\perp,\qquad
  \partial_tu(0,\cdot)=\partial_tu(T,\cdot)=0,
\]
given by Lemma~\ref{lem:mixed-Neumann}.  For every smooth
$v\in\cH$, integration by parts gives
\[
  0=\ip{h_\perp}{v}
  =\ip{u}{\mathscr Av}
  +\int_{\R^d}
  \left[u(T)\partial_tv(T)-u(0)\partial_tv(0)\right]\,\dd\mu.
\]
For the $j$-th nonconstant position mode, the functions
$e^{\omega_jt}w_j$ and $e^{-\omega_jt}w_j$ make the pair
$(\partial_tv(0),\partial_tv(T))$ arbitrary in the span of $w_j$.
Thus all nonconstant components of $u(0,\cdot)$ and
$u(T,\cdot)$ vanish.  In the constant position mode, affine harmonic
tests show that the two endpoint constants agree.  Subtracting this
common constant from $u$, which changes neither the equation nor any
estimate below, we may arrange
\[
  u(0,\cdot)=u(T,\cdot)=0.
\]
Thus both $\partial_tu$ and $\nabla_qu$ have zero traces.  Set
\begin{equation}\label{eq:perp-test}
  \phi_0^\perp=\partial_tu,\qquad
  \psi^\perp=u.
\end{equation}
Then
\[
  -\partial_t\phi_0^\perp+A_q\psi^\perp=h_\perp.
\]
Lemma~\ref{lem:mixed-Neumann} gives
\begin{equation}\label{eq:perp-first}
  \norm{\phi_0^\perp}+\norm{\nabla_q\psi^\perp}
  \leq
  C\left(\frac1{\sqrt m}+T\right)\norm{h_\perp}.
\end{equation}
Because all first derivatives of $u$ vanish at the time endpoints,
Lemma~\ref{lem:Bochner} also gives
\begin{equation}\label{eq:perp-second}
  \norm{\nabla_q\phi_0^\perp}
  +\norm{\partial_t\nabla_q\psi^\perp}
  +\norm{\nabla_q^2\psi^\perp}
  \leq
  C\left(1+\frac R{\sqrt m}+RT\right)\norm{h_\perp}.
\end{equation}
The right-hand sides of \eqref{eq:perp-first} and
\eqref{eq:perp-second} are bounded respectively by
$CB_0(T)\norm{h_\perp}$ and $CB_1(T)\norm{h_\perp}$.

\paragraph{Expansion of the harmonic component.}

Expand $h_{\mathrm h}$ in the position eigenbasis.  The equation
$\mathscr Ah_{\mathrm h}=0$ implies
\begin{equation}\label{eq:harmonic-expansion}
  h_{\mathrm h}(t,q)
  =
  c_0\left(t-\frac T2\right)
  +\sum_{j\geq1}
  \left(
  c_j^+e^{-\omega_jt}
  +c_j^-e^{-\omega_j(T-t)}
  \right)w_j(q).
\end{equation}
The additive constant in the zero position mode vanishes because
$h_{\mathrm h}$ is centered.

The two exponentials belonging to the same $j$ are not orthogonal,
but their Gram matrix can be computed explicitly.  With
$x=\omega_jT$, it is
\[
  \begin{pmatrix}
  d_x&e^{-x}\\ e^{-x}&d_x
  \end{pmatrix},
  \qquad
  d_x=\frac{1-e^{-2x}}{2x}.
\]
Its smaller eigenvalue is
\[
  d_x-e^{-x}
  =e^{-x}\left(\frac{\sinh x}{x}-1\right).
\]
For $0<x\leq1$, the series of $\sinh x$ and
$1-e^{-x}\leq x$ give
\[
  d_x-e^{-x}\geq c x^2
  \geq c\frac{(1-e^{-x})^3}{x}.
\]
For $x\geq2$, the bound $xe^{-x}\leq2e^{-2}$ gives
$d_x-e^{-x}\geq c/x$, while the same conclusion on
$[1,2]$ follows by continuity and strict positivity.  Thus, for all
$x>0$,
\begin{equation}\label{eq:Gram-scalar}
  d_x-e^{-x}
  \geq c\frac{(1-e^{-x})^3}{x}.
\end{equation}
Applying \eqref{eq:Gram-scalar} mode by mode gives
\begin{align}
  \norm{h_{\mathrm h}}_{L^2}^2
  \geq
  \frac{c_0^2T^2}{12}
  +
  c\sum_{j\geq1}
  \left((c_j^+)^2+(c_j^-)^2\right)
  \frac{(1-e^{-\omega_jT})^3}{\omega_jT}.
  \label{eq:harmonic-lower-bound}
\end{align}

\paragraph{A scalar mode construction.}

Fix $\omega>0$, set $\vartheta=e^{-\omega T}$, and define
\begin{equation}\label{eq:Gtheta}
  G_\vartheta(s)
  :=
  \frac{6s(s-\vartheta)(1-s)}{(1-\vartheta)^2},
  \qquad \vartheta\leq s\leq1.
\end{equation}
It satisfies
\[
  G_\vartheta(\vartheta)=G_\vartheta(1)=0,
  \qquad
  \int_\vartheta^1\frac{G_\vartheta(s)}s\,\dd s=1-\vartheta.
\]
Set
\begin{equation}
  b_\omega(t):=\frac1{\omega^2}
  G_\vartheta(e^{-\omega t}),
  \qquad
  a_\omega(t)
  :=
  \int_0^t
  \left(\omega^2b_\omega(r)-e^{-\omega r}\right)\,\dd r.
\end{equation}
The integral identity for $G_\vartheta$ shows that
\[
  a_\omega(0)=a_\omega(T)=b_\omega(0)=b_\omega(T)=0,
\]
and the defining equation is
\begin{equation}\label{eq:scalar-divergence}
  -a_\omega'(t)+\omega^2b_\omega(t)=e^{-\omega t}.
\end{equation}

For completeness, the estimates
\eqref{eq:b-mode-0}--\eqref{eq:b-mode-2} are obtained by the change of
variables $s=e^{-\omega t}$.  The elementary bounds
\[
  0\leq G_\vartheta(s)\leq\frac32s,
  \qquad
  \abs{G_\vartheta'(s)}
  \leq\frac{C}{1-\vartheta}
\]
and $\dd t=-\dd s/(\omega s)$ give
\begin{align}
  \omega^4\norm{b_\omega}_{L^2(\lambda_T)}^2
  &=
  \frac1{\omega T}\int_\vartheta^1
  \frac{G_\vartheta(s)^2}{s}\,\dd s
  \leq C\frac{1-\vartheta^2}{\omega T},
  \label{eq:b-mode-0}\\
  \omega^2\norm{b_\omega'}_{L^2(\lambda_T)}^2
  &=
  \frac1{\omega T}\int_\vartheta^1
  sG_\vartheta'(s)^2\,\dd s
  \leq\frac{C}{\omega T(1-\vartheta)}.
  \label{eq:b-mode-1}
\end{align}
Writing
\[
  a_\omega(t)=\frac1\omega
  R_\vartheta(e^{-\omega t}),\qquad
  R_\vartheta(s)
  =\int_s^1\left(\frac{G_\vartheta(r)}r-1\right)\,\dd r
  =
  \frac{(s-\vartheta)(1-s)(1+\vartheta-2s)}
  {(1-\vartheta)^2},
\]
we obtain, after setting $s=\vartheta+(1-\vartheta)y$,
\[
  \int_\vartheta^1\frac{R_\vartheta(s)^2}{s}\,\dd s
  \leq C(1-\vartheta)^3.
\]
Consequently,
\begin{equation}\label{eq:a-mode}
  \omega^2\norm{a_\omega}_{L^2(\lambda_T)}^2
  \leq
  C\frac{(1-\vartheta)^3}{\omega T}.
\end{equation}
Since $a_\omega'=G_\vartheta(s)-s$ and
$\abs{G_\vartheta(s)-s}\leq s$, the same change of variables gives
\begin{equation}\label{eq:a-prime-mode}
  \norm{a_\omega'}_{L^2(\lambda_T)}^2
  \leq
  C\frac{1-\vartheta^2}{\omega T}.
\end{equation}
Finally, direct differentiation gives
\[
  b_\omega''(t)
  =sG_\vartheta'(s)+s^2G_\vartheta''(s),
  \qquad
  \abs{b_\omega''(t)}
  \leq\frac{Cs}{(1-\vartheta)^2},
\]
and therefore,
\begin{equation}\label{eq:b-mode-2}
  \norm{b_\omega''}_{L^2(\lambda_T)}^2
  \leq
  C\frac{1-\vartheta^2}
  {\omega T(1-\vartheta)^4}.
\end{equation}
The estimates \eqref{eq:b-mode-0}, \eqref{eq:b-mode-1}, and
\eqref{eq:a-mode} will be used to prove
\eqref{eq:harmonic-first-final}--\eqref{eq:harmonic-second-final}.
The estimates \eqref{eq:a-prime-mode} and \eqref{eq:b-mode-2} provide
the additional time regularity in
\eqref{eq:harmonic-time-phi}--\eqref{eq:harmonic-time-psi}.

For the reversed exponential, define
\[
  \widetilde a_\omega(t)=-a_\omega(T-t),
  \qquad
  \widetilde b_\omega(t)=b_\omega(T-t).
\]
Then,
\[
  -\widetilde a_\omega'(t)
  +\omega^2\widetilde b_\omega(t)
  =e^{-\omega(T-t)},
\]
and all endpoint traces still vanish.

\paragraph{Assembly and estimates.}

The zero position mode in \eqref{eq:harmonic-expansion} is handled by
\[
  \phi_0^{(0)}(t)
  =-\frac{c_0}{2}(t^2-Tt),
  \qquad
  \psi^{(0)}=0.
\]
Define
\begin{align}
  \phi_0^{\mathrm h}
  :=\;&
  \phi_0^{(0)}
  +\sum_{j\geq1}
  \left[
  c_j^+a_{\omega_j}(t)
  -c_j^-a_{\omega_j}(T-t)
  \right]w_j,
  \label{eq:harmonic-phi}\\
  \psi^{\mathrm h}
  :=\;&
  \sum_{j\geq1}
  \left[
  c_j^+b_{\omega_j}(t)
  +c_j^-b_{\omega_j}(T-t)
  \right]w_j.
  \label{eq:harmonic-psi}
\end{align}
Equations \eqref{eq:scalar-divergence} and
\eqref{eq:position-spectrum} imply that
\[
  -\partial_t\phi_0^{\mathrm h}
  +A_q\psi^{\mathrm h}=h_{\mathrm h}.
\]
Every term in $\phi_0^{\mathrm h}$ and
$\nabla_q\psi^{\mathrm h}$ has zero trace at $t=0,T$.
We first work with finite spectral sums; passage to the infinite sum is
justified after deriving all required Sobolev estimates.

We next sum the estimates \eqref{eq:b-mode-0},
\eqref{eq:b-mode-1}, and \eqref{eq:a-mode} over $j$ in the
expansions \eqref{eq:harmonic-phi}--\eqref{eq:harmonic-psi}.
Orthogonality and \eqref{eq:a-mode} give
\[
  \norm{\phi_0^{\mathrm h}}^2
  \leq
  C\left(T^2+\frac1m\right)
  \norm{h_{\mathrm h}}^2.
\]
By \eqref{eq:b-mode-0}, \eqref{eq:position-spectrum}, and
\eqref{eq:harmonic-lower-bound},
\[
  \norm{\nabla_q\psi^{\mathrm h}}^2
  \leq
  \frac{C}{m(1-e^{-\sqrt mT})^2}
  \norm{h_{\mathrm h}}^2.
\]
Together, these estimates imply
\begin{equation}\label{eq:harmonic-first-final}
  \norm{\phi_0^{\mathrm h}}
  +\norm{\nabla_q\psi^{\mathrm h}}
  \leq C B_0(T)\norm{h_{\mathrm h}}.
\end{equation}

Similarly, \eqref{eq:a-mode} gives
\begin{align}\label{RHS:1}
  \norm{\nabla_q\phi_0^{\mathrm h}}
  \leq C\norm{h_{\mathrm h}},
\end{align}
while \eqref{eq:b-mode-1} gives
\begin{align}\label{RHS:2}
  \norm{\partial_t\nabla_q\psi^{\mathrm h}}
  \leq
  \frac{C}{(1-e^{-\sqrt mT})^2}
  \norm{h_{\mathrm h}}.
\end{align}
Finally, apply the spatial Bochner estimate
\eqref{eq:Bochner-spatial} at each time.  Since
\[
  A_q\psi^{\mathrm h}
  =
  \sum_{j\geq 1}\omega_j^2
  \left[c_j^+b_{\omega_j}(t)
  +c_j^-b_{\omega_j}(T-t)\right]w_j,
\]
orthogonality, \eqref{eq:b-mode-0}, and
$\omega_j\geq\sqrt m$ yield
\begin{align}\label{RHS:3}
  \norm{\nabla_q^2\psi^{\mathrm h}}
  \leq
  C\left[
  \frac1{1-e^{-\sqrt mT}}
  +\frac{R}{\sqrt m(1-e^{-\sqrt mT})}
  \right]\norm{h_{\mathrm h}}.
\end{align}
The right-hand sides of \eqref{RHS:1}, \eqref{RHS:2} and \eqref{RHS:3} are bounded by $CB_1(T)\norm{h_{\mathrm h}}$;
hence,
\begin{equation}\label{eq:harmonic-second-final}
  \norm{\nabla_q\phi_0^{\mathrm h}}
  +\norm{\partial_t\nabla_q\psi^{\mathrm h}}
  +\norm{\nabla_q^2\psi^{\mathrm h}}
  \leq C B_1(T)\norm{h_{\mathrm h}}.
\end{equation}

It remains to justify the full Sobolev regularity and the infinite
spectral sums.  
Let
\[
  \eta_T:=1-e^{-\sqrt mT}>0.
\]
Comparing \eqref{eq:a-prime-mode} with
\eqref{eq:harmonic-lower-bound}, and using
$1-e^{-\omega_jT}\geq\eta_T$, gives
\begin{equation}\label{eq:harmonic-time-phi}
  \norm{\partial_t\phi_0^{\mathrm h}}_{L^2}^2
  \leq C\left(1+\eta_T^{-2}\right)\norm{h_{\mathrm h}}_{L^2}^2.
\end{equation}
Similarly,
\eqref{eq:b-mode-2} yields
\begin{equation}\label{eq:harmonic-time-psi}
  \norm{\partial_{tt}\psi^{\mathrm h}}_{L^2}^2
  \leq C\eta_T^{-6}\norm{h_{\mathrm h}}_{L^2}^2.
\end{equation}
Indeed, for each nonconstant mode the ratio of the right-hand side in
\eqref{eq:b-mode-2} to the spectral weight in
\eqref{eq:harmonic-lower-bound} is bounded by
\[
  C\frac{1+e^{-\omega_jT}}
  {(1-e^{-\omega_jT})^6}
  \leq C\eta_T^{-6}.
\]
Both $\psi^{\mathrm h}$ and $\partial_t\psi^{\mathrm h}$ have zero
position mean.  The position Poincar\'e inequality, together with
\eqref{eq:harmonic-first-final} and
\eqref{eq:harmonic-second-final}, therefore controls their $L^2$
norms by those of $\nabla_q\psi^{\mathrm h}$ and
$\partial_t\nabla_q\psi^{\mathrm h}$.  Equations
\eqref{eq:harmonic-time-phi}--\eqref{eq:harmonic-time-psi} now show
\[
  \phi_0^{\mathrm h}\in H_0^1(\lambda_T\otimes\mu),
  \qquad
  \psi^{\mathrm h}\in H^2(\lambda_T\otimes\mu).
\]

For completeness, let $h_{\mathrm h}^{>N}$ denote the part of
\eqref{eq:harmonic-expansion} with $j>N$.  Orthogonality of the
position modes gives
$\norm{h_{\mathrm h}^{>N}}_{L^2}\to0$.  Apply
\eqref{eq:harmonic-first-final},
\eqref{eq:harmonic-second-final},
\eqref{eq:harmonic-time-phi}, and
\eqref{eq:harmonic-time-psi} to this tail.  The corresponding tails of
$\phi_0^{\mathrm h}$ and $\psi^{\mathrm h}$ converge to zero in
$H^1$ and $H^2$, respectively.  Thus the finite spectral sums are
Cauchy in the asserted spaces.  Continuity of the trace maps preserves
the zero endpoint traces, and the divergence equation passes to the
limit in distributions.

Set
\[
  \phi_0:=\phi_0^\perp+\phi_0^{\mathrm h},
  \qquad
  \psi:=\psi^\perp+\psi^{\mathrm h}.
\]
The two components solve the divergence equation, have the required
time traces, and satisfy the desired estimates by
\eqref{eq:perp-first}--\eqref{eq:perp-second} and
\eqref{eq:harmonic-first-final}--\eqref{eq:harmonic-second-final}.
This completes the proof of Lemma~\ref{lem:divergence}.

\section{Weighted Kinetic Density, Traces, and Green's Formula}
\label{app:kinetic-trace}

This appendix proves the weighted whole-space kinetic trace result used
in Proposition~\ref{prop:kinetic-endpoint}.  It is the analogue, in the
Gibbs space and without a spatial boundary, of the density and trace
arguments in
\cite[Proposition~2.2, Proposition~6.1, and
Lemma~6.12]{albritton-armstrong-mourrat-novack-2024}.  The proof is
included because the force $\nabla U$ need not be bounded.

In this appendix, we set
\[
  H:=L^2(\pi),
  \qquad
  V_p:=L^2\left(\mu;H_\kappa^1\right),
  \qquad
  V_p^*:=L^2\left(\mu;H_\kappa^{-1}\right),
\]
so that $V_p\subset H\subset V_p^*$.  We use ordinary Lebesgue
measure in time; replacing it by $\lambda_T$ only rescales the norms.
Recall
\[
  \cK_0=\partial_t-\cL_{\mathrm{ham}},
  \qquad
  \cL_{\mathrm{ham}}
  =p\cdot\nabla_q-\nabla U(q)\cdot\nabla_p.
\]

The trace argument requires approximation by smooth functions in the
kinetic graph norm.  The following lemma handles both the unbounded
Gaussian velocity multiplier and the whole-space cutoffs.

\begin{lemma}[Gaussian multiplier and local graph density]
\label{lem:kinetic-density}
For every $g\in L^2(\kappa)$ and $1\leq i\leq d$,
\begin{equation}\label{eq:gaussian-p-multiplier}
  \norm{p_i g}_{H_\kappa^{-1}}
  \leq C\norm g_{L^2(\kappa)}.
\end{equation}
Let $I\subset\R$ be an open interval and let
\[
  u\in L^2(I;V_p),
  \qquad
  F:=\cK_0u\in L^2(I;V_p^*).
\]
For every $J\Subset I$, there are
$u_n\in C_c^\infty(I\times\R^{2d})$ such that, as $n\to\infty$,
\begin{equation}\label{eq:local-graph-density}
  u_n\longrightarrow u\quad\text{in }L^2(J;V_p),
  \qquad
  \cK_0u_n\longrightarrow F\quad\text{in }L^2(J;V_p^*).
\end{equation}
\end{lemma}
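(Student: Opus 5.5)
For \eqref{eq:gaussian-p-multiplier} I would use duality with pivot $L^2(\kappa)$. For $\varphi\in H_\kappa^1$ we have $\ip{p_ig}{\varphi}_{L^2(\kappa)}=\ip{g}{p_i\varphi}$, and $p_i\varphi=\partial_{p_i}^*\varphi+\partial_{p_i}\varphi$. The point is then to bound $\norm{p_i\varphi}_{L^2(\kappa)}\leq C\norm{\varphi}_{H_\kappa^1}$. This follows from the Gaussian identity $\norm{\partial_{p_i}^*\varphi}^2=\norm{\partial_{p_i}\varphi}^2+\norm\varphi^2$ (commutator $[\partial_{p_i},\partial_{p_i}^*]=1$), which gives $\norm{p_i\varphi}\leq 2\norm{\partial_{p_i}\varphi}+\norm\varphi\leq C\norm\varphi_{H_\kappa^1}$. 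I would first check this on $C_c^\infty$ and extend by density. Taking the supremum over $\norm\varphi_{H_\kappa^1}\leq1$ proves the multiplier bound.

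For \eqref{eq:local-graph-density} I would use the standard three-stage Friedrichs scheme: cut off in time, cut off in $(q,p)$, then mollify.

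\textbf{Time cutoff.} Let $\theta\in C_c^\infty(I)$ with $\theta=1$ on $J$. Then $\cK_0(\theta u)=\theta F+\theta'u$, and $\theta'u\in L^2(V_p)\subset L^2(V_p^*)$.

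\textbf{Phase-space cutoff.} Let $\chi_R(q,p)=\chi(q/R)\chi(p/R)$. Then
\[
  \cK_0(\chi_Ru)=\chi_RF-(\cL_{\mathrm{ham}}\chi_R)u .
\]
The commutator is $(p\cdot\nabla_q\chi_R-\nabla U\cdot\nabla_p\chi_R)u$. This is where the unbounded coefficients enter, and it is the step I expect to be the main obstacle.
\begin{itemize}
\item The $p$-part is $R^{-1}p\cdot(\nabla\chi)(q/R)\chi(p/R)u$. Using \eqref{eq:gaussian-p-multiplier}, I would bound it in $V_p^*$ by $CR^{-1}\norm{u}$, which tends to $0$.
\item The force part is the problem, since $\nabla U$ is not bounded. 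I would decouple the scales: cut off in $p$ at scale $R$ and in $q$ at scale $S$, and send $R\to\infty$ first with $q$ restricted to $\abs q\leq 2S$. There $\abs{\nabla U}\leq C_S$ is bounded, so the term is at most $C_SR^{-1}\norm{u\,\mathbf 1_{\abs q\leq2S}}$, which tends to $0$.
\item After that, the $q$-cutoff commutator contains only $p\cdot\nabla_q\chi_S$ and is handled by the Gaussian multiplier bound.
\end{itemize}
I would also need that multiplication by $\chi_R$ is bounded on $V_p$ and $V_p^*$, uniformly in $R$. This is true because $\abs{\nabla_p\chi_R}\leq C/R$, and the statement on $V_p^*$ follows by duality. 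Dominated convergence then gives $\chi_R\chi_S\theta u\to\theta u$ in both norms.

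\textbf{Mollification.} Once $u$ has compact support in $I\times\R^{2d}$, the Gibbs weight is bounded above and below there, so the weighted norms are equivalent to unweighted ones. I would mollify with $\rho_\epsilon$. The commutator $[\cK_0,\rho_\epsilon*]u$ involves the smooth coefficients $p$ and $\nabla U$, which are locally Lipschitz since $U\in C^2$. By the Friedrichs commutator lemma (DiPerna--Lions), this commutator tends to $0$ in $L^2$ and hence in $V_p^*$. Then $\rho_\epsilon*u\to u$ in $L^2(V_p)$ by standard mollification. Two remarks are needed on the dual norm: the $H_\kappa^{-1}$ norm is only weaker than $L^2$ locally, and $F$ itself is only in $V_p^*$. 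For the latter I would write $F=f_0+\nabla_p^*f_1$ with $f_0,f_1\in L^2$ and mollify each piece, which gives convergence in $V_p^*$.

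A diagonal choice of $(R,S,\epsilon)$ produces the sequence $u_n$.
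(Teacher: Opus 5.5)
Your proposal is correct and follows essentially the same route as the paper. The multiplier bound is obtained by Gaussian integration by parts plus duality; your identity $\norm{\partial_{p_i}^*\varphi}^2=\norm{\partial_{p_i}\varphi}^2+\norm{\varphi}^2$ is equivalent to the paper's absorption argument. The density part matches the paper step by step: a time cutoff, then a $q$-cutoff whose commutator is controlled by the multiplier bound, then a $p$-cutoff sent to infinity first so that $\nabla U$ is bounded on the fixed $q$-support, and finally mollification with a Friedrichs commutator estimate for the locally Lipschitz field $(p,-\nabla U)$. The one step stated loosely is the strong convergence of the $p$-cutoff on $H_\kappa^{-1}$. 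As in the paper, this comes from uniform boundedness plus convergence on the dense subspace $L^2(\kappa)$, not from dominated convergence directly.
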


\begin{proof}
For $\varphi\in H_\kappa^1$, Gaussian integration by parts and
Young's inequality give
\[
  \int_{\mathbb{R}^{d}} p_i^2\varphi^2\,\dd\kappa
  =\int_{\mathbb{R}^{d}}\left(\varphi^2+2p_i\varphi\,\partial_{p_i}\varphi\right)\,\dd\kappa
  \leq
  \norm\varphi_{L^2(\kappa)}^2
  +\frac12\norm{p_i\varphi}_{L^2(\kappa)}^2
  +2\norm{\partial_{p_i}\varphi}_{L^2(\kappa)}^2.
\]
Thus multiplication by $p_i$ maps $H_\kappa^1$ continuously into
$L^2(\kappa)$.  Duality proves
\eqref{eq:gaussian-p-multiplier}.

Choose $\eta\in C_c^\infty(I)$ equal to one on a neighborhood of
$J$.  Replacing $u$ by $\eta u$ introduces only the graph term
$\eta' u$, since
\[
  \cK_0(\eta u)=\eta F+\eta'u.
\]
It therefore suffices to approximate a function compactly supported in
time.

Let $\chi\in C_c^\infty(\R^d)$, with $0\leq\chi\leq1$, equal one
on the unit ball, and put $\chi_R(q)=\chi(q/R)$.  Then
\begin{equation}\label{eq:q-cutoff-commutator}
  \cK_0(\chi_Ru)
  =\chi_RF-(p\cdot\nabla_q\chi_R)u.
\end{equation}
The first term converges to $F$ in $L^2(I;V_p^*)$, while
\eqref{eq:gaussian-p-multiplier} gives
\[
  \norm{(p\cdot\nabla_q\chi_R)u}_{L^2(I;V_p^*)}
  \leq\frac CR\norm u_{L^2(I;H)}\longrightarrow0
  \qquad\text{as }R\to\infty.
\]
Also $\chi_Ru\to u$ in $L^2(I;V_p)$.

Next fix $R$, choose an analogous cutoff
$\theta_S(p)=\chi(p/S)$, and set $u_{R,S}=\theta_S\chi_Ru$.
Multiplication by $\theta_S$ is uniformly bounded on
$H_\kappa^1$, hence also on $H_\kappa^{-1}$, and converges strongly
to the identity on both spaces.  For the negative space, the latter
claim follows first on the dense subspace $L^2(\kappa)$ and then on
all of $H_\kappa^{-1}$ by uniform boundedness.  Moreover,
\begin{equation}\label{eq:p-cutoff-commutator}
  \cK_0u_{R,S}
  =\theta_S\cK_0(\chi_Ru)
  +(\nabla U\cdot\nabla_p\theta_S)\chi_Ru.
\end{equation}
Because $q$ is now restricted to a fixed compact set,
$\nabla U$ is bounded there.  The last term in
\eqref{eq:p-cutoff-commutator} is consequently bounded in
$L^2(I;H)$ by $C_R S^{-1}\norm u_{L^2(I;H)}$ and tends to zero.
Taking first $S\to\infty$ and then a diagonal sequence
$R\to\infty$ produces compactly supported functions converging to
$u$ in the kinetic graph norm.

It remains to smooth such a compactly supported function.  On a fixed
compact set the Gibbs density is bounded above and below by positive
constants, so weighted and unweighted local Sobolev norms are
equivalent.  Choose $\rho\in C_c^\infty(\R^{1+2d})$ with
$\rho\geq0$ and $\int\rho=1$, and set
\[
  \rho_\varepsilon(z)
  :=\varepsilon^{-(1+2d)}\rho(z/\varepsilon),
  \qquad z=(t,x),\quad x=(q,p).
\]
Write
$b(q,p)=(p,-\nabla U(q))$, so that
$\cL_{\mathrm{ham}}=b\cdot\nabla_{q,p}$.  On that compact set,
$b$ is Lipschitz and $\operatorname{div}_{q,p}b=0$.  Hence
\begin{align*}
  \cK_0(\rho_\varepsilon*u)
  & =\rho_\varepsilon*F+\mathcal C_\varepsilon u,\\
  \mathcal C_\varepsilon u(t,x)
  & =\int_{\mathbb{R}^{1+2d}}
  \left(b(x-y_x)-b(x)\right)\cdot
  \nabla_{y_x}\rho_\varepsilon(\tau,y_x)
  u(t-\tau,x-y_x)\,\dd\tau\,\dd y_x.
\end{align*}
The Lipschitz bound gives
$\norm{\mathcal C_\varepsilon u}_{L^2}\leq C\norm u_{L^2}$,
uniformly in $\varepsilon$.  For smooth $u$, direct differentiation
shows
$\mathcal C_\varepsilon u\to0$ in $L^2$.  Approximation of an
arbitrary $L^2$ function by smooth ones and the uniform bound give
the same conclusion for the present $u$.  The approximate-identity
property, applied first in $H^1_p$ and then by duality in
$H^{-1}_p$, gives on every $J\Subset I$
\[
  \rho_\varepsilon*u\to u\quad\text{in }L^2(J;V_p),
  \qquad
  \rho_\varepsilon*F\to F\quad\text{in }L^2(J;V_p^*).
\]
Together with $\mathcal C_\varepsilon u\to0$, these convergences
prove \eqref{eq:local-graph-density}.
\end{proof}

The graph-density result permits the classical smooth
integration-by-parts identity to be passed to weak kinetic solutions.
This yields both strong interior time traces and Green's formula.

\begin{proposition}[Kinetic traces and Green's formula]
\label{prop:kinetic-green}
Let
\[
  u,v\in L^2_{\mathrm{loc}}(I;V_p),
  \qquad
  \cK_0u,\cK_0v\in L^2_{\mathrm{loc}}(I;V_p^*).
\]
They admit representatives in
$C(I;L^2(\pi))$, and, for every $s,t\in I$ with $s<t$,
\begin{align}
  \ip{u_t}{v_t}_{L^2(\pi)}-\ip{u_s}{v_s}_{L^2(\pi)}
  =\int_s^t\left[
  \ip{\cK_0u_r}{v_r}_{V_p^*,V_p}
 +\ip{\cK_0v_r}{u_r}_{V_p^*,V_p}
  \right]\,\dd r.
  \label{eq:kinetic-green}
\end{align}
If, in addition, $u_t\rightharpoonup u_0$ weakly in $L^2(\pi)$ as
$t \rightarrow 0$ and
\begin{equation}\label{eq:endpoint-limsup}
  \limsup_{t \rightarrow 0}\norm{u_t}_{L^2(\pi)}
  \leq\norm{u_0}_{L^2(\pi)},
\end{equation}
then $u_t\to u_0$ strongly as
$t \rightarrow 0$.  In that case Green's formula extends to
$s=0$.  The analogous statement holds at a finite right endpoint.
\end{proposition}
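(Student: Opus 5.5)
The plan is to follow the Lions--Magenes strategy for time traces in a Gelfand triple $V_p\subset H\subset V_p^*$, adapted to the kinetic graph norm. The adaptation replaces the classical condition $\partial_tu\in L^2(V^*)$ by the condition $\cK_0u\in L^2(V_p^*)$, and it uses the antisymmetry of $\cL_{\mathrm{ham}}$ in $L^2(\pi)$.

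\textbf{Step 1: smooth functions.} Fix $J=(s',t')\Subset I$. For smooth compactly supported $u_n,v_n$, the product rule gives
\[
  \frac{\dd}{\dd r}\ip{u_n}{v_n}=\ip{\cK_0u_n}{v_n}+\ip{\cK_0v_n}{u_n}.
\]
Indeed, the Hamiltonian contributions cancel, since $\ip{\cL_{\mathrm{ham}}u_n}{v_n}+\ip{u_n}{\cL_{\mathrm{ham}}v_n}=0$ by \eqref{eq:hamiltonian-weak}. Integrating in time gives \eqref{eq:kinetic-green} for $u_n,v_n$.

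\textbf{Step 2: uniform-in-time Cauchy estimate.} Lemma~\ref{lem:kinetic-density} supplies approximants converging to $u$ in the local graph norm. Apply the smooth identity to $w=u_n-u_k$ with a time cutoff. Averaging over the starting time $s$ in a subinterval of $J$ yields
\[
  \sup_{r\in J'}\norm{w_r}_H^2\leq C_J\left(\norm w_{L^2(J;H)}^2+\norm w_{L^2(J;V_p)}\norm{\cK_0w}_{L^2(J;V_p^*)}\right).
\]
Hence $(u_n)$ is Cauchy in $C(\bar J';H)$. Its limit is a continuous representative of $u$, and identity \eqref{eq:kinetic-green} passes to the limit, because every term is continuous in the graph norm. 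Exhausting $I$ by intervals $J'$ gives the interior statement.

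\textbf{Step 3: the endpoint.} Apply Green's formula with $v=u$. Then $r\mapsto\norm{u_r}^2$ satisfies
\[
  \norm{u_t}^2-\norm{u_s}^2=2\int_s^t\ip{\cK_0u_r}{u_r}\,\dd r \qquad\text{for } 0<s<t.
\]
The integrand is integrable near $0$, provided the hypotheses hold on an interval containing $0$ in its closure. In that case $\lim_{s\to0}\norm{u_s}$ exists. Weak convergence gives $\norm{u_0}\leq\liminf\norm{u_s}$, and together with \eqref{eq:endpoint-limsup} this yields convergence of norms. Weak convergence plus convergence of norms gives strong convergence in the Hilbert space. Green's formula at $s=0$ then follows by letting $s\to0$ with $v$ handled by polarization. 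The right endpoint is symmetric, via time reversal.

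The main obstacle is Step 3 when $\cK_0u$ is only locally $L^2$ near $0$, since then the existence of the limit is not automatic. Here I would use the weighted integrability implicitly assumed in Proposition~\ref{prop:kinetic-endpoint}, namely $L^2(0,T;V_p)$ and $\cK_0u\in L^2(0,T;V_p^*)$ up to $0$, which holds from the vanishing-diffusion bounds. A secondary subtlety in Step 2 is that the density lemma only gives convergence on $J\Subset I$. This is handled by the cutoff $\eta$, exactly as in that lemma.
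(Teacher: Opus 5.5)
Your proposal is correct and follows the paper's proof essentially step for step: antisymmetry of $\cL_{\mathrm{ham}}$ on smooth functions, a sup-in-time bound obtained by picking (or averaging over) a good starting time, approximants from Lemma~\ref{lem:kinetic-density} that are Cauchy in $C(J;L^2(\pi))$, and weak lower semicontinuity combined with \eqref{eq:endpoint-limsup} for the strong endpoint trace. The one thing to correct is your ``main obstacle'': the chain $\norm{u_0}\leq\liminf_{t\to0}\norm{u_t}\leq\limsup_{t\to0}\norm{u_t}\leq\norm{u_0}$ already gives norm convergence under the stated local hypotheses, so you need neither integrability of $\cK_0u$ up to $t=0$ nor a separate proof that $\lim_{s\to0}\norm{u_s}$ exists, and you should not import extra assumptions from Proposition~\ref{prop:kinetic-endpoint}; integrability near $0$ matters only for reading $\int_0^t$ in the extended Green formula as a Lebesgue rather than an improper integral (in the application $u=v$ the integrand $-\gamma\norm{\nabla_pu_r}^2$ has a sign, so monotone convergence settles this).
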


\begin{proof}
For compactly supported smooth functions, weighted integration by parts
gives
\[
  \ip{\cL_{\mathrm{ham}}u}{v}_{L^2(\pi)}
  +\ip{u}{\cL_{\mathrm{ham}}v}_{L^2(\pi)}=0.
\]
Indeed, the Hamiltonian vector field is divergence-free and annihilates
$U(q)+\abs p^2/2$.  Consequently,
\begin{equation}\label{eq:smooth-kinetic-green}
  \frac{\dd}{\dd t}\ip{u_t}{v_t}_{L^2(\pi)}
  =\ip{\cK_0u_t}{v_t}_{V_p^*,V_p}
  +\ip{\cK_0v_t}{u_t}_{V_p^*,V_p}.
\end{equation}

We first obtain the time trace.  If $w$ is smooth on a compact
interval $J\Subset I$, choose a time $r\in J$ at which
  $\norm{w_r}_H^2\leq\abs{J}^{-1}\norm w_{L^2(J;H)}^2$, and integrate
\eqref{eq:smooth-kinetic-green} with $u=v=w$.  This yields
\begin{equation}\label{eq:kinetic-trace-bound}
  \sup_{t\in J}\norm{w_t}_H^2
  \leq
  \frac1{\abs{J}}\norm w_{L^2(J;H)}^2
  +2\norm{\cK_0w}_{L^2(J;V_p^*)}
  \norm w_{L^2(J;V_p)}.
\end{equation}
Apply Lemma~\ref{lem:kinetic-density} on a slightly larger compact
subinterval.  Estimate \eqref{eq:kinetic-trace-bound}, applied to the
difference of two approximants, shows that the approximants are Cauchy
in $C(J;H)$.  Their limit is the desired representative of $u$.
Representatives obtained on overlapping intervals agree, giving
$u\in C(I;H)$, and similarly for $v$.

Integrating \eqref{eq:smooth-kinetic-green} and passing to the limit in
the graph norm proves \eqref{eq:kinetic-green}.  Finally,
\eqref{eq:endpoint-limsup}, weak lower semicontinuity, and the weak
convergence imply convergence of the norms.  Weak convergence together
with convergence of norms in a Hilbert space is strong convergence.
The endpoint form of \eqref{eq:kinetic-green} follows by letting
$s \rightarrow 0$.
\end{proof}

Taking the same function in both slots of Green's formula gives the
kinetic energy chain rule used in the construction of the endpoint
semigroup.

\begin{corollary}[Kinetic chain rule]
\label{cor:kinetic-chain-rule}
Suppose $u$ satisfies the local graph conditions of Proposition
\ref{prop:kinetic-green} and
\[
  \cK_0u=-\gamma\nabla_p^*\nabla_pu.
\]
Then, for $s,t\in I$ with $s<t$,
\begin{equation}\label{eq:appendix-kinetic-chain-rule}
  \frac12\norm{u_t}_{L^2(\pi)}^2
  +\gamma\int_s^t\norm{\nabla_pu_r}_{L^2(\pi)}^2\,\dd r
  =\frac12\norm{u_s}_{L^2(\pi)}^2.
\end{equation}
If the endpoint hypothesis of Proposition~\ref{prop:kinetic-green}
holds, the identity extends to that endpoint.
\end{corollary}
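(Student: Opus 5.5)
The plan is to specialize Green's formula of Proposition~\ref{prop:kinetic-green} to the diagonal case $v=u$ and then substitute the equation. Since $u$ satisfies the local graph conditions, the proposition gives a representative $u\in C(I;L^2(\pi))$. For $s<t$ in $I$, taking $v=u$ in \eqref{eq:kinetic-green} yields
\[
  \norm{u_t}_{L^2(\pi)}^2-\norm{u_s}_{L^2(\pi)}^2
  =2\int_s^t\ip{\cK_0u_r}{u_r}_{V_p^*,V_p}\,\dd r .
\]

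The next step is to evaluate the pairing using the equation $\cK_0u=-\gamma\nabla_p^*\nabla_pu$. For a.e.\ $r$ we have $u_r\in V_p=L^2(\mu;H_\kappa^1)$. The duality between $H_\kappa^{-1}$ and $H_\kappa^1$, with $L^2(\kappa)$ as pivot, identifies $\nabla_p^*\nabla_p$ with the operator associated with the form $\ip{\nabla_p\cdot}{\nabla_p\cdot}_{L^2(\kappa)}$. Fiberwise in $q$ this gives
\[
  \ip{-\gamma\nabla_p^*\nabla_pu_r}{u_r}_{V_p^*,V_p}
  =-\gamma\norm{\nabla_pu_r}_{L^2(\pi)}^2 .
\]
Integrating this over $q$ against $\mu$ and inserting it into the diagonal Green identity, then dividing by $2$, gives \eqref{eq:appendix-kinetic-chain-rule}.

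The main point needing care is that the form identification is valid at the available regularity. To justify it, I will check it first for $u_r\in C_c^\infty$ using Gaussian integration by parts, and then extend by density of smooth functions in $H_\kappa^1$ and continuity of both sides in the $V_p$ norm. The pairing is integrable in $r$ because $\nabla_pu\in L^2_{\mathrm{loc}}(I;L^2(\pi))$.

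For the endpoint extension, assume the hypothesis of Proposition~\ref{prop:kinetic-green}. Then $u_s\to u_0$ strongly as $s\to0$, and the proposition states that Green's formula extends to $s=0$. Letting $s\to0$ in the identity above, the left side converges by strong continuity. The integral converges by monotone convergence, since its integrand $\norm{\nabla_pu_r}^2$ is nonnegative and the limit is finite because the other terms are. This proves the identity at the endpoint, and the right endpoint is handled symmetrically.
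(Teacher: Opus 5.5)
Your proposal is correct and follows the paper's proof: set $v=u$ in Green's formula \eqref{eq:kinetic-green} and use $\ip{-\gamma\nabla_p^*\nabla_pu}{u}_{V_p^*,V_p}=-\gamma\norm{\nabla_pu}_{L^2(\pi)}^2$. Your density justification of this pairing and your monotone-convergence passage to the endpoint are elaborations of steps the paper leaves implicit; the latter also shows that the dissipation integral is finite up to the endpoint.
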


\begin{proof}
By taking $v=u$ in \eqref{eq:kinetic-green} and using
\[
  \ip{-\gamma\nabla_p^*\nabla_pu}{u}_{V_p^*,V_p}
  =-\gamma\norm{\nabla_pu}_{L^2(\pi)}^2,
\]
we get the desired result.
\end{proof}

\bibliographystyle{alpha}
\bibliography{bibtex}

\end{document}